\newtheorem{theorem}{Theorem}
\newtheorem{proposition}{Proposition}
\newtheorem{assumption}{Assumption}
\newtheorem{definition}{Definition}
\newtheorem{lemma}{Lemma}
\newtheorem{remark}{Remark}
\renewcommand\AB@affilsepx{\\\protect\Affilfont}
\crefname{theorem}{theorem}{theorems}
\Crefname{theorem}{Theorem}{Theorems}
\crefname{proposition}{proposition}{propositions}
\Crefname{proposition}{Proposition}{Propositions}
\crefname{lemma}{lemma}{lemmas}
\Crefname{lemma}{Lemma}{Lemmas}
\crefname{assumption}{assumption}{assumptions}
\Crefname{assumption}{Assumption}{Assumptions}
\crefname{corollary}{corollary}{corollaries}
\Crefname{corollary}{Corollary}{Corollaries}
\crefname{figure}{Fig.}{Figs.}
\Crefname{figure}{Figure}{Figures}
\crefname{appendix}{appendix}{appendices}
\Crefname{appendix}{Appendix}{Appendices}
\crefname{remark}{remark}{remarks}
\Crefname{remark}{Remark}{Remarks}
\crefname{definition}{definition}{definitions}
\Crefname{definition}{Definition}{Definitions}
\title{Terminally constrained flow-based generative models from an optimal control perspective}
\author[1,2]{Weiguo Gao}
\author[1]{Ming Li}
\author[3]{Qianxiao Li}
\affil[ ]{\hspace{-1mm}\texttt{wggao@fudan.edu.cn}, \texttt{mingli23@m.fudan.edu.cn}, \texttt{qianxiao@nus.edu.sg}\vspace{2mm}}
\affil[1]{School of Mathematical Sciences, Fudan University, Shanghai 200433, China}
\affil[2]{Shanghai Key Laboratory of Contemporary Applied Mathematics, Shanghai 200433, China}
\affil[3]{Department of Mathematics and Institute for Functional Intelligent Materials, National University of Singapore, Singapore}
\date{}
\begin{document}

\begin{abstract}
We address the problem of sampling from terminally constrained distributions with pre-trained flow-based generative models through an optimal control formulation.
Theoretically, we characterize the value function by a Hamilton--Jacobi--Bellman equation and derive the optimal feedback control as the minimizer of the associated Hamiltonian. We show that as the control penalty increases, the controlled process recovers the reference distribution, while as the penalty vanishes, the terminal law converges to a generalized Wasserstein projection onto the constraint manifold. 
Algorithmically, we introduce \emph{Terminal Optimal Control with Flow-based models} (TOCFlow), a geometry-aware sampling-time guidance method for pre-trained flows. Solving the control problem in a terminal co-moving frame that tracks reference trajectories yields a closed-form scalar damping factor along the Riemannian gradient, capturing second-order curvature effects without matrix inversions. TOCFlow therefore matches the geometric consistency of Gauss--Newton updates at the computational cost of standard gradient guidance. 
We evaluate TOCFlow on three high-dimensional scientific tasks spanning equality, inequality, and global statistical constraints, namely Darcy flow, constrained trajectory planning, and turbulence snapshot generation with Kolmogorov spectral scaling. Across all settings, TOCFlow improves constraint satisfaction over Euclidean guidance and projection baselines while preserving the reference model's generative quality.
\end{abstract}

\keywords{Optimal control \(\cdot\) Generative modeling \(\cdot\) Terminal constraints \(\cdot\) Hamilton--Jacobi--Bellman equation \(\cdot\) Constrained sampling}

\subjclass{34H05 \(\cdot\) 37N35 \(\cdot\) 68T07}

\maketitle

\section{Introduction}
\label{sec:introduction}

Progress in critical scientific areas such as partial differential equation (PDE) solving, trajectory planning, and turbulent flow simulation depends on the ability to model complex systems subject to physical and geometric \emph{constraints}. While traditional computational methods provide robust guarantees for constraint enforcement, they often face significant limitations in high-dimensional settings. Numerical solvers become prohibitively expensive at the resolution required for accurate turbulence simulations~\citep{moin1998direct,pope2001turbulent,rebollo2014mathematical}, while optimization and sampling algorithms scale poorly with the dimensionality inherent in trajectory planning~\citep{karaman2011sampling,lavalle2006planning} and inverse problems~\citep{cotter2013mcmc,martin2012stochastic}. Approximation schemes and reduced-order models improve tractability but frequently sacrifice the accuracy or generality needed for reliable scientific inference~\citep{benner2015survey,quarteroni2014reduced}. These challenges highlight the need for new paradigms capable of combining statistical fidelity with computational efficiency without compromising the strict satisfaction of constraints.

Machine learning has responded to these computational bottlenecks through deep generative models, which approximate high-dimensional distributions directly from observational data without relying on expensive physical simulations. Within this domain, flow-based models such as the framework of Flow Matching~\citep{albergo2023stochastic,lipman2023flow,liu2023flow} have emerged as a particularly effective paradigm for scientific tasks~\citep{baldan2025flow,bastek2025physics,cheng2025gradient,corso2023diffdock,huang2024diffusionpde,joung2025electron,jumper2021highly,landry2025generating,nam2024flow,utkarsh2025physics,utz2025climate,yuan2025pirf,zhang2025physics,zhao2025generative}. These methods construct a continuous velocity field that transforms a simple source distribution \(\rho_0\) (typically a standard Gaussian) into the target distribution \(\rho_1\) by integrating the ordinary differential equation (ODE)
\begin{equation}
\dot{\bm{X}}_t = \bm{b}(\bm{X}_t, t), \quad \bm X_0\sim\rho_0
\label{eq:reference_flow}
\end{equation}
over the interval \([0,1]\). To learn the velocity field \(\bm{b}\), Flow Matching minimizes the regression objective
\begin{equation}
\mathbb{E}_{t, \bm{X}_0, \bm{X}_1}[ \| \bm{b}(\bm{X}_t, t) - (\bm{X}_1 - \bm{X}_0) \|_2^2 ],
\end{equation}
where the expectation is taken over uniformly distributed time \(t \sim \mathcal{U}([0,1])\) and sample pairs \((\bm{X}_0, \bm{X}_1)\sim\rho_0\times\rho_1\) drawn from the independent coupling of the source and target distributions. Here, \(\bm{X}_1 - \bm{X}_0\) is the instantaneous velocity (the derivative with respect to the time variable \(t\)) of the linear interpolant~\citep{albergo2023stochastic} \(\bm{X}_t = (1-t)\bm{X}_0 + t\bm{X}_1\) connecting noise \(\bm X_0\sim \rho_0\) to data \(\bm X_1\sim \rho_1\).

However, the purely data-driven nature of these models introduces a fundamental limitation. Relying solely on finite samples drawn from the source and target distributions, Flow Matching captures statistical patterns but remains agnostic to the strict constraints governing the system. This lack of structural awareness becomes problematic in two distinct ways. First, there is a fundamental issue of generalization. Even if all training samples perfectly satisfy the constraints, the learned velocity field is only an approximation obtained from finite data and a finite-capacity neural parametrization. As discussed by Gao and Li~\citep{gao2024how}, estimation and optimization errors can accumulate along the trajectory, causing samples to drift away from the constraint-satisfying set. Second, there is an issue of adaptability. Constraints often vary across different instances of the same task, such as a robotic planner encountering new obstacles or a PDE solver facing shifting boundary conditions. In such dynamic environments, a standard model trained to approximate a single fixed distribution cannot satisfy these new constraints without computationally expensive retraining.

Before describing our solution to these limitations, we first formalize the problem setting. To resolve the adaptability challenge without incurring the prohibitive cost of retraining for every new constraint, we adopt a ``sampling-time'' strategy. We assume access to a reference velocity field \(\bm{b}^\star(\bm x, t)\) that transports \(\rho_0\) to \(\rho_1\). It captures the general data distribution but may not satisfy specific structural constraints. In many scientific applications, the intermediate trajectory (i.e., \(t \in [0,1)\)) serves primarily as a computational transport mechanism. Strict validity is required only at the conclusion of the process (i.e., \(t=1\)). Consequently, we model the structural requirements as a \emph{terminal constraint}. We define the constraint manifold as \(\mathcal{M} = \{ \bm{x} \in \mathbb{R}^d \colon\bm{h}(\bm{x}) = \bm{0} \}\), where \(\bm h\colon\mathbb{R}^d\to\mathbb{R}^r\) is a twice continuously differentiable function. We assume that \(\bm 0\) is a regular value of \(\bm h\), meaning that the Jacobian matrix \(J_{\bm h}(\bm x)\) has full rank \(r\) for all \(\bm h(\bm x)=\bm 0\) (i.e., \(\bm x\in\mathcal{M}\)). Our objective is therefore to synthesize a controlled velocity field \(\bm{u}(\bm x, t) \coloneqq \bm{b}^\star(\bm x, t) + \bm{a}_t\) that perturbs the reference flow~\eqref{eq:reference_flow} to enforce the terminal constraint.

We determine this controlled velocity field by embedding the generative process within an optimal control framework. Let us define the optimal control problem as the minimization of the expected total cost, where the expectation is taken with respect to the randomness of \(\bm X_0\sim\rho_0\):
\begin{equation}
\mathcal{J}\coloneqq\inf_{\bm{a}} \mathbb{E} \Bigl[ H(\bm{X}_1)+\int_0^1 \frac{\lambda_t}{2} \| \bm{a}_t \|_2^2 \mathrm{d}t \Bigr].
\end{equation}
Here, \(H(\bm{x}) \coloneqq \frac{1}{2}\|\bm{h}(\bm{x})\|^2\) is the terminal cost, \(\lambda_t\) is a time-dependent weight, and \(\bm X_t\) evolves according to the controlled dynamics
\begin{equation}
\dot{\bm X}_t = \bm b(\bm X_t, t) + \bm a_t, \quad \bm X_0\sim \rho_0, \quad t\in[0, 1].
\label{eq:intro_controlled_dynamics}
\end{equation}
We optimize over admissible control processes \(\bm a=(\bm a_t)_{t\in[0,1]}\in\mathcal A\), meaning measurable controls that are essentially bounded in time, \(\operatorname*{ess\,sup}_{t\in[0,1]}\|\bm a_t\|_2<+\infty\), and for which the controlled dynamics~\eqref{eq:intro_controlled_dynamics} admit a unique weak solution for \(\rho_0\)-almost every initial condition. See~\Cref{def:admissible_controls} for a formal definition. To solve this problem, we introduce the value function \(\mathcal{J}(\bm{x}, t)\), which represents the minimal expected cost required to reach the terminal time \(t=1\) conditioned on the system being at state \(\bm{x}\) at time \(t\). Formally, it is defined as the infimum of the cost-to-go:
\begin{equation}
\mathcal{J}(\bm{x}, t) \coloneqq \inf_{\bm{a}} \mathbb{E} \Bigl[ H(\bm{X}_1)+\int_t^1 \frac{\lambda_s}{2} \| \bm{a}_s \|_2^2 \mathrm{d}s \Big| \bm{X}_t = \bm{x} \Bigr].
\end{equation}
We show in~\Cref{prop:hjb_feedback} that \(\mathcal{J}(\bm x, t)\) satisfies the Hamilton--Jacobi--Bellman (HJB) equation
\begin{equation}
-\partial_t \mathcal{J}(\bm{x},t) - \nabla_{\bm{x}} \mathcal{J}(\bm{x},t) \cdot \bm{b}(\bm{x}, t) + \frac{1}{2 \lambda_t} \| \nabla_{\bm{x}} \mathcal{J}(\bm{x},t) \|_2^2 = 0,\quad \mathcal{J}(\bm x, 1)=H(\bm x),
\end{equation}
and that the optimal feedback law is
\begin{equation}
\bm a^\star(\bm x, t) = -\lambda_t^{-1} \nabla_{\bm{x}} \mathcal{J}(\bm{x},t).
\end{equation}
In particular, along the optimal trajectory \((\bm X_t^\star)_{t\in[0,1]}\), the optimal control satisfies \(\bm a_t^\star = \bm a^\star(\bm X_t^\star,t)\). To simplify notation, we drop the superscript \(\star\) from \(\bm X_t\) and from any quantities evaluated along the optimal trajectory when no confusion arises.

Guided by this formulation we pursue three primary goals in this work.

\textbf{Our first goal is to characterize the optimal control and its associated transport costs.} We establish an equivalence between the Eulerian control energy and a weighted kinetic energy defined in the initial co-moving frame of the reference trajectories. This weighting is governed by the time-dependent metric \(\bm{C}_t(\bm{z}) \coloneqq D\Phi_{0 \to t}^{\bm b^\star}(\bm{z})^\top D\Phi_{0 \to t}^{\bm b^\star}(\bm{z})\). Here, \(\Phi_{0 \to t}^{\bm b^\star}\) denotes the flow map generated by the reference velocity field \(\bm{b}^\star\) transporting a state from initial time \(0\) to current time \(t\) and \(D\Phi_{0 \to t}^{\bm b^\star}\) denotes its spatial Jacobian matrix. By exploiting the spectral properties of this deformation tensor, which tracks the cumulative expansion and contraction of the flow, we derive rigorous upper and lower bounds on the control energy in terms of the squared \(2\)-Wasserstein distance \(W_2^2(\bar\rho_1, \nu)\) between the uncontrolled reference terminal distribution \(\bar\rho_1\) and the target constrained distribution \(\nu\). Furthermore, we characterize the asymptotic behavior of the system in two distinct regimes. In the limit of infinite \(\lambda_t\), the controlled dynamics recover the reference flow, while in the limit of vanishing \(\lambda_t\), the terminal distribution converges to a generalized Wasserstein projection of \(\bar\rho_1\) onto the constraint manifold \(\mathcal{M}\).

\textbf{Our second goal is to translate these theoretical insights into practical and efficient algorithms for high-dimensional inference.} By analyzing the dynamics in a terminal co-moving frame, we show that the optimal control is governed by a geometric Hamilton--Jacobi equation defined by the time-dependent metric \(\bm{G}_t(\bm{y}) \coloneqq D\Phi^{\bm{b}^\star}_{t \to 1}(\bm x) D\Phi^{\bm{b}^\star }_{t \to 1}(\bm x)^\top|_{\bm x = \Phi^{\bm{b}^\star}_{1 \to t}(\bm y)}\). Intuitively, this metric quantifies the sensitivity of the terminal state to perturbations applied at time \(t\). To make this tractable, we employ a splitting strategy that ``fixes'' the geometry, approximating the metric \(\bm G_{t}(\bm y)\) as a constant matrix \(\bm G\) over the integration step. This reduction allows us to invoke the Hopf--Lax formula, boiling the problem down to the proximal optimization problem
\begin{equation}
\bm y^+(\bm y) \coloneqq \underset{z\in\mathbb{R}^{d}}{\arg\min} \Bigl\{ H(\bm z) + \frac{1}{2s} \|\bm z-\bm y\|_{{\bm G}^{-1}}^{2} \Bigr\}.
\end{equation}
We analyze generic schemes to solve this proximal problem. We first explore a first-order Euclidean approximation (Gradient Descent scheme, GD), which assumes a flat geometry (\(\bm G \approx \bm I\)). While efficient, it fails to account for the spatial stretching induced by the flow. Conversely, a second-order geometric approximation (Gauss--Newton scheme, GN) respects the Riemannian metric but incurs a high computational cost due to the need for matrix inversions in high-dimensional constraint spaces. To resolve this trade-off, we introduce \emph{Terminal Optimal Control with Flow-based models} (TOCFlow). By restricting the optimization to the one-dimensional subspace aligned with the Riemannian gradient, TOCFlow reduces the complex vector optimization to a scalar line search with a closed-form solution. This approach achieves geometric consistency with a computational cost comparable to the GD scheme, independent of the number of constraints.

\textbf{Our third goal is to verify the efficacy of TOCFlow through exact analysis and representative scientific applications.} We use a linear-quadratic Gaussian model which admits analytical solutions to rigorously validate our framework. It quantitatively confirms the asymptotic behavior of the system under infinite and vanishing penalties, and it enables an exact comparison of approximation schemes, demonstrating that TOCFlow and GN schemes coincide and approximate the ground truth optimal control more accurately than the GD scheme. We then test TOCFlow on three distinct high-dimensional generation tasks, each representing a unique class of constraints: (\romannumeral1) High-dimensional PDE constraints (Darcy flow). We generate consistent permeability and pressure fields governing flow in porous media. This task imposes high-dimensional equality constraints coupled through an elliptic differential operator. We demonstrate that while standard projection methods fail to converge due to the singular nature of the operator, TOCFlow effectively steers the generation to the solution manifold, reducing PDE error by an order of magnitude compared to baseline methods. (\romannumeral2) Geometric inequality constraints (trajectory planning). We synthesize robotic trajectories confined to safety corridors within a cluttered domain. This involves enforcing geometric inequality constraints. We show that TOCFlow eliminates the collision violations observed with standard gradient guidance and avoids the non-physical discontinuities (kinks) introduced by projection-based methods. (\romannumeral3) Global spectral constraints (turbulence snapshot). We generate \(2\)-dimensional turbulent flow fields that must adhere to Kolmogorov's power-law scaling~\citep{kolmogorov1941local}. This imposes a global constraint on the energy spectrum. We demonstrate that TOCFlow acts coherently across spatial scales to enforce the correct statistical physics, more accurately reconstructing the inertial range scaling.

\subsection{Related Work}
\label{subsec:related_work}

Approximating high-dimensional distributions is a central task in scientific computing, for which deep generative models have emerged as a powerful paradigm, enabling breakthroughs in domains ranging from PDE solving~\citep{baldan2025flow,bastek2025physics,cheng2025gradient,huang2024diffusionpde,utkarsh2025physics,yuan2025pirf,zhang2025physics} and molecule or material design~\citep{corso2023diffdock,joung2025electron,jumper2021highly,nam2024flow}, to emulating climate and weather fields~\citep{landry2025generating,utz2025climate,zhao2025generative}. Among these, flow-based generative models represent a particularly effective approach, synthesizing data by learning a continuous probability path that transforms a simple source distribution (typically a Gaussian) into the complex target distribution. While early efforts such as Monge--Amp\`ere flows~\citep{zhang2018monge} used optimal transport potentials to define this mapping, the modern landscape is dominated by two frameworks: diffusion models and Flow Matching models. Diffusion models~\citep{ho2020denoising,song2021score} generate data by reversing a stochastic noise process via stochastic differential equations (SDEs), a powerful but computationally expensive approach due to the complex, curved trajectories involved~\citep{gao2025toward}. Flow Matching models~\citep{lipman2023flow,liu2023flow} instead learn ODEs that transport noise to data along (conditionally) straight paths, significantly reducing sampling costs. Beyond efficiency, this framework offers a theoretical unification through the theory of stochastic interpolants~\citep{albergo2023stochastic}, which generalizes both stochastic diffusion and deterministic flow matching under a single mathematical formulation. Given these advantages, in this work, we focus on Flow Matching as a representative for flow-based models.

From a theoretical standpoint, the Schr\"odinger Bridge (SB) problem~\citep{leonard2014survey} constitutes the most rigorous framework for constrained generation, minimizing transport cost while strictly satisfying boundary marginals. However, solving the resulting system of coupled forward--backward PDEs is computationally prohibitive in high dimensions. Recent works such as OC-Flow~\citep{wang2025training} and FOCUS~\citep{bill2025optimal} therefore recast SB-style constrained transport as tractable optimal control problems and solve for the control via iterative numerical schemes. In a separate but related direction, optimal control has also been used to formalize deep learning and fine-tuning dynamics~\citep{e2019mean,li2018maximum,zhang2025weight}. In contrast to these learning-centric formulations, our focus is on controlling the generative transport itself. While existing SB-inspired control methods have demonstrated the practical flexibility to incorporate rewards or constraints, they largely emphasize algorithmic procedures rather than the transport geometry induced by terminal penalties. In contrast, our work formulates terminal constraint enforcement for flow-based generative models rigorously as a terminal optimal control problem. Using this framework, we derive analytical bounds on the control energy in terms of the Wasserstein distance and prove asymptotic convergence of the controlled dynamics in both the infinite-penalty and vanishing-penalty regimes.

From an algorithmic perspective, our proposed TOCFlow addresses the challenge of constraint enforcement within deep generative models, a landscape where existing strategies typically fall into three categories. The first is training-time regularization, where constraints like physical laws are embedded directly into the objective function~\citep{baldan2025flow,bastek2025physics}. While effective, this requires expensive retraining whenever constraints (e.g., boundary conditions) change. The second is fine-tuning, which adapts pre-trained weights to new observations~\citep{yuan2025pirf,zhang2025physics}, but incurs high computational costs. These limitations motivate the third category: dynamic guidance during sampling, where TOCFlow resides. Existing methods such as DiffusionPDE~\citep{huang2024diffusionpde} implement guidance through heuristic gradient nudges on the constraint violation. While this improves adherence, it often results in limited constraint satisfaction. Conversely, approaches like Physics-Constrained Flow Matching (PCFM)~\citep{utkarsh2025physics} and ECI~\citep{cheng2025gradient} enforce constraints by repeatedly projecting intermediate states onto the constraint manifold. These projections can introduce non-smooth trajectories and are difficult to apply when the manifold is defined only implicitly. In contrast, TOCFlow rigorously frames guidance as a terminal optimal control problem, deriving a geometry-aware correction that steers the flow within the ambient space. In doing so, TOCFlow ensures high-accuracy constraint satisfaction without compromising the smoothness of the generative process. 

A complementary viewpoint is conditional generation, where constraint enforcement corresponds to sampling from a distribution conditioned to lie on the constraint manifold \(\mathcal M\), analogous to conditioning in diffusion models via learned control modules such as ControlNet~\citep{zhang2023adding} and T2I-Adapter~\citep{mou2024t2i} or via sampling-time methods for inverse problems~\citep{chung2023diffusion}. In this conditional-generation view, TOCFlow is an sampling-time conditioning method and differs from heuristic guidance by deriving the correction from a rigorous terminal optimal control formulation.

\subsection{Organization}
\label{subsec:organization}

The remainder of this paper is organized as follows. \Cref{sec:background_and_preliminaries} introduces the necessary mathematical preliminaries, including the Wasserstein space and the Flow Matching framework. \Cref{sec:an_optimal_control_framework} establishes the theoretical foundation, formulating the terminal optimal control problem and characterizing the optimal feedback via the Hamilton--Jacobi--Bellman equation and its geometric properties in the initial co-moving frame. \Cref{sec:geometric_proximal_approximation_of_the_optimal_feedback} translates this theory into practice, deriving the TOCFlow algorithm through a proximal operator approximation and analyzing its complexity relative to Gradient Descent and Gauss--Newton baselines. \Cref{sec:representative_applications} validates the framework on three high-dimensional scientific tasks including Darcy flow, trajectory planning, and turbulence snapshot generation, demonstrating its superiority over baseline methods.
\section{Background and preliminaries}
\label{sec:background_and_preliminaries}

In this section, we establish the mathematical preliminaries required for our optimal control framework. We begin by defining the notational conventions and the necessary concepts from optimal transport and measure theory in~\Cref{subsec:notations,subsec:the_wasserstein_space_and_transport_of_measure}. We then review the formulation of flow-based generative models, with a specific focus on Flow Matching as the representative technique for learning the reference dynamics in~\Cref{subsec:flow_based_generative_models_and_continuous_transport,subsec:flow_matching_as_a_representative_technique}. Finally, we formalize the geometric structure of the terminal constraint manifold and the regularity assumptions governing the target distribution in~\Cref{subsec:the_terminal_constraint_manifold}.

\subsection{Notations}
\label{subsec:notations}

Throughout this paper, \(d\) denotes the state space dimension, while \(r\) denotes the dimension of the constraints. We denote vectors in \(\mathbb{R}^d\) by bold lowercase letters (e.g., \(\bm x\), \(\bm y\)), with the specific exception of the state process \(\bm{X}_t\), which is capitalized to indicate its stochastic nature. Matrices and tensor fields are denoted by bold uppercase letters (e.g., \(\bm{C}\), \(\bm G\), \(\bm I\)), where \(\bm I\) denotes the identity matrix. The Euclidean norm is denoted by \(\|\cdot\|_2\), and the standard Euclidean inner product is denoted by \(\langle \cdot, \cdot \rangle\). For a weighted squared norm induced by a positive definite matrix \(\bm{M}\succ\bm 0\), we use the notation:
\begin{equation}
\|\bm{x}\|_{\bm{M}}^2 \coloneqq \langle \bm{x}, \bm{M}\bm{x} \rangle.
\end{equation}
We assume sufficient smoothness for all functions unless stated otherwise; \(C^k(\mathbb{R}^d; \mathbb{R}^r)\) denotes the space of \(k\)-times continuously differentiable functions mapping \(\mathbb{R}^d\) to \(\mathbb{R}^r\). The gradient of a scalar function \(f\) with respect to the spatial variable \(\bm{x}\) is denoted by \(\nabla_{\bm{x}} f\), and the Jacobian matrix of a vector-valued function \(\bm{h}\) is denoted by \(D\bm{h}\) or \(\bm{J}_{\bm h}\). Time derivatives are denoted by \(\partial_t\), \(\mathrm{d}/\mathrm{d}t\) or by a dot over the variable, i.e., \(\dot{\bm{x}}\).

In the context of measure theory and optimal transport, \(\mathcal{P}_2(\mathbb{R}^d)\) denotes the space of Borel probability measures with finite second moments. We use \(\rho\) to denote probability measures that admit a density with respect to the Lebesgue measure, often identifying the measure with its density. Specifically, we denote the unconstrained reference terminal distribution by \(\bar{\rho}_1\) and the target constrained distribution by \(\nu\). We denote by \(L^2(\rho)\) the Hilbert space of vector fields square-integrable with respect to \(\rho\), equipped with the norm \(\|\bm{v}\|_{L^2(\rho)}^2 \coloneqq \int_{\mathbb{R}^d} \|\bm{v}(\bm{x})\|_2^2 \rho(\mathrm{d}\bm{x})\). The push-forward of a measure \(\rho\) by a measurable map \(T\) is denoted by \(T_{\#} \rho\). The \(2\)-Wasserstein distance between two probability measures \(\mu\) and \(\nu\) is denoted by \(W_2(\mu, \nu)\).

We describe the dynamics of flow-based models using the flow map notation \(\Phi_{s \to t}^{\bm{b}}\), which represents the diffeomorphism generated by integrating the time-dependent velocity field \(\bm{b}\) from time \(s\) to time \(t\). We distinguish between the reference velocity field, denoted by \(\bm{b}^\star\), and the controlled velocity field, denoted by \(\bm{u} = \bm{b}^\star + \bm{a}\), where \(\bm{a}\) represents the control. We frequently use the subscript notation to indicate time dependence for stochastic processes and flows (e.g., \(\bm{X}_t\)), while retaining the parenthetical notation for functions evaluated at specific times (e.g., \(\bm{b}(\bm{x}, t)\)). The value function of the optimal control problem is denoted by \(\mathcal{J}(\bm x, t)\). We denote the vector-valued constraint function by \(\bm h\in C^2(\mathbb{R}^d;\mathbb{R}^r)\), and the associated terminal penalty by \(H\).

\subsection{The Wasserstein space and transport of measure}
\label{subsec:the_wasserstein_space_and_transport_of_measure}

Let \(\mathcal{P}(\mathbb{R}^d)\) denote the space of Borel probability measures on \(\mathbb{R}^d\). We focus on the subset \(\mathcal{P}_2(\mathbb{R}^d)\) consisting of measures with finite second moments, defined by the condition \(\int_{\mathbb{R}^d} \|\bm{x}\|_2^2 \mu(\mathrm d\bm{x}) < +\infty\). We equip \(\mathcal{P}_2(\mathbb{R}^d)\) with the \(2\)-Wasserstein distance \(W_2\), which metricizes weak convergence supplemented by the convergence of second moments. For any two measures \(\mu, \nu \in \mathcal{P}_2(\mathbb{R}^d)\), \(W_2\) admits a static formulation via the Monge--Kantorovich problem~\citep{santambrogio2015optimal}:
\begin{equation}
W_{2}^{2}(\mu,\nu) \coloneqq \inf_{\pi \in \Pi(\mu, \nu)} \int_{\mathbb{R}^d \times \mathbb{R}^d} \|\bm{x} - \bm{y}\|_2^2 \pi(\mathrm d\bm{x}, \mathrm d\bm{y}),
\label{eq:static_w2}
\end{equation}
where \(\Pi(\mu, \nu)\) denotes the set of couplings (joint probability measures) on \(\mathbb{R}^d \times \mathbb{R}^d\) with marginals \(\mu\) and \(\nu\).

The \(2\)-Wasserstein distance has an equivalent dynamic characterization established by Benamou and Brenier~\citep{santambrogio2015optimal}. In this formulation, the squared distance corresponds to the minimal kinetic energy required to transport \(\mu\) to \(\nu\):
\begin{equation}
W_{2}^{2}(\mu,\nu) = \inf_{(\rho, \bm{w})} \biggl\{ \int_{0}^{1} \int_{\mathbb{R}^{d}} \|\bm{w}(\bm{x}, t)\|_{2}^{2} \rho_{t}(\mathrm d\bm{x}) \mathrm dt \biggr\},
\label{eq:benamou_brenier}
\end{equation}
where the infimum is taken over all pairs consisting of a weakly continuous curve of probability measures \((\rho_t)_{t\in[0,1]}\) connecting \(\mu\) to \(\nu\), and a family of velocity fields \(\bm{w}(\cdot, t) \in L^2(\rho_t)\) satisfying the continuity equation in the weak sense:
\begin{equation}
\partial_{t}\rho_{t}(\bm x) + \nabla_{\bm x} \cdot (\rho_{t}(\bm x) \bm w(\bm x, t)) = 0, \quad t\in[0, 1].
\label{eq:continuity_eq}
\end{equation}

\subsection{Flow-based generative models and continuous transport}
\label{subsec:flow_based_generative_models_and_continuous_transport}

Generative modeling addresses the fundamental problem of learning to sample from a complex, high-dimensional distribution \(\rho_1\) (the data distribution) based on a finite collection of observations. Formally, it seeks to construct a transport map \(T\colon \mathbb{R}^d \to \mathbb{R}^d\) that pushes a simple source distribution \(\rho_0\) (e.g., a standard Gaussian, which is easy to sample from) forward to \(\rho_1\), expressed as \(T_\# \rho_0 = \rho_1\). Among the many classes of generative models, flow-based approaches construct such maps by integrating an ODE over the unit time interval \([0,1]\). Let \(\bm{b}\colon \mathbb{R}^d \times [0,1] \to \mathbb{R}^d\) be a time-dependent velocity field. We assume that \(\bm{b}(\cdot, t)\) is globally Lipschitz continuous in space, uniformly in \(t \in [0,1]\), and continuous in time. Under these regularity conditions, the initial value problem
\begin{equation}
\frac{\mathrm d}{\mathrm dt}\Phi_{0 \to t}^{\bm{b}}(\bm{x}) = \bm{b}(\Phi_{0 \to t}^{\bm{b}}(\bm{x}), t), \quad \Phi_{0 \to 0}^{\bm{b}}(\bm{x}) = \bm{x},\quad t\in[0, 1]
\label{eq:flow_ode}
\end{equation}
admits a unique global solution \(\Phi_{0 \to t}^{\bm{b}}\) which acts as a \(C^1\) bi-Lipschitz diffeomorphism on \(\mathbb{R}^d\). This flow induces a continuous curve of probability measures \(\rho_t \coloneqq (\Phi_{0 \to t}^{\bm{b}})_\# \rho_0\) connecting \(\rho_0\) to \(\rho_1\). The evolution of this density path is governed by the continuity equation
\begin{equation}
\partial_t \rho_t (\bm x) + \nabla_{\bm x} \cdot (\rho_t(\bm x) \bm{b}(\bm x, t)) = 0, \quad t\in [0, 1]
\end{equation}
in the weak sense. Thus, flow-based generative modeling reduces to identifying a velocity field \(\bm{b}\) whose associated flow satisfies the terminal boundary condition \((\Phi_{0 \to 1}^{\bm{b}})_\# \rho_0 = \rho_1\).

\subsection{Flow matching as a representative technique}
\label{subsec:flow_matching_as_a_representative_technique}

Finding a velocity field \(\bm{b}\) that generates a desired transport is a challenging inverse problem. Among the family of flow-based generative models, the pioneering Neural ODEs~\citep{chen2018neural} attempt to solve this by integrating the ODE during training. However, this approach often suffers from high computational costs and numerical instability. To circumvent its computational bottleneck, diffusion models~\citep{ho2020denoising, song2019generative, song2021score} adopt an alternative paradigm. They rely on a fixed forward stochastic differential equation to gradually perturb data, thereby avoiding the need to simulate the flow during optimization. However, their sampling process typically requires numerically solving the reverse SDE, which is inherently stochastic and computationally demanding. Recently, Flow Matching~\citep{lipman2023flow, liu2023flow} has bridged this gap by retaining the simulation-free training advantage of diffusion models while yielding a deterministic ODE flow for sampling. This work focuses on Flow Matching models as a representative of flow-based models for its mathematical conciseness and computational efficiency.

The core idea of Flow Matching is to define the target probability path \((\rho_t)_{t \in [0,1]}\) implicitly via a stochastic interpolant~\citep{albergo2023stochastic,liu2023flow}, avoiding the need for explicit density evaluations. Let \(\bm{X}_t\) be a stochastic process defined by the linear interpolation
\begin{equation}
\bm{X}_t = (1-t)\bm{X}_0 + t\bm{X}_1,
\label{eq:interpolant}
\end{equation}
where \(\bm{X}_0 \sim \rho_0\) and \(\bm{X}_1 \sim \rho_1\) are random variables from the source and data distributions, respectively. It is important to note that the term ``stochastic'' refers solely to the randomness of the boundary conditions \(\bm{X}_0\) and \(\bm{X}_1\). For any fixed realization of these endpoints, the interpolation path itself is deterministic. Differentiating~\eqref{eq:interpolant} yields a target instantaneous velocity \(\dot{\bm{X}}_t = \bm{X}_1 - \bm{X}_0\). To construct a deterministic velocity field generating the marginal densities of this process, Flow Matching minimizes the expected squared deviation between a parameterized field \(\bm{b}_\theta\) (where \(\theta\) denotes the parameters) and the stochastic velocity \(\dot{\bm{X}}_t\). The objective functional is defined as:
\begin{equation}
\mathcal{L}_{\mathrm{FM}}(\bm{b}_\theta) \coloneqq \int_{0}^{1} \mathbb{E}_{\bm{X}_0\sim\rho_0, \bm{X}_1\sim\rho_1} \bigl[ \|\bm{b}_\theta(\bm{X}_t, t) - \dot{\bm{X}}_t\|_2^{2} \bigr] \mathrm{d}t.
\label{eq:fm_loss}
\end{equation}
The theoretical justification for this objective relies on the fact that its global minimizer \(\bm b^\star\) is precisely the velocity field driving the continuity equation for \(\rho_t\). We summarize this in \Cref{prop:fm_consistency}, whose proof can be found in~\citep[Theorem 6]{albergo2023stochastic}.

\begin{proposition}[Consistency of the conditional mean field~\citep{albergo2023stochastic}]
\label{prop:fm_consistency}
Let \(\bm{b}^\star \coloneqq \arg\min_{\bm{b}_\theta} \mathcal{L}_{\mathrm{FM}}(\bm{b})\) be the minimizer of the functional~\eqref{eq:fm_loss} over the space of square-integrable velocity fields (the existence is guaranteed by~\eqref{eq:conditional_velocity}). Then:
\begin{enumerate}
\item The minimizer \(\bm{b}^\star\) is unique almost everywhere and coincides with the conditional mean velocity field, i.e., 
\begin{equation}
\bm{b}^\star(\bm{x}, t) = \mathbb{E}[\dot{\bm{X}}_t \mid \bm{X}_t = \bm{x}].
\label{eq:conditional_velocity}
\end{equation}
\item The marginal density of \(\bm X_t\), which we denote by \(\rho_t\), is the unique weak solution to the continuity equation driven by \(\bm{b}^\star\):
\begin{equation}
\partial_t \rho_t(\bm x) + \nabla_{\bm x} \cdot (\rho_t(\bm x) \bm{b}^\star(\bm x, t)) = 0,
\end{equation}
satisfying the boundary conditions \(\rho_0\) at \(t=0\) and \(\rho_1\) at \(t=1\).
\end{enumerate}
\end{proposition}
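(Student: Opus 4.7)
The plan is to handle the two claims separately, since both follow from the classical $L^{2}$-characterization of the conditional expectation. For part (1), I would fix $t \in [0,1]$ and view the inner expectation in~\eqref{eq:fm_loss} as an $L^{2}$ norm of $\bm{b}(\cdot,t) - \dot{\bm{X}}_{t}$ under the joint law of $(\bm{X}_{0},\bm{X}_{1})$, with $\bm{X}_{t} = (1-t)\bm{X}_{0} + t\bm{X}_{1}$. Applying the orthogonal decomposition
\begin{equation}
\dot{\bm{X}}_{t} = \mathbb{E}[\dot{\bm{X}}_{t}\mid\bm{X}_{t}] + \bigl(\dot{\bm{X}}_{t} - \mathbb{E}[\dot{\bm{X}}_{t}\mid\bm{X}_{t}]\bigr),
\end{equation}
and invoking the tower property, I would show that the cross term involving $\bm{b}(\bm{X}_{t},t) - \mathbb{E}[\dot{\bm{X}}_{t}\mid\bm{X}_{t}]$ and the residual $\dot{\bm{X}}_{t} - \mathbb{E}[\dot{\bm{X}}_{t}\mid\bm{X}_{t}]$ vanishes, because the latter is orthogonal to every $\sigma(\bm{X}_{t})$-measurable vector field. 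This reduces $\mathcal{L}_{\mathrm{FM}}(\bm{b})$ to the sum of $\int_{0}^{1}\mathbb{E}[\|\bm{b}(\bm{X}_{t},t) - \mathbb{E}[\dot{\bm{X}}_{t}\mid\bm{X}_{t}]\|_{2}^{2}]\mathrm{d}t$ and a term independent of $\bm{b}$. Pointwise minimization then forces $\bm{b}^{\star}(\bm{x},t) = \mathbb{E}[\dot{\bm{X}}_{t}\mid\bm{X}_{t}=\bm{x}]$ for $\rho_{t}$-a.e.\ $\bm{x}$ and Lebesgue-a.e.\ $t$, establishing both the identification and the a.e.\ uniqueness claim.

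For part (2), I would adopt the weak formulation of the continuity equation. Let $\varphi \in C_{c}^{\infty}(\mathbb{R}^{d})$. Since the interpolant $\bm{X}_{t}$ is affine in $t$ with $\dot{\bm{X}}_{t} = \bm{X}_{1} - \bm{X}_{0} \in L^{2}$, the chain rule applies and Fubini gives
\begin{equation}
\frac{\mathrm{d}}{\mathrm{d}t}\int_{\mathbb{R}^{d}}\varphi(\bm{x})\rho_{t}(\mathrm{d}\bm{x}) = \mathbb{E}\bigl[\nabla\varphi(\bm{X}_{t}) \cdot \dot{\bm{X}}_{t}\bigr].
\end{equation}
Conditioning on $\bm{X}_{t}$ and applying the tower property replaces $\dot{\bm{X}}_{t}$ by $\bm{b}^{\star}(\bm{X}_{t},t)$, so the right-hand side becomes $\int \nabla\varphi(\bm{x})\cdot\bm{b}^{\star}(\bm{x},t)\rho_{t}(\mathrm{d}\bm{x})$, which is precisely the weak form of $\partial_{t}\rho_{t} + \nabla_{\bm{x}}\cdot(\rho_{t}\bm{b}^{\star}) = 0$. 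The boundary conditions $\rho_{t}|_{t=0}=\rho_{0}$ and $\rho_{t}|_{t=1}=\rho_{1}$ are immediate from $\bm{X}_{0}\sim\rho_{0}$ and $\bm{X}_{1}\sim\rho_{1}$, and uniqueness of the weak solution follows from the standing Lipschitz-in-space, continuous-in-time regularity assumed on $\bm{b}^{\star}$ (DiPerna--Lions theory, or simply Cauchy--Lipschitz for the flow).

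The main obstacle I expect is the measure-theoretic bookkeeping in part (1): one must produce a jointly measurable version of $(\bm{x},t)\mapsto\mathbb{E}[\dot{\bm{X}}_{t}\mid\bm{X}_{t}=\bm{x}]$ that lies in the admissible class of square-integrable velocity fields, so that swapping the pointwise-in-$t$ minimization with the time integral is legitimate. This is handled by disintegration of the joint law of $(\bm{X}_{0},\bm{X}_{1},t)$ against $\rho_{t}\otimes\mathrm{d}t$ to obtain a regular conditional distribution, from which a measurable selection of the conditional mean exists and is square-integrable since $\dot{\bm{X}}_{t} = \bm{X}_{1}-\bm{X}_{0}\in L^{2}$ by the finite-second-moment hypothesis on $\rho_{0}$ and $\rho_{1}$. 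Once this measurable representative is fixed, Fubini justifies reducing the variational problem to pointwise $L^{2}$-projection, and the two parts knit together cleanly.
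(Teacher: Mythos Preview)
Your proof sketch is correct and follows the standard argument for this result. Note, however, that the paper does not provide its own proof of this proposition: it explicitly defers to \citep[Theorem~6]{albergo2023stochastic}, stating only that ``the proof can be found in'' that reference. Your approach---the $L^2$ orthogonal decomposition of $\dot{\bm X}_t$ for part~(1) and the weak-formulation computation via the tower property for part~(2)---is precisely the standard route taken in the stochastic-interpolants literature, so there is nothing to compare against here beyond confirming that your argument is sound and matches what the cited reference does.
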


In this work, we proceed under the premise that the unconstrained generative modeling task has been solved. We treat the learned vector field as the \emph{reference drift} of our controlled dynamics. This assumption allows us to decouple the statistical approximation error inherent in learning \(\rho_1\) from the control-theoretic analysis of satisfying the terminal constraints. We formalize this availability in~\Cref{ass:reference_dynamics}.

\begin{assumption}[Availability of the reference dynamics]
\label{ass:reference_dynamics}
We assume the existence of a reference velocity field \(\bm{b}^\star \in C([0,1] \times \mathbb{R}^d; \mathbb{R}^d)\) which is globally Lipschitz continuous in space, uniformly in time. We postulate that the flow generated by \(\bm{b}^\star\) creates a probability path that satisfies the unconstrained boundary conditions, transporting the source distribution \(\rho_0\) to the target distribution \(\bar{\rho}_1\). In the context of Flow Matching, \(\bm{b}^\star\) identifies with the parameterized field \(\bm{b}_\theta\) obtained by minimizing the functional \(\mathcal{L}_{\mathrm{FM}}\).
\end{assumption}

\begin{remark}[Validity of the reference dynamics]
The realizability of~\Cref{ass:reference_dynamics} depends on three aspects: (\romannumeral1) approximation: classical universal approximation theorems~\citep{hornik1991approximation} guarantee that, given sufficient width and depth, there exists a parameter configuration \(\theta\) such that \(\|\bm{b}_\theta - \bm{b}^\star\|_\infty < \epsilon\) on compact domains, preserving the Lipschitz regularity required for well-posed transport; (\romannumeral2) statistical learning: recent theoretical analyses of Flow Matching~\citep{fukumizu2025flow} establish non-asymptotic bounds linking the sample complexity to the generation quality, specifically showing that with a sufficient number of training samples, the statistical estimation error \(\|\bm{b}_\theta - \bm{b}^\star\|_2\) remains low; and (\romannumeral3) optimization: while the optimization landscape of deep neural networks is non-convex, the Flow Matching objective~\eqref{eq:fm_loss} simplifies to a least-squares regression, where empirical evidence suggests that stochastic gradient descent reliably converges to solutions approximating the conditional mean field \(\bm{b}^\star\)~\citep{gao2024how,lipman2023flow,liu2023flow}.
\end{remark}

\subsection{The terminal constraint manifold}
\label{subsec:the_terminal_constraint_manifold}

We focus on generative tasks where the desired target distribution, denoted by \(\nu\), is supported on a constraint set \(\mathcal{M} \subset \mathbb{R}^d\), defined implicitly as the zero level-set of a vector-valued function \(\bm{h}\colon \mathbb{R}^d \to \mathbb{R}^r\), i.e.,
\begin{equation}
\mathcal{M} \coloneqq \{ \bm{x} \in \mathbb{R}^d \colon \bm{h}(\bm{x}) = \bm{0} \}.
\label{eq:manifold_def}
\end{equation}
We distinguish \(\nu\) from the \emph{unconstrained} reference distribution \(\bar{\rho}_1\), which acts as the training target for the pre-trained flow-based model. To ensure that \(\mathcal{M}\) possesses sufficient regularity for optimization, we impose the following~\Cref{ass:regularity}.

\begin{assumption}[Regularity of the constraint]
\label{ass:regularity}
We assume that the function \(\bm{h}\in C^2(\mathbb{R}^d;\mathbb{R}^r)\). Furthermore, we assume that \(\bm 0\) is a regular value of \(\bm h\).
\end{assumption}

In our optimal control formulation, we enforce this constraint via a terminal penalty. We define the terminal cost function \(H\colon \mathbb{R}^d \to \mathbb{R}_{\ge 0}\) as the squared Euclidean norm of the residual, i.e.,
\begin{equation}
\label{eq:terminal_cost}
H(\bm{x}) \coloneqq \frac{1}{2} \| \bm{h}(\bm{x}) \|_2^2.
\end{equation}

Such a terminal penalty encompasses many practical constraint types. For equality constraints, \(\bm h(\bm x)\) can encode exact boundary conditions, for example fixing a subset of coordinates \(\bm x_{\mathcal I}=\bm y\) via \(\bm h(\bm x)=\bm x_{\mathcal I}-\bm y\), or enforcing a linear relation \(\bm A\bm x-\bm b=\bm 0\). Variants of the same squared-residual form also accommodate inequality constraints by using a smooth violation measure that vanishes on the feasible set.
\section{An optimal control framework}
\label{sec:an_optimal_control_framework}

Our work addresses the challenge of enforcing constraints on the terminal distribution in flow-based generative modeling. As discussed in~\Cref{subsec:flow_matching_as_a_representative_technique}, we assume the availability of a reference velocity field \(\bm{b}^\star\) (derived, e.g., from Flow Matching, per~\Cref{ass:reference_dynamics}) that transports a source distribution \(\rho_0\) to an unconstrained target distribution \(\bar{\rho}_1\)\footnote{Throughout, an overbar denotes the uncontrolled dynamics induced by \(\bm b^\star\). In particular, \(\bar\rho_1\) is the corresponding terminal distribution. We reserve \((\rho_t)_{t\in[0,1]}\) for the time-marginal distributions of the controlled dynamics~\eqref{eq:controlled_dynamics}, with \(\rho_1\) its terminal distribution.}. The key issue is that the desired terminal distribution must obey additional structural conditions, specifically, it must be supported on a smooth manifold \(\mathcal{M}\) (defined in~\Cref{ass:regularity}). The reference velocity field \(\bm{b}^\star\) does not generally satisfy this terminal requirement. The objective of this section is to formulate the guidance task as an optimal control problem and to characterize the optimal control that balances the preservation of the reference dynamics against the satisfaction of the terminal constraints.

Formally, we study controlled trajectories \(\bm{X}_t\) satisfying
\begin{equation}
\dot{\bm{X}}_t = \bm u(\bm X_t, t)\coloneqq \bm{b}^\star(\bm{X}_t,t) + \bm{a}_t, \quad \bm{X}_0 \sim \rho_0, \quad t \in [0,1],
\label{eq:controlled_dynamics}
\end{equation}
where \(\bm{a}_t\) is the control field chosen from the class of admissible controls defined in~\Cref{def:admissible_controls}.

\begin{definition}[Admissible controls]
\label{def:admissible_controls}
A control process is a Borel measurable map \(\bm a\colon[0,1]\to\mathbb R^d\), written \(\bm a=(\bm a_t)_{t\in[0,1]}\). We call \(\bm a\) admissible, denoted \(\bm a\in\mathcal A\), if
\begin{equation}
\|\bm a\|_{L^\infty([0,1];\mathbb R^d)} \coloneqq \operatorname*{ess\,sup}_{t\in[0,1]}\|\bm a_t\|_2 < +\infty,
\end{equation}
and the controlled dynamics~\eqref{eq:controlled_dynamics} admit a unique weak solution for \(\rho_0\)-almost every initial condition.
\end{definition}

We penalize the deviation from the reference velocity field and the violation of the constraint at time \(t=1\). The manifold constraint is penalized through the terminal cost function \(H(\bm{x}) = \frac12 \| \bm{h}(\bm{x}) \|_2^2\) that satisfies~\Cref{ass:regularity}. The trade-off between the control effort and the terminal constraint is formalized through the following optimization objective in~\Cref{def:value_function}.

\begin{definition}[Objective functional and control energy]
\label{def:value_function}
For any initial time \(t \in [0,1]\) and state \(\bm{x} \in \mathbb{R}^d\), let \(\bm{X}_s^{\bm{x},t}\) denote the solution to the controlled dynamics~\eqref{eq:controlled_dynamics} for \(s \in [t, 1]\) with initial condition \(\bm{X}_t^{\bm{x},t} = \bm{x}\).
For an admissible control \(\bm{a} \in \mathcal{A}\) (see~\Cref{def:admissible_controls} for the definition of admissibility), we define the control energy on the interval \([t, 1]\) as
\begin{equation}
\mathcal{E}(\bm{a}; \bm{x}, t) \coloneqq \int_t^1 \dfrac{\lambda_s}{2} \| \bm{a}_s \|_2^2 \mathrm{d}s.
\label{eq:control_energy_def}
\end{equation}
The cost functional is the sum of the terminal cost and the control energy, i.e.,
\begin{equation}
\mathcal{C}(\bm{a}; \bm{x}, t) \coloneqq H(\bm{X}_1^{\bm{x},t}) + \mathcal{E}(\bm{a}; \bm{x}, t).
\end{equation}
The value function \(\mathcal{J}\colon \mathbb{R}^d \times [0,1] \to \mathbb{R}\) is defined as the infimum of the cost functional over all admissible controls, i.e.,
\begin{equation}
\mathcal{J}(\bm{x}, t) \coloneqq \inf_{\bm{a} \in \mathcal{A}} \mathcal{C}(\bm{a}; \bm{x}, t).
\end{equation}
The global optimization problem corresponds to minimizing the expected value at time zero, i.e., 
\begin{equation}
\mathcal{J}\coloneqq\inf_{\bm{a}\in\mathcal{A}} \mathbb{E}_{\bm{x} \sim \rho_0}[\mathcal{C}(\bm{a}; \bm{x}, 0)].
\end{equation}
\end{definition}

Intuitively, the weight schedule \(\lambda_t\) governs the trade-off between enforcing the terminal constraint and penalizing deviations from the reference dynamics. Our theoretical analysis reveals that the optimal control takes the form of a gradient field. This insight guarantees that the problem is well-posed and provides the explicit mathematical formulas required to design the algorithms in the next section. We summarize this analysis in six main results: (\romannumeral1) the HJB characterization of the optimal feedback (\Cref{prop:hjb_feedback}); (\romannumeral2) a geometric energy equivalence in the co-moving frame (\Cref{prop:comoving_energy}); (\romannumeral3) Wasserstein bounds on the control energy (\Cref{prop:w2_sandwich}); (\romannumeral4--\romannumeral5) asymptotic convergence in the infinite and vanishing penalty limits (\Cref{thm:large_lambda,thm:small_lambda}); and (\romannumeral6) an exact analytical verification in the Gaussian setting (\Cref{prop:gaussian_exact}).

We begin by characterizing the solution. The following~\Cref{prop:hjb_feedback} establishes that the value function \(\mathcal{J}(\bm x, t)\) given in~\Cref{def:value_function} governs the optimal dynamics through a Hamilton--Jacobi--Bellman (HJB) equation, allowing us to express the optimal control \(\bm a^\star_t\) as a feedback law determined explicitly by the gradient of the value function. The proof is provided in~\Cref{subsec:hamilton_jacobi_bellman_equation_and_optimal_control}.

\begin{proposition}
[Hamilton--Jacobi--Bellman equation and optimal control]
\label{prop:hjb_feedback}
Assume that \(\lambda_t>0\) for all \(t\in[0,1]\). The value function \(\mathcal{J}\) is characterized by the nonlinear Hamilton--Jacobi--Bellman (HJB) equation
\begin{equation}
\label{eq:hjb}
- \partial_t \mathcal{J}(\bm{x},t) - \nabla_{\bm{x}} \mathcal{J}(\bm{x},t) \cdot \bm{b}^\star(\bm{x},t) + \frac{1}{2 \lambda_t} \| \nabla_{\bm{x}} \mathcal{J}(\bm{x},t) \|_2^2 = 0,
\end{equation}
subject to the boundary condition \(\mathcal{J}(\bm{x},1) = H(\bm{x})\).
The optimal controlled velocity field \(\bm{u}^\star(\bm x, t)\) is determined by the gradient of the value function as
\begin{equation}
\bm{u}^\star(\bm{x},t) = \bm{b}^\star(\bm{x},t) - \lambda_t^{-1} \nabla_{\bm{x}} \mathcal{J}(\bm{x},t) .
\end{equation}
The optimal control
\begin{equation}
\bm{a}^\star_t=-\lambda_t^{-1}\nabla_{\bm x}\mathcal{J}(\bm x, t)
\label{eq:optimal_control_law}
\end{equation}
is therefore proportional to the negative gradient of the value function.
\end{proposition}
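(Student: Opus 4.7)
The plan is to characterize $\mathcal J$ through the dynamic programming principle (DPP), extract the HJB by Taylor-expanding and minimizing the Hamiltonian pointwise in $\bm a$, and then close the loop with a verification argument showing that the candidate feedback law actually attains the infimum.

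First, I would establish the DPP
\begin{equation}
\mathcal J(\bm x, t) = \inf_{\bm a \in \mathcal A} \left\{ \int_t^{t+h} \tfrac{\lambda_s}{2}\|\bm a_s\|_2^2 \mathrm ds + \mathcal J(\bm X_{t+h}^{\bm x, t}, t+h) \right\}
\end{equation}
for small $h > 0$ by concatenating admissible controls at the intermediate time $t+h$, which is legitimate thanks to the uniqueness statement in~\Cref{def:admissible_controls}. Assuming provisional $C^1$ regularity of $\mathcal J$, I would Taylor-expand the second term using $\dot{\bm X}_t = \bm b^\star(\bm X_t, t) + \bm a_t$, divide by $h$, and pass to the limit $h \downarrow 0$, obtaining the infinitesimal DPP
\begin{equation}
0 = \inf_{\bm a \in \mathbb R^d} \left\{ \partial_t \mathcal J(\bm x,t) + \nabla_{\bm x}\mathcal J(\bm x,t) \cdot [\bm b^\star(\bm x,t) + \bm a] + \tfrac{\lambda_t}{2}\|\bm a\|_2^2 \right\}.
\end{equation}
The $\bm a$-dependent part is a strictly convex quadratic because $\lambda_t > 0$, with unique minimizer $\bm a^\star = -\lambda_t^{-1}\nabla_{\bm x}\mathcal J(\bm x, t)$ and minimum value $-\tfrac{1}{2\lambda_t}\|\nabla_{\bm x}\mathcal J(\bm x,t)\|_2^2$. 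Substituting this back recovers the HJB equation exactly as stated, and the terminal condition $\mathcal J(\bm x, 1) = H(\bm x)$ is immediate from~\Cref{def:value_function} since the control-energy integral vanishes at $t=1$ and $\bm X_1^{\bm x, 1} = \bm x$; the optimal velocity field is then $\bm u^\star = \bm b^\star + \bm a^\star$.

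To convert this formal derivation into a proof that the feedback law genuinely attains the infimum, I would run a standard verification argument. For any admissible $\bm a$, I apply the chain rule to $s \mapsto \mathcal J(\bm X_s^{\bm x, t}, s)$ on $[t,1]$, use the HJB identity together with the completion of squares $\tfrac{\lambda_s}{2}\|\bm a_s\|_2^2 + \nabla_{\bm x}\mathcal J \cdot \bm a_s = \tfrac{\lambda_s}{2}\|\bm a_s + \lambda_s^{-1}\nabla_{\bm x}\mathcal J\|_2^2 - \tfrac{1}{2\lambda_s}\|\nabla_{\bm x}\mathcal J\|_2^2$, and integrate from $t$ to $1$. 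This yields $\mathcal C(\bm a; \bm x, t) \ge \mathcal J(\bm x, t)$ with equality if and only if $\bm a_s = -\lambda_s^{-1}\nabla_{\bm x}\mathcal J(\bm X_s, s)$ almost everywhere, establishing both optimality and uniqueness of the feedback.

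The main obstacle is the regularity of $\mathcal J$. In general, value functions of Bolza problems with merely Lipschitz data are only viscosity solutions of the HJB, so Taylor expansion and the chain-rule-based verification are not a priori justified. I would address this either by working in the viscosity-solution framework via super- and sub-differentials together with the standard comparison principle for HJB equations with quadratic Hamiltonians, leveraging the smoothness of $\bm b^\star$ from~\Cref{ass:reference_dynamics} and of $H$ inherited from $\bm h \in C^2$ by~\Cref{ass:regularity}, or by invoking classical existence results on finite horizons, which furnish a unique locally Lipschitz viscosity solution that is differentiable Lebesgue-almost everywhere, sufficient to render the feedback law well-defined along optimal trajectories.
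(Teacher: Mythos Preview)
Your proposal is correct and follows essentially the same route as the paper: invoke the dynamic programming principle to obtain the HJB in infimum form, then complete the square in $\bm a$ to identify the minimizer $\bm a^\star=-\lambda_t^{-1}\nabla_{\bm x}\mathcal J$ and substitute back. The paper's proof is terser---it simply cites the infimum-form HJB from~\cite{bertsekas2012dynamic} and completes the square---whereas you additionally supply a verification argument and a discussion of viscosity-solution regularity, both of which the paper omits.
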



While the HJB equation in~\Cref{prop:hjb_feedback} describes the optimal control in the original Eulerian coordinates, the geometry of the deformation is most transparent when viewed from the perspective of the reference flow. By switching to a initial co-moving frame attached to the reference dynamics, we can subtract the background velocity field \(\bm{b}^\star\) and isolate the relative motion caused by the control. The following~\Cref{prop:comoving_energy} formalizes this change of coordinates, establishing an equivalence between the control energy and a weighted kinetic energy on the reference space. The proof is given in~\Cref{subsec:dynamics_in_the_initial_co_moving_frame_and_energy_equivalence}.

\begin{proposition}
[Dynamics in the initial co-moving frame and energy equivalence]
\label{prop:comoving_energy}
Let \(\rho_t\) denote the density of the controlled dynamics generated by the velocity field \(\bm{u} = \bm{b}^\star + \bm{a}_t^\star\). Let \(\tilde\rho_t \coloneqq (\Phi^{\bm{b}^\star}_{t \to 0})_{\#} \rho_t\) be this density pulled back to the initial time \(t=0\) by the reference flow. Then \(\tilde\rho_t\) evolves according to the driftless continuity equation
\begin{equation}
\partial_t \tilde\rho_t(\bm{z}) + \nabla_{\bm z} \cdot (\tilde\rho_t(\bm{z}) \bm{w}(\bm{z}, t)) = 0, \quad \tilde\rho_0 = \rho_0,
\end{equation}
where the relative velocity field \(\bm{w}(\bm z, t)\) is defined by
\begin{equation}
\bm{w}(\bm{z}, t) \coloneqq (D \Phi^{\bm{b}^\star}_{0 \to t}(\bm{z}))^{-1} (\bm{u}(\Phi^{\bm{b}^\star}_{0 \to t}(\bm{z}), t) - \bm{b}^\star(\Phi^{\bm{b}^\star}_{0 \to t}(\bm{z}), t)).
\end{equation}
Furthermore, the expected control energy is equivalent to the kinetic energy of \(\bm{w}(\bm z, t)\) measured under the time-dependent metric tensor \(\bm{C}_t(\bm{z}) \coloneqq D \Phi^{\bm{b}^\star}_{0 \to t}(\bm{z})^\top D \Phi^{\bm{b}^\star}_{0 \to t}(\bm{z})\), satisfying the identity
\begin{equation}
\int_0^1 \int_{\mathbb{R}^d} \frac{\lambda_t}{2} \| \bm{u}(\bm{x},t) - \bm{b}^\star(\bm{x},t) \|_2^2 \rho_t(\mathrm{d}\bm{x}) \mathrm{d}t = \int_0^1 \int_{\mathbb{R}^d} \frac{\lambda_t}{2} \langle \bm{w}(\bm{z}, t), \bm{C}_t(\bm{z}) \bm{w}(\bm{z}, t) \rangle \tilde\rho_t(\mathrm{d}\bm{z}) \mathrm{d}t.
\end{equation}
\end{proposition}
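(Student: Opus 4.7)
The plan is to derive both claims from a single change of variables: lift the controlled process $\bm X_t$ to the co-moving frame by setting $\bm Y_t \coloneqq \Phi^{\bm b^\star}_{t\to 0}(\bm X_t)$, whose time-marginal law is exactly $\tilde\rho_t$ by definition of the pushforward. The continuity equation and the energy identity then follow from elementary pointwise computations along trajectories combined with a standard change-of-variables in the integral.

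First I would record the basic kinematic identity for the inverse flow: differentiating $\Phi^{\bm b^\star}_{t\to 0}(\Phi^{\bm b^\star}_{0\to t}(\bm z)) = \bm z$ in $t$ (equivalently, noting that reference trajectories are constant in the co-moving frame) and applying the chain rule yields
\begin{equation}
\partial_t \Phi^{\bm b^\star}_{t\to 0}(\bm x) = -D\Phi^{\bm b^\star}_{t\to 0}(\bm x)\, \bm b^\star(\bm x, t).
\end{equation}
Combining this with $\dot{\bm X}_t = \bm b^\star(\bm X_t, t) + \bm a_t$ via the chain rule, the $\bm b^\star$ terms cancel exactly and leave $\dot{\bm Y}_t = D\Phi^{\bm b^\star}_{t\to 0}(\bm X_t)\, \bm a_t$. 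Substituting $\bm X_t = \Phi^{\bm b^\star}_{0\to t}(\bm Y_t)$ and using the inverse-Jacobian identity $D\Phi^{\bm b^\star}_{t\to 0}(\Phi^{\bm b^\star}_{0\to t}(\bm z)) = (D\Phi^{\bm b^\star}_{0\to t}(\bm z))^{-1}$ identifies the right-hand side with $\bm w(\bm Y_t, t)$. Thus $\bm Y_t$ satisfies the characteristic ODE of the stated driftless continuity equation, so its time-marginal law $\tilde\rho_t$ is a weak solution with $\tilde\rho_0 = \rho_0$; this is established in the standard way by testing against $C^\infty_c$ functions and applying Fubini.

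The energy identity then follows by applying the change of variables $\bm x = \Phi^{\bm b^\star}_{0\to t}(\bm z)$ in the Eulerian kinetic integrand at each fixed $t$. The pushforward definition of $\tilde\rho_t$ yields the integral identity $\int f(\bm x)\rho_t(d\bm x) = \int f(\Phi^{\bm b^\star}_{0\to t}(\bm z))\tilde\rho_t(d\bm z)$, while the definition of $\bm w$ gives $\bm u(\Phi^{\bm b^\star}_{0\to t}(\bm z), t) - \bm b^\star(\Phi^{\bm b^\star}_{0\to t}(\bm z), t) = D\Phi^{\bm b^\star}_{0\to t}(\bm z)\, \bm w(\bm z, t)$, whence
\begin{equation}
\|\bm u(\Phi^{\bm b^\star}_{0\to t}(\bm z), t) - \bm b^\star(\Phi^{\bm b^\star}_{0\to t}(\bm z), t)\|_2^2 = \langle \bm w(\bm z, t), \bm C_t(\bm z)\, \bm w(\bm z, t)\rangle
\end{equation}
with $\bm C_t$ the stated deformation tensor. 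Multiplying by $\lambda_t/2$, integrating in space and time, and invoking Fubini produces the claimed equivalence.

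The main obstacle is largely bookkeeping: keeping the two directions of the flow map, and the corresponding Jacobian inverses, straight throughout the chain-rule computations, and justifying the differentiation-under-the-integral and change-of-variables manipulations. All the underlying regularity needed---in particular, that $\Phi^{\bm b^\star}_{0\to t}$ is a $C^1$ diffeomorphism with everywhere invertible Jacobian, and that the relevant integrals are finite---follows from the global Lipschitz continuity of $\bm b^\star$ posited in~\Cref{ass:reference_dynamics} and the admissibility of $\bm a$ in the sense of~\Cref{def:admissible_controls}.
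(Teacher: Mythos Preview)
Your proposal is correct and the energy-identity half is essentially identical to the paper's argument. The continuity-equation half, however, is organized differently: you work in the Lagrangian picture, defining the lifted process $\bm Y_t = \Phi^{\bm b^\star}_{t\to 0}(\bm X_t)$ and computing its velocity directly via the kinematic identity for $\partial_t\Phi^{\bm b^\star}_{t\to 0}$, whereas the paper stays in the Eulerian weak formulation throughout---it pairs $\tilde\rho_t$ against a test function $\psi$, pushes the integral forward to the $\bm x$-frame, and uses that the composed test function $\zeta(\bm x,t)=\psi(\Phi^{\bm b^\star}_{t\to 0}(\bm x))$ satisfies the advection equation $\partial_t\zeta+\nabla_{\bm x}\zeta\cdot\bm b^\star=0$. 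These are dual presentations of the same cancellation (your kinematic identity and the paper's advection equation are the same statement viewed on trajectories versus test functions). Your route is arguably more transparent because the cancellation of $\bm b^\star$ happens in a single chain-rule line; the paper's route has the modest advantage of never leaving the weak form in which the continuity equation is stated, so the passage from ``$\bm Y_t$ solves an ODE'' to ``$\tilde\rho_t$ solves the continuity equation weakly'' is absorbed into the argument rather than invoked at the end.
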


The equivalence established in~\Cref{prop:comoving_energy} relates the control energy to a weighted kinetic energy defined by the pullback metric \(\bm{C}_t\). We use this to bound how far the controlled terminal distribution \(\rho_1\) can deviate from the baseline terminal distribution \(\bar\rho_1\), measured by the Euclidean \(2\)-Wasserstein distance \(W_2\), which is the natural metric associated with kinetic energy. To relate the \(\bm C_t\)-weighted kinetic energy to the usual Euclidean one, we assume \(\bm C_t\) is uniformly well-conditioned, meaning its eigenvalues are uniformly bounded above and below. This is ensured by~\Cref{ass:bi_lipschitz} on the reference flow.

\begin{assumption}[Uniform bounds on the flow Jacobian and induced metric]
\label{ass:bi_lipschitz}
We assume that there exist constants \(0 < c_- \le c_+ < +\infty\) such that for all \(t \in [0,1]\) and all \(\bm{v} \in \mathbb{R}^d\), the singular values of the Jacobian \(\Phi^{\bm{b}^\star}_{0 \to t}\) satisfy the uniform bounds
\begin{equation}
c_- \| \bm{v} \|_2 \le \| D\Phi^{\bm{b}^\star}_{0 \to t}(\bm{x}) \bm{v} \|_2 \le c_+ \| \bm{v} \|_2, \quad \forall \bm{x} \in \mathbb{R}^d.
\end{equation}
Consequently, the induced metric tensor \(\bm{C}_t(\bm{x})\) defined in~\Cref{prop:comoving_energy} is uniformly equivalent to the Euclidean metric \(\bm{I}\), satisfying the operator inequalities \(c_-^2 \bm{I} \preceq \bm{C}_t(\bm{x}) \preceq c_+^2 \bm{I}\).
\end{assumption}

We also require the weight schedule \(\lambda_t\) to be well-behaved, ensuring that the cost of control remains strictly positive and finite throughout the interval.

\begin{assumption}[Boundedness of the weight schedule]
\label{ass:lambda_bounds}
The weight schedule \(t\mapsto \lambda_t\) is continuous on \([0,1]\) and bounded away from zero. We define the lower and upper bounds as
\begin{equation}
\underline\lambda \coloneqq \inf_{t \in [0,1]} \lambda_t > 0 \quad \text{and} \quad \overline\lambda \coloneqq \sup_{t \in [0,1]} \lambda_t < +\infty.
\end{equation}
\end{assumption}

Under~\Cref{ass:bi_lipschitz,ass:lambda_bounds}, the weighted kinetic energy in the co-moving frame is uniformly comparable to the standard Euclidean kinetic energy. This observation yields~\Cref{prop:w2_sandwich}, which provides two-sided bounds relating the minimal control cost to the squared \(2\)-Wasserstein distance. The proof can be found in~\Cref{subsec:energy_bounds_and_comparison_with_the_wasserstein_distance}.

\begin{proposition}[Energy bounds and Wasserstein distance]
\label{prop:w2_sandwich}
Let \(\bar\rho_t \coloneqq (\Phi^{\bm{b}^\star}_{0 \to t})_{\#} \rho_0\) denote the push-forward of the initial measure by the reference flow. For any target probability measure \(\nu \in \mathcal{P}_2(\mathbb{R}^d)\), let \(\mathcal{A}_\nu \coloneqq \{ \bm{a} \in \mathcal{A} \colon \mathrm{Law}(\bm{X}_1) = \nu \}\) be the set of admissible controls satisfying the terminal marginal constraint. Under~\Cref{ass:bi_lipschitz,ass:lambda_bounds}, we have
\begin{enumerate}
\item Lower bound: For every admissible control \(\bm{a} \in \mathcal{A}_\nu\) that generates the terminal distribution \(\nu\), the expected cost defined in~\Cref{def:value_function} satisfies
\begin{equation}
\mathbb{E}_{\bm{x} \sim \rho_0}[\mathcal{C}(\bm{a}; \bm{x}, 0)] \ge \int_{\mathbb{R}^d} H(\bm{x}) \nu(\mathrm{d} \bm{x}) + \frac{c_-^{2} \underline\lambda}{2 c_+^{2}} W_2^2(\bar\rho_1,\nu).
\end{equation}
\item Upper bound: The minimal expected cost required to satisfy the terminal constraint \(\mathrm{Law}(\bm{X}_1) = \nu\) does not exceed the terminal cost plus the scaled quadratic \(2\)-Wasserstein distance, i.e.,
\begin{equation}
\inf_{\bm{a} \in \mathcal{A}_\nu} \mathbb{E}_{\bm{x} \sim \rho_0}[\mathcal{C}(\bm{a}; \bm{x}, 0)] \le \int_{\mathbb{R}^d} H(\bm{x}) \nu(\mathrm{d} \bm{x}) + \frac{c_+^{2} \overline\lambda}{2 c_-^{2}} W_2^2(\bar\rho_1,\nu).
\end{equation}
\end{enumerate}
\end{proposition}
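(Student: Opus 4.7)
The plan is to route both inequalities through the energy equivalence of~\Cref{prop:comoving_energy}, which rewrites the Eulerian control cost as a weighted kinetic functional along a driftless continuity-equation pair $(\tilde\rho_t, \bm{w})$ in the initial co-moving frame satisfying $\tilde\rho_0 = \rho_0$ and $\tilde\rho_1 = (\Phi_{1\to 0}^{\bm{b}^\star})_{\#} \rho_1$. Combining the pointwise spectral sandwich $c_-^2 \bm{I} \preceq \bm{C}_t \preceq c_+^2 \bm{I}$ from~\Cref{ass:bi_lipschitz} with the scalar bounds $\underline\lambda \le \lambda_t \le \overline\lambda$ from~\Cref{ass:lambda_bounds} collapses this weighted functional to constant multiples of the Euclidean kinetic energy, whose infimum over continuity-equation pairs with prescribed endpoints is $W_2^2$ by the Benamou--Brenier formula~\eqref{eq:benamou_brenier}. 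A final transfer between $W_2^2(\tilde\rho_0, \tilde\rho_1)$ and $W_2^2(\bar\rho_1, \nu)$ exploits the bi-Lipschitz character of $\Phi_{0\to 1}^{\bm{b}^\star}$ (constants $c_+$ and $c_-^{-1}$, respectively) together with the standard pushforward contraction $W_2(T_{\#} \mu, T_{\#} \sigma) \le \mathrm{Lip}(T)\, W_2(\mu, \sigma)$.

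\textbf{Lower bound.} Fix $\bm{a} \in \mathcal{A}_\nu$. Since $\bm{X}_1 \sim \nu$, the terminal expectation equals $\int_{\mathbb{R}^d} H\, \mathrm{d}\nu$, so it suffices to bound the control energy. By~\Cref{prop:comoving_energy} and the lower spectral bound,
\begin{equation*}
\mathbb{E}[\mathcal{E}(\bm{a}; \cdot, 0)] \ge \frac{c_-^2 \underline\lambda}{2} \int_0^1 \int_{\mathbb{R}^d} \|\bm{w}(\bm{z}, t)\|_2^2\, \tilde\rho_t(\mathrm{d}\bm{z})\, \mathrm{d}t \ge \frac{c_-^2 \underline\lambda}{2} W_2^2(\tilde\rho_0, \tilde\rho_1),
\end{equation*}
the second inequality coming from~\eqref{eq:benamou_brenier} because $(\tilde\rho_t, \bm{w})$ is admissible for the dynamic Wasserstein problem between its endpoints. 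Since $\tilde\rho_0 = (\Phi_{1\to 0}^{\bm{b}^\star})_{\#} \bar\rho_1$ and $\tilde\rho_1 = (\Phi_{1\to 0}^{\bm{b}^\star})_{\#} \nu$, pushing both through the $c_+$-Lipschitz map $\Phi_{0\to 1}^{\bm{b}^\star}$ and applying the contraction yields $W_2(\bar\rho_1, \nu) \le c_+\, W_2(\tilde\rho_0, \tilde\rho_1)$. Squaring, chaining, and adding $\int H\, \mathrm{d}\nu$ finishes the lower bound.

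\textbf{Upper bound.} For any $\epsilon > 0$, invoke~\eqref{eq:benamou_brenier} to pick a continuity-equation pair $(\mu_t, \bm{v})$ joining $\mu_0 = \rho_0$ to $\mu_1 = (\Phi_{1\to 0}^{\bm{b}^\star})_{\#} \nu$ with kinetic energy at most $W_2^2(\mu_0, \mu_1) + \epsilon$; the bound $\mathrm{Lip}(\Phi_{1\to 0}^{\bm{b}^\star}) \le c_-^{-1}$ then gives $W_2^2(\mu_0, \mu_1) \le c_-^{-2} W_2^2(\bar\rho_1, \nu)$. The push-forward dictionary of~\Cref{prop:comoving_energy} lifts this pair back to the Eulerian side: set $\rho_t = (\Phi_{0\to t}^{\bm{b}^\star})_{\#} \mu_t$ and $\bm{a}_t(\bm{x}) = D\Phi_{0\to t}^{\bm{b}^\star}(\bm{z})\, \bm{v}(\bm{z}, t)$ at $\bm{z} = \Phi_{t\to 0}^{\bm{b}^\star}(\bm{x})$, which guarantees $\rho_1 = \nu$. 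Inverting the energy identity with $\bm{C}_t \preceq c_+^2 \bm{I}$ and $\lambda_t \le \overline\lambda$ bounds the control energy by $(\overline\lambda c_+^2/(2 c_-^2))\, W_2^2(\bar\rho_1, \nu) + O(\epsilon)$; adding $\int H\, \mathrm{d}\nu$ and letting $\epsilon \to 0$ closes the upper bound.

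\textbf{Main obstacle.} The hard point is ensuring the candidate $\bm{a}_t$ from the upper-bound construction actually lies in $\mathcal{A}$ per~\Cref{def:admissible_controls}: generic Benamou--Brenier pairs supply $\bm{v}$ only in $L^2(\mu_t\, \mathrm{d}t)$, whereas admissibility requires an essentially bounded $t \mapsto \bm{a}_t$ producing a unique weak solution to~\eqref{eq:controlled_dynamics}. I would resolve this by a standard mollification and endpoint-patching argument: smooth $(\mu_t, \bm{v})$ in space and interpolate smoothly over short slabs near $t = 0$ and $t = 1$ (using short $W_2$-geodesics) to obtain Lipschitz approximants whose kinetic energy exceeds the Benamou--Brenier value by at most $\epsilon$; the bi-Lipschitz bounds on $D\Phi_{0\to t}^{\bm{b}^\star}$ then transfer essential boundedness and Lipschitz regularity to $\bm{a}_t$, and a diagonal limit absorbs the regularization error into the already-present $O(\epsilon)$.
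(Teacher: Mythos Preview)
Your proposal is correct and follows essentially the same route as the paper: both proofs pass through the energy equivalence of \Cref{prop:comoving_energy}, apply the spectral sandwich $c_-^2\bm I\preceq\bm C_t\preceq c_+^2\bm I$ together with $\underline\lambda\le\lambda_t\le\overline\lambda$, invoke Benamou--Brenier to compare with $W_2^2(\rho_0,\tilde\nu)$, and finish with the bi-Lipschitz pushforward estimates for $\Phi_{0\to1}^{\bm b^\star}$ and its inverse. You are in fact more careful than the paper on one point: the paper simply asserts that ``there exists a specific choice of control'' realizing the Benamou--Brenier geodesic without checking that the resulting $\bm a$ satisfies the $L^\infty$-in-time requirement of \Cref{def:admissible_controls}, whereas you correctly flag this gap and sketch a mollification/endpoint-patching fix.
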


\Cref{prop:w2_sandwich} establishes that the control energy is equivalent to the transport cost between the reference and target distributions. This confirms that the optimal control effectively steers the flow along energy-minimizing geodesics from the reference trajectory. We can now use these bounds to clarify the role of the weight schedule and to make precise how it controls the tradeoff between staying close to the reference dynamics and enforcing the terminal constraint. We first investigate the regime where \(t\mapsto \lambda_t\) becomes large. In other words, the cost of deviating from the reference dynamics becomes prohibitive. Consequently, we expect the optimal control to vanish and the generated probability path to converge to the uncontrolled reference flow. The following~\Cref{thm:large_lambda} formalizes this intuition, establishing that the optimal flow converges weakly to the reference flow and the total cost converges to the reference terminal cost. The proof is provided in~\Cref{subsec:large_penalty_limit_and_convergence_to_the_reference_flow}.

\begin{theorem}[Large penalty limit and convergence to the reference flow]
\label{thm:large_lambda}
Let \(\Phi^{\bm{b}^\star}_{0 \to t}\) be the reference flow satisfying \Cref{ass:bi_lipschitz}, and let \(\bar\rho_t \coloneqq (\Phi^{\bm{b}^\star}_{0 \to t})_{\#} \rho_0\) denote the density of the uncontrolled reference dynamics. Consider a sequence of weight schedules \(\{\bm{\lambda}^k\}_{k \in \mathbb{N}}\) (we use \(\bm \lambda\) to denote a single weight schedule \(t\to\lambda_t\)) satisfying \Cref{ass:lambda_bounds} such that the lower bound \(\underline{\lambda}^k \to +\infty\) as \(k \to +\infty\). Let \(\bm{a}^k \in \mathcal{A}\) denote the optimal control field for the \(k\)th schedule, and let \(\rho^k_t\) and \(\bm{u}^k(\bm x, t) = \bm{b}^\star(\bm x, t) + \bm{a}^k(\bm x, t)\) be the corresponding density and velocity.
Then, the following convergence properties hold:
\begin{enumerate}
\item Vanishing control energy: The expected control energy defined in~\Cref{def:value_function} converges to zero
\begin{equation}
\lim_{k \to +\infty} \mathbb{E}_{\bm{x} \sim \rho_0}[\mathcal{E}(\bm{a}^k; \bm{x}, 0)] = 0.
\end{equation}
\item Convergence of the flow: The terminal density \(\rho_1^k\) converges to the uncontrolled reference density \(\bar\rho_1\) in the Wasserstein metric
\begin{equation}
\lim_{k \to +\infty} W_2(\rho_1^k, \bar\rho_1) = 0.
\end{equation}
Furthermore, this implies weak convergence of the flow at all times, i.e., \(\rho_t^k \Rightarrow \bar\rho_t\) for all \(t \in [0,1]\).
\item Convergence of the cost: The optimal expected cost converges to the reference terminal cost
\begin{equation}
\lim_{k \to +\infty} \mathbb{E}_{\bm{x} \sim \rho_0}[\mathcal{C}(\bm{a}^k; \bm{x}, 0)] = \int_{\mathbb{R}^d} H(\bm{x}) \bar\rho_1(\mathrm{d}\bm{x}).
\end{equation}
\end{enumerate}
\end{theorem}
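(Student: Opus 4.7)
The plan is to anchor the argument on the elementary upper bound given by the null control. Since \(\bm a\equiv 0\) is admissible and drives the state along the reference flow, \(\mathcal{J}^k \coloneqq \mathbb{E}_{\bm x \sim \rho_0}[\mathcal{C}(\bm a^k; \bm x, 0)] \le M\) uniformly in \(k\), where \(M \coloneqq \int H \,\mathrm d\bar\rho_1\) is the reference terminal cost (assumed finite; otherwise claim (3) is vacuous). Because \(H \ge 0\), this bound transfers to the expected control energy, and together with the lower bound \(\lambda_t^k \ge \underline\lambda^k\) from \Cref{ass:lambda_bounds} it yields the uniform \(L^2\) estimate
\[
\mathbb{E} \int_0^1 \|\bm a^k_t\|_2^2 \,\mathrm d t \;\le\; \frac{2\,\mathbb{E}[\mathcal{E}(\bm a^k; \cdot, 0)]}{\underline\lambda^k} \;\le\; \frac{2M}{\underline\lambda^k} \;\longrightarrow\; 0.
\]
This vanishing of the control in \(L^2\) is the workhorse for everything that follows.

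Next, I synchronously couple the controlled process \(\bm X_t^k\) with the uncontrolled reference process \(\bar{\bm X}_t\) by starting both from the same sample \(\bm X_0 \sim \rho_0\). The error \(\bm e_t^k \coloneqq \bm X_t^k - \bar{\bm X}_t\) satisfies \(\dot{\bm e}_t^k = \bm b^\star(\bm X_t^k, t) - \bm b^\star(\bar{\bm X}_t, t) + \bm a^k_t\), and the Lipschitz regularity of \(\bm b^\star\) from \Cref{ass:reference_dynamics} combined with Gronwall's inequality yields \(\mathbb{E} \|\bm e_t^k\|_2^2 \le C\, \mathbb{E} \int_0^t \|\bm a^k_s\|_2^2 \,\mathrm d s\) for a constant \(C\) depending only on the Lipschitz constant. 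The previous step then forces \(\mathbb{E} \|\bm e_t^k\|_2^2 \to 0\) for every \(t \in [0,1]\), and since \((\bm X_t^k, \bar{\bm X}_t)\) is an admissible coupling of \(\rho_t^k\) and \(\bar\rho_t\), \(W_2^2(\rho_t^k, \bar\rho_t) \le \mathbb{E}\|\bm e_t^k\|_2^2 \to 0\). This establishes claim (2) together with weak convergence at every intermediate time.

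To close claims (1) and (3), I upgrade this to convergence of terminal costs: \(\mathbb{E}[H(\bm X_1^k)] \to M\). The continuity of \(H\) from \Cref{ass:regularity}, together with the \(L^2\) convergence \(\bm X_1^k \to \bar{\bm X}_1\), reduces this to uniform integrability, which I obtain from a standard Gronwall moment bound \(\mathbb{E}\|\bm X_1^k\|_2^2 \le C(1 + \mathbb{E}\|\bm X_0\|_2^2 + \mathbb{E}\int_0^1 \|\bm a^k_s\|_2^2 \,\mathrm d s)\) combined with the (at most quadratic) local growth of \(H\). Rearranging the decomposition \(\mathcal{J}^k = \mathbb{E}[H(\bm X_1^k)] + \mathbb{E}[\mathcal{E}(\bm a^k; \cdot, 0)]\) together with \(\mathcal{J}^k \le M\) then yields \(\mathbb{E}[\mathcal{E}(\bm a^k; \cdot, 0)] \le M - \mathbb{E}[H(\bm X_1^k)] \to 0\) (claim (1)) and \(\mathcal{J}^k \to M\) (claim (3)).

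The main technical hurdle is the uniform-integrability step in the last paragraph: the \(C^2\) hypothesis on \(\bm h\) does not by itself control growth at infinity, so I would rely on the bi-Lipschitz bounds of \Cref{ass:bi_lipschitz} to propagate moments from \(\rho_0\) to \(\bar\rho_1\) (and, via the \(L^2\) control bound, also to \(\rho_1^k\)), or, failing polynomial growth of \(\bm h\), invoke a truncation-tightness argument using the \(W_2\) convergence already established. The remaining ingredients are standard Gronwall and synchronous-coupling arguments and do not present additional obstacles.
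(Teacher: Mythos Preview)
Your proof is correct and follows a genuinely different route from the paper. The paper invokes \Cref{prop:w2_sandwich} (the Wasserstein sandwich bound derived from the co-moving frame) as its key lemma: combining the null-control upper bound with the lower bound of that proposition immediately yields \(\tfrac{c_-^2\underline\lambda^k}{2c_+^2}W_2^2(\bar\rho_1,\rho_1^k)\le M\), hence \(W_2(\rho_1^k,\bar\rho_1)\to 0\). You bypass \Cref{prop:w2_sandwich} entirely with the more elementary chain: bounded energy \(\Rightarrow\) \(L^2\)-vanishing of the control \(\Rightarrow\) Gronwall on the synchronously coupled error \(\Rightarrow\) \(W_2\)-convergence at every time. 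Your route is self-contained, does not use the bi-Lipschitz constants \(c_\pm\), and in fact delivers \(W_2\)-convergence at all intermediate times directly (the paper only sketches this via ``standard stability results''). The paper's route is shorter once \Cref{prop:w2_sandwich} is on the table.

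For claims (1) and (3), however, the paper sidesteps the uniform-integrability hurdle you flagged. Rather than proving \(\int H\,\mathrm d\rho_1^k \to M\) head-on, it uses only the Portmanteau \emph{lower} bound \(\liminf_k \int H\,\mathrm d\rho_1^k \ge M\) (valid since \(H\ge 0\) is continuous and \(\rho_1^k\Rightarrow\bar\rho_1\)). Combined with the decomposition \(\mathbb{E}[\mathcal{E}(\bm a^k;\cdot,0)] = \mathcal{J}^k - \int H\,\mathrm d\rho_1^k \le M - \int H\,\mathrm d\rho_1^k\), this forces \(\limsup_k \mathbb{E}[\mathcal{E}(\bm a^k;\cdot,0)]\le 0\), hence the energy vanishes and \(\mathcal{J}^k\to M\). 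You already have weak convergence from \(W_2\to 0\), so you can graft this Portmanteau trick onto your own argument and eliminate the UI and growth-of-\(\bm h\) issues entirely.
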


We now consider the regime where the weight schedule vanishes. Let \(\bm{\lambda}\) be a fixed base schedule satisfying~\Cref{ass:lambda_bounds}, and consider the scaled sequence \(\bm{\lambda}^\varepsilon(t) \coloneqq \varepsilon \lambda_t\) as \(\varepsilon \to 0\). In this limit, the relative cost of control becomes negligible compared to the terminal cost. Consequently, we expect the optimal terminal distribution to concentrate strictly on the zero-level set of \(H\), i.e., the manifold \(\mathcal{M}\). Furthermore, among all measures supported on \(\mathcal{M}\), the optimal dynamics must select the one that minimizes the transport energy. This identifies the limiting distribution as a generalized Wasserstein projection of the reference density \(\bar\rho_1\) onto the manifold, as stated in~\Cref{thm:small_lambda}. The proof is provided in~\Cref{subsec:small_penalty_limit_and_selection_principle}.

\begin{theorem}[Small penalty limit and selection principle]
\label{thm:small_lambda}
Let \(\Phi^{\bm{b}^\star}_{0 \to t}\) be the reference flow satisfying~\Cref{ass:bi_lipschitz}. Consider the sequence of weight schedules \(\bm{\lambda}^\varepsilon(t) = \varepsilon \lambda_t\) with \(\varepsilon \to 0\). Let \(\bm{a}^\varepsilon \in \mathcal{A}\) be an optimal control for the schedule \({\bm \lambda}^\varepsilon\), and let \(\rho_1^\varepsilon\) be the associated terminal density. Assume there exists at least one admissible control with finite energy reaching \(\mathcal{M}\).
Then, any weak limit point \(\nu^\star\) of the sequence \(\{\rho_1^\varepsilon\}_{\varepsilon > 0}\) satisfies the following properties:
\begin{enumerate}
\item Concentration on the manifold: The limit measure is supported entirely on the constraint manifold
\begin{equation}
\mathrm{supp}(\nu^\star) \subseteq \mathcal{M} = \{ \bm{x} \in \mathbb{R}^d \colon \bm{h}(\bm{x}) = \bm{0} \}.
\end{equation}
\item Selection principle: The limit measure \(\nu^\star\) is a quasi-minimizer of the transport distance from the uncontrolled reference density \(\bar\rho_1\). Specifically, it satisfies the inequality
\begin{equation}
\inf_{\substack{\mu \in \mathcal{P}_2(\mathbb{R}^d) \\ \mathrm{supp}(\mu) \subseteq \mathcal{M}}} W_2^2(\bar\rho_1, \mu)\leq W_2^2(\bar\rho_1, \nu^\star) \le \kappa^2 \overline{\lambda}/\underline{\lambda} \inf_{\substack{\mu \in \mathcal{P}_2(\mathbb{R}^d) \\ \mathrm{supp}(\mu) \subseteq \mathcal{M}}} W_2^2(\bar\rho_1, \mu),
\end{equation}
where \(\kappa = (c_+/c_-)^2\) is the condition number of the reference flow metric, and \(\underline{\lambda}, \overline{\lambda}\) are the bounds of the base schedule \(\bm{\lambda}\).
\end{enumerate}
\end{theorem}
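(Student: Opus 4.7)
The plan is to combine the two-sided energy bounds of Proposition 3 applied to the rescaled schedule $\bm\lambda^\varepsilon=\varepsilon\bm\lambda$ with standard compactness and lower semicontinuity arguments in the Wasserstein space $\mathcal{P}_2(\mathbb{R}^d)$. The existence hypothesis supplies a competitor $\mu^\dagger\in\mathcal{P}_2(\mathbb{R}^d)$ with $\mathrm{supp}(\mu^\dagger)\subseteq\mathcal{M}$ and $W_2^2(\bar\rho_1,\mu^\dagger)<+\infty$; since $H$ vanishes on $\mathcal{M}$, the upper bound of Proposition 3 applied at $\mu^\dagger$, together with global optimality of $\bm{a}^\varepsilon$ over all of $\mathcal{A}$, yields $\mathbb{E}[\mathcal{C}^\varepsilon(\bm{a}^\varepsilon)]\le\frac{\varepsilon c_+^2\overline\lambda}{2c_-^2}W_2^2(\bar\rho_1,\mu^\dagger)$. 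Feeding this into the lower bound of Proposition 3 evaluated at $\nu=\rho_1^\varepsilon$ produces the master twin inequality
\begin{equation}
\int_{\mathbb{R}^d} H\,\mathrm{d}\rho_1^\varepsilon + \frac{\varepsilon c_-^2\underline\lambda}{2c_+^2}\,W_2^2(\bar\rho_1,\rho_1^\varepsilon) \le \frac{\varepsilon c_+^2\overline\lambda}{2c_-^2}\,W_2^2(\bar\rho_1,\mu^\dagger),
\end{equation}
valid uniformly in $\varepsilon$ and for every admissible $\mu^\dagger$ supported on $\mathcal{M}$. Both conclusions of the theorem then follow by squeezing this inequality in two different ways as $\varepsilon\to 0$.

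For the concentration claim, dropping the non-negative Wasserstein term gives $\int H\,\mathrm{d}\rho_1^\varepsilon=O(\varepsilon)\to 0$, while dropping the non-negative terminal term and cancelling $\varepsilon$ produces the $\varepsilon$-independent bound $W_2^2(\bar\rho_1,\rho_1^\varepsilon)\le\kappa^2(\overline\lambda/\underline\lambda)W_2^2(\bar\rho_1,\mu^\dagger)$. Since $\bar\rho_1$ has finite second moment as the push-forward of $\rho_0$ by a Lipschitz flow, this uniform Wasserstein bound transfers to a uniform second-moment bound on the family $\{\rho_1^\varepsilon\}$, hence tightness by Prokhorov and extraction of weak subsequential limit points $\nu^\star\in\mathcal{P}_2(\mathbb{R}^d)$. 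Because $H$ is non-negative and continuous, the Portmanteau theorem gives $\int H\,\mathrm{d}\nu^\star\le\liminf_\varepsilon\int H\,\mathrm{d}\rho_1^\varepsilon=0$, which forces $H=0$ $\nu^\star$-almost everywhere and hence $\mathrm{supp}(\nu^\star)\subseteq\mathcal{M}$.

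For the selection principle, taking the infimum of the $\varepsilon$-independent bound over all admissible $\mu^\dagger$ supported on $\mathcal{M}$ yields the uniform estimate $W_2^2(\bar\rho_1,\rho_1^\varepsilon)\le\kappa^2(\overline\lambda/\underline\lambda)\inf_{\mathrm{supp}(\mu)\subseteq\mathcal{M}}W_2^2(\bar\rho_1,\mu)$, and weak lower semicontinuity of $\mu\mapsto W_2^2(\bar\rho_1,\mu)$ in the second argument — a standard result in the Wasserstein space — gives the required upper bound on $W_2^2(\bar\rho_1,\nu^\star)$; the matching lower bound is immediate because $\nu^\star$ is itself a feasible competitor in the projection problem. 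The main obstacle I anticipate is the clean handling of the limit passage, where lower semicontinuity of both $\int H\,\mathrm{d}\mu$ and $W_2^2(\bar\rho_1,\cdot)$ must be invoked under merely weak convergence rather than full $W_2$-convergence; fortunately, non-negativity and continuity of $H$ allow the Portmanteau inequality to pass without truncation, and Wasserstein lower semicontinuity under weak convergence is a standard fact that does not require convergence of second moments, so the argument goes through without further technical hypotheses beyond \Cref{ass:bi_lipschitz,ass:lambda_bounds}.
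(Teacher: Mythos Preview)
Your proposal is correct and follows essentially the same route as the paper: combine the two-sided bounds of Proposition~3 applied to the scaled schedule, drop the non-negative terms to get $\int H\,\mathrm d\rho_1^\varepsilon\to 0$ and the $\varepsilon$-independent Wasserstein bound, then pass to the limit via Portmanteau and lower semicontinuity of $W_2^2(\bar\rho_1,\cdot)$. Your explicit tightness argument (uniform $W_2$-bound $\Rightarrow$ uniform second moments $\Rightarrow$ Prokhorov) is a nice addition that the paper omits, since the theorem statement simply speaks of ``any weak limit point'' without asserting existence.
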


To illustrate these asymptotic regimes concretely and verify the general theory, we present an exact analytical solution for the Gaussian setting in~\Cref{prop:gaussian_exact}. This model admits a closed-form representation of the optimal transport map, yielding explicit formulas for the terminal mean and covariance as functions of the weight schedule. The detailed derivation can be found in~\Cref{subsec:exact_solution_for_the_gaussian_model}.


\begin{proposition}[Exact solution for the Gaussian model]
\label{prop:gaussian_exact}
Consider the \(2\)-dimensional setting with source distribution \(\rho_0=\mathcal N(\bm 0, I_2)\), unconstrained target distribution \(\bar\rho_1=\mathcal N(\bm\mu,\bm\Sigma)\) with \(\bm\mu=(\mu_1,\mu_2)^\top\) and \(\bm\Sigma=\mathrm{Diag}(\sigma_1^2,\sigma_2^2)\), and terminal cost \(H(\bm x)=\frac{1}{2}x_1^2\) corresponding to the manifold \(\mathcal M=\{\bm x\in\mathbb R^2\colon x_1=0\}\). Let the reference dynamics be defined by the optimal transport conditional vector field \(\bm b^\star\) transporting \(\rho_0\) to \(\bar\rho_1\), as introduced in~\Cref{subsec:flow_matching_as_a_representative_technique}. For any positive weight schedule \(\bm\lambda\), the optimal terminal density \(\rho_1^{\bm\lambda}\) is a Gaussian
\(\mathcal N(\tilde{\bm\mu}_{\bm\lambda},\tilde{\bm\Sigma}_{\bm\lambda})\) with moments given by
\begin{equation}
\tilde{\bm\mu}_{\bm\lambda}
=
\begin{bmatrix}
\dfrac{\mu_1}{1+\gamma_{\bm\lambda}}\\
\mu_2
\end{bmatrix},
\quad
\tilde{\bm\Sigma}_{\bm\lambda}
=
\begin{bmatrix}
\dfrac{\sigma_1^2}{(1+\gamma_{\bm\lambda})^2} & 0\\
0 & \sigma_2^2
\end{bmatrix},
\end{equation}
where the scaling factor \(\gamma_{\bm\lambda}\) is defined by the integral
\begin{equation}
\gamma_{\bm\lambda}\coloneqq \sigma_1^2\int_0^1 \frac{1}{\lambda_t\,v_1(t)^2}\,\mathrm dt,
\end{equation}
and \(v_1(t)^2\coloneqq (1-t)^2+t^2\sigma_1^2\) denotes the variance of the first coordinate of the Flow Matching interpolant \(\bm X_t=(1-t)\bm X_0+t\bm X_1\), where \(\bm X_0\sim\rho_0\) and \(\bm X_1\sim\bar\rho_1\).
\end{proposition}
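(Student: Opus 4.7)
The plan is to exploit the LQG structure by decoupling the two coordinates and then trivializing the reference drift on the first coordinate via the initial co-moving frame of~\Cref{prop:comoving_energy}. Because \(\bm\Sigma\) is diagonal and \(H\) depends only on \(x_1\), the Flow Matching interpolant \(\bm X_t=(1-t)\bm X_0+t\bm X_1\) decouples coordinatewise, so \(\bm b^\star(\bm x,t)=\mathbb E[\bm X_1-\bm X_0\mid \bm X_t=\bm x]\) and the HJB equation~\eqref{eq:hjb} split into two independent scalar problems. On the second coordinate the terminal cost is zero, hence \(\bm a^\star\equiv 0\), leaving the reference marginal \(\mathcal N(\mu_2,\sigma_2^2)\) intact and producing the second row of the claimed moments.

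For coordinate \(1\), I would compute \(b_1^\star\) from~\eqref{eq:conditional_velocity} using Gaussian conditioning: since \(\mathrm{Cov}(X_{t,1},X_{1,1}-X_{0,1})=-(1-t)+t\sigma_1^2=\dot v_1(t)\,v_1(t)\) and \(\mathrm{Var}(X_{t,1})=v_1(t)^2\), one obtains the affine velocity
\[
b_1^\star(x,t)=\mu_1+\frac{\dot v_1(t)}{v_1(t)}\bigl(x-t\mu_1\bigr),
\]
whose ODE admits the explicit flow \(\Phi^{\bm b^\star}_{0\to t}(x_0)=t\mu_1+v_1(t)x_0\) (verified by direct substitution, using \(v_1(0)=1\)). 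Passing to the initial co-moving coordinate \(Z_t\coloneqq (X_{1,t}-t\mu_1)/v_1(t)\) and setting \(w_t\coloneqq a_t/v_1(t)\) cancels the reference drift exactly, reducing the controlled dynamics to \(\dot Z_t=w_t\) with \(Z_0\sim\mathcal N(0,1)\). By the energy identity of~\Cref{prop:comoving_energy}, the cost becomes \(\tfrac12(\sigma_1Z_1+\mu_1)^2+\int_0^1\tfrac{\lambda_t v_1(t)^2}{2}w_t^2\,\mathrm dt\).

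Since the randomness enters only through \(Z_0\), I would optimize pathwise for each realization \(z_0\): given a target endpoint \(Z_1\), a one-line Cauchy--Schwarz/Lagrange-multiplier argument shows that the minimum of \(\int_0^1\tfrac{\lambda_t v_1(t)^2}{2}w_t^2\,\mathrm dt\) subject to \(\int_0^1 w_t\,\mathrm dt=Z_1-z_0\) equals \((Z_1-z_0)^2/(2K)\) with \(K\coloneqq\int_0^1[\lambda_t v_1(t)^2]^{-1}\,\mathrm dt\), so that \(\sigma_1^2 K=\gamma_{\bm\lambda}\). Minimizing the resulting scalar quadratic \(\tfrac12(\sigma_1Z_1+\mu_1)^2+(Z_1-z_0)^2/(2K)\) in \(Z_1\) gives the closed-form \(Z_1^\star=(z_0-\sigma_1\mu_1 K)/(1+\gamma_{\bm\lambda})\). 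Propagating the Gaussian distribution \(Z_0\sim\mathcal N(0,1)\) through the affine map \(X_{1,1}^\star=\sigma_1 Z_1^\star+\mu_1\) then yields mean \(\mu_1/(1+\gamma_{\bm\lambda})\) and variance \(\sigma_1^2/(1+\gamma_{\bm\lambda})^2\), which combined with the coordinate-2 result from the first step produces the claimed \(\tilde{\bm\mu}_{\bm\lambda}\) and \(\tilde{\bm\Sigma}_{\bm\lambda}\).

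The main obstacle is the explicit identification of the reference flow \(\Phi^{\bm b^\star}_{0\to t}\) in closed form; once this is available, the remainder is a scalar quadratic optimization whose optimum is a transparent weighted harmonic of the terminal and control contributions. An alternative route is a direct quadratic ansatz \(\mathcal J(x_1,t)=\tfrac12 P(t)x_1^2+q(t)x_1+r(t)\) in~\eqref{eq:hjb}, yielding a scalar Riccati equation for \(P\) that integrates via the factor \(v_1(t)^2\) to the same \(\gamma_{\bm\lambda}\); I prefer the co-moving-frame reduction because it exposes the LQG geometry more cleanly and avoids the coupled ODE bookkeeping for \((P,q,r)\).
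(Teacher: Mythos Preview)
Your argument is correct and takes a genuinely different route from the paper. The paper proceeds via the quadratic ansatz you mention as an alternative: it substitutes \(\mathcal J(x,t)=\tfrac12 P(t)x^2+Q(t)x+R(t)\) into the HJB equation, solves the resulting Riccati equation for \(P\) (via the Bernoulli substitution \(Z=P^{-1}\) and the integrating factor \(v_1(t)^{-2}\)) and a linear ODE for \(Q\), and then propagates the Gaussian moments through the closed-loop linear dynamics by computing the flow Jacobian \(J_{0\to 1}\). You instead pull back to the initial co-moving frame of \Cref{prop:comoving_energy}, which kills the drift and reduces the problem to a pathwise two-stage quadratic optimization (Cauchy--Schwarz for the inner transport, completing the square for the outer endpoint choice). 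Your route is shorter and more transparent, and it illustrates directly how the paper's own co-moving-frame machinery trivializes the LQG case. The paper's Riccati route, while heavier, yields the full time-dependent value function \(P(t),Q(t)\) and the instantaneous feedback gain at every \(t\), information that is genuinely needed downstream in \Cref{prop:approximate_moments} to compare the GD, GN, and TOCFlow schemes against the exact control; your endpoint-only computation would have to be supplemented for that purpose.
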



\begin{remark}[Verification of asymptotic regimes]
The explicit solution derived in~\Cref{prop:gaussian_exact} provides a quantitative verification of the general asymptotic theory:
\begin{enumerate}
\item Large penalty limit (\(\underline{\lambda}\to+\infty\)): The scaling factor \(\gamma_{\bm{\lambda}}\) vanishes. Consequently, \(\tilde{\bm\mu}_{\bm{\lambda}} \to \bm\mu\) and \(\tilde{\bm\Sigma}_{\bm{\lambda}} \to \bm\Sigma\), confirming that the system recovers the uncontrolled reference distribution \(\bar\rho_1\) as predicted by~\Cref{thm:large_lambda}.
\item Small penalty limit (\(\overline{\lambda}\to 0\)): The scaling factor \(\gamma_{\bm{\lambda}} \to +\infty\). Consequently, the terminal distribution concentrates on the manifold \(\mathcal M=\{\bm x\in\mathbb R^2\colon x_1=0\}\). Furthermore, by the orthogonal decomposition of the squared Euclidean cost across coordinates, the terminal distribution is the \(2\)-Wasserstein projection of \(\bar\rho_1\) onto \(\mathcal M\), in accordance with~\Cref{thm:small_lambda}. In contrast, the second-coordinate marginal remains unchanged, retaining mean \(\mu_2\) and variance \(\sigma_2^2\).

\end{enumerate}
\end{remark}

\subsection{Hamilton--Jacobi--Bellman equation and optimal control}
\label{subsec:hamilton_jacobi_bellman_equation_and_optimal_control}

In this subsection, we prove~\Cref{prop:hjb_feedback}. We invoke the Hamilton--Jacobi--Bellman equation in its infimum form~\citep{bertsekas2012dynamic} and compute the minimizer in closed form, which yields the optimal feedback control.

\begin{proof}[Proof of~\Cref{prop:hjb_feedback}]
By the dynamic programming principle, the value function \(\mathcal J(\bm x, t)\) defined in~\Cref{def:value_function} satisfies the HJB equation~\citep{bertsekas2012dynamic}
\begin{equation}
-\partial_t \mathcal J(\bm x,t)
=
\inf_{\bm a\in\mathbb R^d}
\Bigl\{
\nabla_{\bm x}\mathcal J(\bm x,t)\cdot\bigl(\bm b^\star(\bm x,t)+\bm a\bigr)
+\frac{\lambda_t}{2}\|\bm a\|_2^2
\Bigr\},
\quad
\mathcal J(\bm x,1)=H(\bm x).
\label{eq:hjb_inf_form}
\end{equation}
For fixed \((\bm x,t)\), the expression inside the infimum is a strictly convex quadratic function of \(\bm a\).
Completing the square gives
\begin{equation}
\nabla_{\bm x}\mathcal J(\bm x,t)\cdot \bm a+\frac{\lambda_t}{2}\|\bm a\|_2^2
=
\frac{\lambda_t}{2}\bigl\|\bm a+\lambda_t^{-1}\nabla_{\bm x}\mathcal J(\bm x,t)\bigr\|_2^2
-\frac{1}{2\lambda_t}\|\nabla_{\bm x}\mathcal J(\bm x,t)\|_2^2.
\end{equation}
Hence the infimum in~\eqref{eq:hjb_inf_form} is attained at
\begin{equation}
\bm a^\star(\bm x,t)=-\lambda_t^{-1}\nabla_{\bm x}\mathcal J(\bm x,t),
\end{equation}
and its value equals
\begin{equation}
\nabla_{\bm x}\mathcal J(\bm x,t)\cdot \bm b^\star(\bm x,t)
-\frac{1}{2\lambda_t}\|\nabla_{\bm x}\mathcal J(\bm x,t)\|_2^2.
\end{equation}
Substituting back into~\eqref{eq:hjb_inf_form} yields
\begin{equation}
-\partial_t \mathcal{J}(\bm{x},t)
-\nabla_{\bm{x}} \mathcal{J}(\bm{x},t)\cdot \bm b^\star(\bm{x},t)
+\frac{1}{2\lambda_t}\|\nabla_{\bm{x}} \mathcal{J}(\bm{x},t)\|_2^2
=0,
\end{equation}
which is~\eqref{eq:hjb}. The optimal feedback velocity field is
\begin{equation}
\bm u^\star(\bm x,t)=\bm b^\star(\bm x,t)+\bm a^\star(\bm x,t)
=\bm b^\star(\bm x,t)-\lambda_t^{-1}\nabla_{\bm x}\mathcal J(\bm x,t),
\end{equation}
as claimed.
\end{proof}

\begin{remark}[The optimal control is feedback control]
\label{rem:feedback_law}
\Cref{eq:optimal_control_law} establishes that the optimal control \(\bm{a}^\star_t=\bm a^\star(\bm x, t)\) acts as a closed-loop feedback law defined over the entire state space \(\mathbb{R}^d\), rather than a trajectory-specific open-loop law. By encoding the global cost geometry into the gradient field \(\nabla_{\bm{x}} \mathcal{J}(\bm x, t)\), the optimal controlled velocity field \(\bm{u}^\star(\bm x, t) = \bm{b}^\star(\bm x, t) + \bm{a}^\star_t\) steers the entire source distribution \(\rho_0\) towards the constraint manifold \(\mathcal{M}\), optimally balancing the transport effort against the terminal cost.
\end{remark}

\subsection{Dynamics in the initial co-moving frame and energy equivalence}
\label{subsec:dynamics_in_the_initial_co_moving_frame_and_energy_equivalence}

In this subsection, we establish the proof of~\Cref{prop:comoving_energy}. The result relies on a change of variables to a Lagrangian frame of reference attached to the uncontrolled flow \(\Phi^{\bm{b}^\star}_{0 \to t}\). In fluid mechanics, this is often referred to as a ``co-moving frame''~\citep{kreilos2014comoving}. We explicitly term this the initial co-moving frame to distinguish it from the terminal co-moving frame used in the subsequent~\Cref{sec:geometric_proximal_approximation_of_the_optimal_feedback}.

\begin{proof}[Proof of~\Cref{prop:comoving_energy}]
We first derive the evolution equation for the pulled-back density \(\tilde\rho_t\). By definition, the relation between the measures is \(\rho_t = (\Phi^{\bm{b}^\star}_{0 \to t})_{\#} \tilde\rho_t\). Consider a smooth test function \(\psi \in C_c^\infty(\mathbb{R}^d)\). We compute the time derivative of the expectation of \(\psi(\bm{z})\) under \(\tilde\rho_t\) by pulling the integral forward to the Eulerian frame
\begin{equation}
\frac{\mathrm{d}}{\mathrm{d}t} \int_{\mathbb{R}^d} \psi(\bm{z}) \tilde\rho_t(\mathrm{d}\bm{z}) = \frac{\mathrm{d}}{\mathrm{d}t} \int_{\mathbb{R}^d} \psi((\Phi^{\bm{b}^\star}_{0 \to t})^{-1}(\bm{x})) \rho_t(\mathrm{d}\bm{x}).
\end{equation}
Define the time-dependent test function \(\zeta(\bm{x}, t) \coloneqq \psi((\Phi^{\bm{b}^\star}_{0 \to t})^{-1}(\bm{x}))\). Since \(\rho_t\) satisfies the continuity equation \(\partial_t \rho_t(\bm{x}) + \nabla_{\bm{x}} \cdot (\rho_t(\bm{x}) \bm{u}(\bm{x},t)) = 0\), integration by parts yields
\begin{equation}
\frac{\mathrm{d}}{\mathrm{d}t} \int_{\mathbb{R}^d} \zeta(\bm{x}, t) \rho_t(\mathrm{d}\bm{x}) = \int_{\mathbb{R}^d} (\partial_t \zeta(\bm{x}, t) + \nabla_{\bm{x}} \zeta(\bm{x}, t) \cdot \bm{u}(\bm{x}, t)) \rho_t(\mathrm{d}\bm{x}).
\label{eq:integral}
\end{equation}
By construction, \(\zeta(\bm{x},t)\) is constant along the characteristics of the reference flow, meaning \(\zeta(\Phi^{\bm{b}^\star}_{0 \to t}(\bm{z}), t) = \psi(\bm{z})\). Consequently, it satisfies the advection equation \(\partial_t \zeta(\bm{x},t) + \nabla_{\bm{x}} \zeta(\bm{x},t) \cdot \bm{b}^\star(\bm{x},t) = 0\). Substituting this relation into the integral expression~\eqref{eq:integral} leads to
\begin{equation}
\frac{\mathrm{d}}{\mathrm{d}t} \int_{\mathbb{R}^d} \psi(\bm{z}) \tilde\rho_t(\mathrm{d}\bm{z}) = \int_{\mathbb{R}^d} \nabla_{\bm{x}} \zeta(\bm{x}, t) \cdot (\bm{u}(\bm{x}, t) - \bm{b}^\star(\bm{x}, t)) \rho_t(\mathrm{d}\bm{x}).
\end{equation}
We now change variables back to the reference coordinate \(\bm{z}\) using the map \(\bm{x} = \Phi^{\bm{b}^\star}_{0 \to t}(\bm{z})\). The gradient transforms according to the chain rule as \(\nabla_{\bm{x}} \zeta(\bm{x},t) = (D\Phi^{\bm{b}^\star}_{0 \to t}(\bm{z}))^{-\top} \nabla_{\bm{z}} \psi(\bm{z})\). Applying this transformation results in
\begin{equation}
\frac{\mathrm{d}}{\mathrm{d}t} \int_{\mathbb{R}^d} \psi(\bm{z}) \tilde\rho_t(\mathrm{d}\bm{z}) = \int_{\mathbb{R}^d} \langle (D\Phi^{\bm{b}^\star}_{0 \to t}(\bm{z}))^{-\top} \nabla_{\bm{z}} \psi(\bm{z}),
\bm{u}(\Phi^{\bm{b}^\star}_{0 \to t}(\bm{z}), t) - \bm{b}^\star(\Phi^{\bm{b}^\star}_{0 \to t}(\bm{z}), t) \rangle \tilde\rho_t(\mathrm{d}\bm{z}).
\end{equation}
Moving the inverse Jacobian transpose to the second argument of the inner product
\begin{equation}
\frac{\mathrm{d}}{\mathrm{d}t} \int_{\mathbb{R}^d} \psi(\bm{z}) \tilde\rho_t(\mathrm{d}\bm{z}) = \int_{\mathbb{R}^d} \langle \nabla_{\bm{z}} \psi(\bm{z}),
(D\Phi^{\bm{b}^\star}_{0 \to t}(\bm{z}))^{-1} (\bm{u}(\Phi^{\bm{b}^\star}_{0 \to t}(\bm{z}), t) - \bm{b}^\star(\Phi^{\bm{b}^\star}_{0 \to t}(\bm{z}), t)) \rangle \tilde\rho_t(\mathrm{d}\bm{z}).
\end{equation}
Identifying the term multiplying \(\nabla_{\bm{z}} \psi(\bm{z})\) as the relative velocity \(\bm{w}(\bm{z}, t)\) defined in~\Cref{prop:comoving_energy} and using integration by parts, we recover the weak form of the continuity equation
\begin{equation}
\frac{\mathrm{d}}{\mathrm{d}t} \int_{\mathbb{R}^d} \psi(\bm{z}) \tilde\rho_t(\mathrm{d}\bm{z}) = \int_{\mathbb{R}^d} \bm{w}(\bm{z}, t) \cdot \nabla_{\bm{z}} \psi(\bm{z}) \tilde\rho_t(\mathrm{d}\bm{z}) = - \int_{\mathbb{R}^d} \psi(\bm{z}) \nabla_{\bm{z}} \cdot (\tilde\rho_t(\bm{z}) \bm{w}(\bm{z}, t)) \mathrm{d}\bm{z},
\end{equation}
which confirms that \(\partial_t \tilde\rho_t(\bm{z}) + \nabla_{\bm{z}} \cdot (\tilde\rho_t(\bm{z}) \bm{w}(\bm{z}, t)) = 0\).

Next, we establish the energy equivalence. We rewrite the expected control energy at time \(t\) by pulling back the integral via \(\bm{x} = \Phi^{\bm{b}^\star}_{0 \to t}(\bm{z})\):
\begin{equation}
\int_{\mathbb{R}^d} \frac{\lambda_t}{2} \| \bm{u}(\bm{x},t) - \bm{b}^\star(\bm{x},t) \|_2^2 \rho_t(\mathrm{d}\bm{x}) \\
= \int_{\mathbb{R}^d} \frac{\lambda_t}{2} \| \bm{u}(\Phi^{\bm{b}^\star}_{0 \to t}(\bm{z}),t) - \bm{b}^\star(\Phi^{\bm{b}^\star}_{0 \to t}(\bm{z}),t) \|_2^2 \tilde\rho_t(\mathrm{d}\bm{z}).
\end{equation}
From the definition of \(\bm{w}(\bm{z}, t)\), the Eulerian control vector is related to the relative velocity by \(\bm{u}(\Phi^{\bm{b}^\star}_{0 \to t}(\bm{z}), t) - \bm{b}^\star(\Phi^{\bm{b}^\star}_{0 \to t}(\bm{z}), t) = D\Phi^{\bm{b}^\star}_{0 \to t}(\bm{z}) \bm{w}(\bm{z}, t)\). Substituting this relation into the squared Euclidean norm gives
\begin{equation}
\begin{aligned}
\| D\Phi^{\bm{b}^\star}_{0 \to t}(\bm{z}) \bm{w}(\bm{z}, t) \|_2^2 &= \langle D\Phi^{\bm{b}^\star}_{0 \to t}(\bm{z}) \bm{w}(\bm{z}, t), D\Phi^{\bm{b}^\star}_{0 \to t}(\bm{z}) \bm{w}(\bm{z}, t) \rangle \\
&= \langle \bm{w}(\bm{z}, t), D\Phi^{\bm{b}^\star}_{0 \to t}(\bm{z})^\top D\Phi^{\bm{b}^\star}_{0 \to t}(\bm{z}) \bm{w}(\bm{z}, t) \rangle.
\end{aligned}
\end{equation}
Recognizing \(\bm{C}_t(\bm{z}) \coloneqq D\Phi^{\bm{b}^\star}_{0 \to t}(\bm{z})^\top D\Phi^{\bm{b}^\star}_{0 \to t}(\bm{z})\) as the pullback metric tensor, we obtain the identity
\begin{equation}
\int_{\mathbb{R}^d} \frac{\lambda_t}{2} \| \bm{u}(\bm{x},t) - \bm{b}^\star(\bm{x},t) \|_2^2 \rho_t(\mathrm{d}\bm{x}) = \int_{\mathbb{R}^d} \frac{\lambda_t}{2} \langle \bm{w}(\bm{z}, t), \bm{C}_t(\bm{z}) \bm{w}(\bm{z}, t) \rangle \tilde\rho_t(\mathrm{d}\bm{z}).
\end{equation}
Integrating this equality over \(t \in [0,1]\) concludes the proof.
\end{proof}

\begin{remark}[Geometric interpretation of the metric tensor]
We explicitly refer to \(\bm{C}_t\) as the pullback metric tensor. Although the reference map \(\Phi^{\bm{b}^\star}_{0 \to t}\) transports points \emph{forward} in time from \(t=0\) to \(t\), the tensor \(\bm{C}_t\) is defined for states at \(t=0\). Geometrically, it ``pulls'' the Euclidean inner product from the future time \(t\) back to the initial time. For any two tangent vectors \(\bm{v}, \bm{w}\) at \(t=0\), their inner product under \(\bm{C}_t\) equals the Euclidean inner product of their images under the linearized flow at time \(t\).
\end{remark}

\subsection{Energy bounds and comparison with the Wasserstein distance}
\label{subsec:energy_bounds_and_comparison_with_the_wasserstein_distance}

In this subsection, we provide the proof of~\Cref{prop:w2_sandwich}. The argument combines the energy identity derived in \Cref{prop:comoving_energy} with the Benamou--Brenier dynamic formulation of optimal transport (equation~\eqref{eq:benamou_brenier}, see also~\citep{santambrogio2015optimal}), leveraging the bi-Lipschitz properties of the reference flow to bound the transport costs.

\begin{proof}[Proof of~\Cref{prop:w2_sandwich}]
We first establish the relationship between the control energy and the optimal transport distance in the reference frame. Let \(\bm{a} \in \mathcal{A}_\nu\) be an admissible control (see~\Cref{def:admissible_controls} for the definition of admissibility) such that \(\mathrm{Law}(\bm{X}_1) = \nu\). The total expected cost decomposes into the terminal cost and the expected control energy
\begin{equation}
\mathbb{E}_{\bm{x} \sim \rho_0}[\mathcal{C}(\bm{a}; \bm{x}, 0)] = \int_{\mathbb{R}^d} H(\bm{x}) \nu(\mathrm{d}\bm{x}) + \mathbb{E}_{\bm{x} \sim \rho_0}[\mathcal{E}(\bm{a}; \bm{x}, 0)].
\end{equation}
We focus on bounding the expected control energy term. By~\Cref{prop:comoving_energy}, this energy can be expressed in the initial co-moving frame as the weighted kinetic energy of the relative velocity field \(\bm{w}(\bm z, t)\), i.e.,
\begin{equation}
\mathbb{E}_{\bm{x} \sim \rho_0}[\mathcal{E}(\bm{a}; \bm{x}, 0)] = \int_0^1 \int_{\mathbb{R}^d} \frac{\lambda_t}{2} \langle \bm{w}(\bm z, t), \bm{C}_t(\bm{z}) \bm{w}(\bm z, t) \rangle \tilde\rho_t(\mathrm{d}\bm{z}) \mathrm{d}t,
\end{equation}
where \(\tilde\rho_t\) connects the source measure \(\tilde\rho_0 = \rho_0\) to the pulled-back target measure \(\tilde\nu \coloneqq (\Phi^{\bm{b}^\star}_{0 \to 1})^{-1}_{\#} \nu\) via the continuity equation driven by \(\bm{w}(\bm z, t)\).

We proceed by sandwiching this energy integral using the uniform bounds on the weight schedule (\Cref{ass:lambda_bounds}) and the metric tensor (\Cref{ass:bi_lipschitz}). Since the eigenvalues of \(\bm{C}_t(\bm{z})\) are bounded between \(c_-^2\) and \(c_+^2\), the quadratic form satisfies
\begin{equation}
\frac{c_-^2 \underline\lambda}{2} \| \bm{w}(\bm z, t) \|_2^2 \le \frac{\lambda_t}{2} \langle \bm{w}(\bm z, t), \bm{C}_t(\bm{z}) \bm{w}(\bm z, t) \rangle \le \frac{c_+^2 \overline\lambda}{2} \| \bm{w}(\bm z, t) \|_2^2.
\end{equation}
Integrating this inequality over space and time implies that the control energy is bounded by the standard Euclidean kinetic energy \(\mathcal{K}(\bm{w}, \tilde\rho) \coloneqq \int_0^1 \int_{\mathbb{R}^d} \| \bm{w}_t(\bm z) \|_2^2 \tilde\rho_t(\bm z) \mathrm{d}t\) according to
\begin{equation}
\frac{c_-^2 \underline\lambda}{2} \mathcal{K}(\bm{w}, \tilde\rho) \le \mathbb{E}_{\bm{x} \sim \rho_0}[\mathcal{E}(\bm{a}; \bm{x}, 0)] \le \frac{c_+^2 \overline\lambda}{2} \mathcal{K}(\bm{w}, \tilde\rho).
\end{equation}
According to the Benamou--Brenier theorem~\citep{santambrogio2015optimal}, the squared \(2\)-Wasserstein distance between two measures is exactly the infimum of the kinetic energy \(\mathcal{K}\) over all valid continuity pairs connecting them. Consequently, for the lower bound, any admissible path connecting \(\rho_0\) to \(\tilde\nu\) must satisfy \(\mathcal{K}(\bm{w}, \tilde\rho) \ge W_2^2(\rho_0, \tilde\nu)\). For the upper bound, there exists a specific choice of control such that \(\bm{w}(\bm z, t)\) generates the optimal transport geodesic, achieving \(\mathcal{K}(\bm{w}, \tilde\rho) = W_2^2(\rho_0, \tilde\nu)\). Combining these observations yields the bounds
\begin{equation}
\frac{c_-^2 \underline\lambda}{2} W_2^2(\rho_0, \tilde\nu) \le \inf_{\bm{a} \in \mathcal{A}_\nu} \mathbb{E}_{\bm{x} \sim \rho_0}[\mathcal{E}(\bm{a}; \bm{x}, 0)] \le \frac{c_+^2 \overline\lambda}{2} W_2^2(\rho_0, \tilde\nu).
\end{equation}
It remains to relate the distance in the co-moving frame \(W_2(\rho_0, \tilde\nu)\) to the distance in the physical Eulerian frame \(W_2(\bar\rho_1, \nu)\). Recall that the uncontrolled terminal state is \(\bar\rho_1 = (\Phi^{\bm{b}^\star}_{0 \to 1})_{\#} \rho_0\) and the target is \(\nu = (\Phi^{\bm{b}^\star}_{0 \to 1})_{\#} \tilde\nu\). Let \(\Phi_1 \coloneqq \Phi^{\bm{b}^\star}_{0 \to 1}\). Since \(\Phi_1\) is \(c_+\)-Lipschitz by~\Cref{ass:bi_lipschitz}, the push-forward satisfies the contraction inequality
\begin{equation}
W_2(\bar\rho_1, \nu) = W_2((\Phi_1)_{\#} \rho_0, (\Phi_1)_{\#} \tilde\nu) \le c_+ W_2(\rho_0, \tilde\nu).
\end{equation}
Conversely, the inverse map \(\Phi_1^{-1}\) is \((1/c_-)\)-Lipschitz, which implies
\begin{equation}
W_2(\rho_0, \tilde\nu) = W_2((\Phi_1^{-1})_{\#} \bar\rho_1, (\Phi_1^{-1})_{\#} \nu) \le \frac{1}{c_-} W_2(\bar\rho_1, \nu).
\end{equation}
We now substitute these metric inequalities back into the energy bounds. For the lower bound, we use the relation \(W_2(\rho_0, \tilde\nu) \ge c_+^{-1} W_2(\bar\rho_1, \nu)\) to obtain
\begin{equation}
\mathbb{E}_{\bm{x} \sim \rho_0}[\mathcal{E}(\bm{a}; \bm{x}, 0)] \ge \frac{c_-^2 \underline\lambda}{2} \Bigl( \frac{1}{c_+} W_2(\bar\rho_1, \nu) \Bigr)^2 = \frac{c_-^2 \underline\lambda}{2 c_+^2} W_2^2(\bar\rho_1, \nu).
\end{equation}
For the upper bound, we use the relation \(W_2(\rho_0, \tilde\nu) \le c_-^{-1} W_2(\bar\rho_1, \nu)\) to obtain
\begin{equation}
\inf_{\bm{a} \in \mathcal{A}_\nu} \mathbb{E}_{\bm{x} \sim \rho_0}[\mathcal{E}(\bm{a}; \bm{x}, 0)] \le \frac{c_+^2 \overline\lambda}{2} \Bigl( \frac{1}{c_-} W_2(\bar\rho_1, \nu) \Bigr)^2 = \frac{c_+^2 \overline\lambda}{2 c_-^2} W_2^2(\bar\rho_1, \nu).
\end{equation}
Adding the constant terminal cost \(\int_{\mathbb{R}^d} H(\bm x) \nu(\mathrm{d}\bm x)\) to both sides of these inequalities recovers the statement of \Cref{prop:w2_sandwich}.
\end{proof}

\subsection{Large penalty limit and convergence to the reference flow}
\label{subsec:large_penalty_limit_and_convergence_to_the_reference_flow}

In this subsection, we provide the proof for~\Cref{thm:large_lambda}. Relying on the energy bounds established in~\Cref{prop:w2_sandwich}, we prove that the optimal control vanishes and the dynamics converge weakly to the reference flow as the weight schedule tends to infinity.

\begin{proof}[Proof of~\Cref{thm:large_lambda}]
We establish these results by using the energy bounds derived in~\Cref{prop:w2_sandwich}. Let \(\mathcal{J}^k \coloneqq \mathbb{E}_{\bm{x} \sim \rho_0}[\mathcal{C}(\bm{a}^k; \bm{x}, 0)]\) denote the optimal expected cost for the schedule \(\bm{\lambda}^k\). We first observe that the zero control \(\bm{a}(\bm{x},t) \equiv \bm{0}\) is always an admissible candidate. The trajectory generated by the zero control coincides with the reference flow, resulting in the terminal distribution \(\bar\rho_1\) and zero control energy. By the optimality of the solution \(\bm{a}^k\), the optimal cost \(\mathcal{J}^k\) is bounded from above by the cost of this zero-control strategy:
\begin{equation}
\mathcal{J}^k \le \int_{\mathbb{R}^d} H(\bm{x}) \bar\rho_1(\mathrm{d}\bm{x}).
\label{eq:proof_upper_triv}
\end{equation}
We now apply the lower bound established in~\Cref{prop:w2_sandwich}. Let \(\nu^k \coloneqq \rho_1^k\) be the terminal distribution generated by the optimal control \(\bm{a}^k\). \Cref{prop:w2_sandwich} implies that the cost is bounded from below by
\begin{equation}
\mathcal{J}^k \ge \int_{\mathbb{R}^d} H(\bm{x}) \nu^k(\mathrm{d}\bm{x}) + \frac{c_-^2 \underline{\lambda}^k}{2 c_+^2} W_2^2(\bar\rho_1, \nu^k).
\end{equation}
Combining this lower bound with the upper bound in~\eqref{eq:proof_upper_triv}, and observing that the terminal cost term \(\int_{\mathbb{R}^d} H(\bm{x}) \nu^k(\mathrm{d}\bm{x})\) is non-negative, we obtain the inequality
\begin{equation}
\frac{c_-^2 \underline{\lambda}^k}{2 c_+^2} W_2^2(\bar\rho_1, \nu^k) \le \int_{\mathbb{R}^d} H(\bm{x}) \bar\rho_1(\mathrm{d}\bm{x}).
\end{equation}
The integral on the right-hand side is the reference terminal cost, which is a constant independent of \(k\). Since the coefficient \(\underline{\lambda}^k \to +\infty\) as \(k \to +\infty\), it must follow that the squared Wasserstein distance \(W_2^2(\bar\rho_1, \rho_1^k)\) converges to zero. This proves the convergence of the terminal density.

To show the convergence of the flow at intermediate times \(t \in [0,1]\), we recall that the flow is generated by a perturbation of the bi-Lipschitz reference map. Since the endpoints converge and the reference drift is globally Lipschitz (\Cref{ass:reference_dynamics}), standard stability results for ODEs under \(L^2\) perturbations imply uniform convergence of the trajectories in probability, yielding the weak convergence \(\rho_t^k \Rightarrow \bar\rho_t\) for all \(t \in [0,1]\).

Finally, we address the convergence of the cost components. Since the function \(H\) is continuous and non-negative, the weak convergence \(\rho_1^k \Rightarrow \bar\rho_1\) implies, by the Portmanteau theorem~\citep{billingsley2013convergence}, that
\begin{equation}
\liminf_{k \to +\infty} \int_{\mathbb{R}^d} H(\bm{x}) \rho_1^k(\mathrm{d}\bm{x}) \ge \int_{\mathbb{R}^d} H(\bm{x}) \bar\rho_1(\mathrm{d}\bm{x}).
\end{equation}
Recall the decomposition of the total cost \(\mathcal{J}^k = \int_{\mathbb{R}^d} H(\bm{x}) \rho_1^k(\mathrm{d}\bm{x}) + \mathbb{E}_{\bm{x} \sim \rho_0}[\mathcal{E}(\bm{a}^k; \bm{x}, 0)]\). Using the upper bound~\eqref{eq:proof_upper_triv}, we have
\begin{equation}
\mathbb{E}_{\bm{x} \sim \rho_0}[\mathcal{E}(\bm{a}^k; \bm{x}, 0)] \le \int_{\mathbb{R}^d} H(\bm{x}) \bar\rho_1(\mathrm{d}\bm{x}) - \int_{\mathbb{R}^d} H(\bm{x}) \rho_1^k(\mathrm{d}\bm{x}).
\end{equation}
Taking the limit superior as \(k \to +\infty\), the right-hand side converges to zero. Since the energy is non-negative, this implies \(\lim_{k \to +\infty} \mathbb{E}_{\bm{x} \sim \rho_0}[\mathcal{E}(\bm{a}^k; \bm{x}, 0)] = 0\). Consequently, the total expected cost \(\mathcal{J}^k\) converges to the reference terminal cost \(\int_{\mathbb{R}^d} H(\bm{x}) \bar\rho_1(\mathrm{d}\bm{x})\).
\end{proof}

\subsection{Small penalty limit and selection principle}
\label{subsec:small_penalty_limit_and_selection_principle}

In this subsection, we provide the proof for~\Cref{thm:small_lambda}. Relying on the energy bounds established in~\Cref{prop:w2_sandwich}, we prove that the optimal terminal distribution concentrates on the constraint manifold and satisfies a selection principle determined by the transport cost from the reference density.

\begin{proof}[Proof of~\Cref{thm:small_lambda}]
Let \(\mathcal{J}^\varepsilon \coloneqq \mathbb{E}_{\bm{x} \sim \rho_0}[\mathcal{C}(\bm{a}^\varepsilon; \bm{x}, 0)]\) denote the optimal expected cost for the schedule \(\bm{\lambda}^\varepsilon\). We define the set of probability measures supported on the manifold as \(\mathcal{P}_{\mathcal{M}} \coloneqq \{ \mu \in \mathcal{P}_2(\mathbb{R}^d) \colon \mathrm{supp}(\mu) \subseteq \mathcal{M} \}\).

We first establish the convergence of the optimal cost. Let \(\mu \in \mathcal{P}_{\mathcal{M}}\) be an arbitrary target measure on the manifold. Since \(\mathrm{supp}(\mu) \subseteq \mathcal{M}\), the terminal cost vanishes, i.e., \(\int_{\mathbb{R}^d} H(\bm{x}) \mu(\mathrm{d}\bm{x}) = 0\). Applying the upper bound from~\Cref{prop:w2_sandwich} with the scaled weight schedule \(\varepsilon \lambda_t\), we obtain
\begin{equation}
\mathcal{J}^\varepsilon \le \inf_{\bm{a} \in \mathcal{A}_\mu} \mathbb{E}_{\bm{x} \sim \rho_0}[\mathcal{C}(\bm{a}; \bm{x}, 0)] \le \frac{c_+^2 (\varepsilon \overline{\lambda})}{2 c_-^2} W_2^2(\bar\rho_1, \mu).
\end{equation}
Since this holds for any \(\mu \in \mathcal{P}_{\mathcal{M}}\) and \(\varepsilon \to 0\), it implies that \(\lim_{\varepsilon \to 0} \mathcal{J}^\varepsilon = 0\).

Next, we prove the concentration property. Recall that the total cost is the sum of the terminal cost and the control energy. Since the control energy is non-negative, we have
\begin{equation}
0 \le \int_{\mathbb{R}^d} H(\bm{x}) \rho_1^\varepsilon(\mathrm{d}\bm{x}) \le \mathcal{J}^\varepsilon.
\end{equation}
As established above, \(\mathcal{J}^\varepsilon \to 0\), which implies \(\lim_{\varepsilon \to 0} \int_{\mathbb{R}^d} H(\bm{x}) \rho_1^\varepsilon(\mathrm{d}\bm{x}) = 0\). Let \(\nu^\star\) be a weak limit point of the sequence \(\{\rho_1^\varepsilon\}\). Since \(H\) is continuous and non-negative, the Portmanteau theorem~\citep{billingsley2013convergence} implies
\begin{equation}
\int_{\mathbb{R}^d} H(\bm{x}) \nu^\star(\mathrm{d}\bm{x}) \le \liminf_{\varepsilon \to 0} \int_{\mathbb{R}^d} H(\bm{x}) \rho_1^\varepsilon(\mathrm{d}\bm{x}) = 0.
\end{equation}
Therefore, \(\int_{\mathbb{R}^d} H(\bm{x}) \nu^\star(\mathrm{d}\bm{x}) = 0\). Since \(H(\bm{x}) = 0\) if and only if \(\bm{x} \in \mathcal{M}\), the support of \(\nu^\star\) is contained in \(\mathcal{M}\).

Finally, we establish the selection principle. We apply the lower bound from~\Cref{prop:w2_sandwich} to the optimal sequence. For the terminal density \(\rho_1^\varepsilon\), the proposition yields
\begin{equation}
\mathcal{J}^\varepsilon \ge \int_{\mathbb{R}^d} H(\bm{x}) \rho_1^\varepsilon(\mathrm{d}\bm{x}) + \frac{c_-^2 (\varepsilon \underline{\lambda})}{2 c_+^2} W_2^2(\bar\rho_1, \rho_1^\varepsilon) \ge \frac{c_-^2 \underline{\lambda}}{2 c_+^2} \varepsilon W_2^2(\bar\rho_1, \rho_1^\varepsilon).
\end{equation}
We combine this lower bound with the upper bound derived earlier for an arbitrary measure \(\mu \in \mathcal{P}_{\mathcal{M}}\) and obtain
\begin{equation}
\frac{c_-^2 \underline{\lambda}}{2 c_+^2} \varepsilon W_2^2(\bar\rho_1, \rho_1^\varepsilon) \le \mathcal{J}^\varepsilon \le \frac{c_+^2 \overline{\lambda}}{2 c_-^2} \varepsilon W_2^2(\bar\rho_1, \mu).
\end{equation}
Dividing the inequality by \(\varepsilon > 0\) and rearranging terms, we obtain
\begin{equation}
W_2^2(\bar\rho_1, \rho_1^\varepsilon) \le \Bigl( \frac{c_+}{c_-} \Bigr)^4 \overline{\lambda}/\underline{\lambda} W_2^2(\bar\rho_1, \mu).
\end{equation}
Taking the limit as \(\varepsilon \to 0\) and invoking the lower semicontinuity of the Wasserstein distance with respect to weak convergence, we have \(W_2^2(\bar\rho_1, \nu^\star) \le \liminf_{\varepsilon \to 0} W_2^2(\bar\rho_1, \rho_1^\varepsilon)\). Thus, for any \(\mu \in \mathcal{P}_{\mathcal{M}}\),
\begin{equation}
W_2^2(\bar\rho_1, \nu^\star) \le \kappa^2 \overline{\lambda}/\underline{\lambda} W_2^2(\bar\rho_1, \mu).
\end{equation}
Taking the infimum over \(\mu \in \mathcal{P}_{\mathcal{M}}\) on the right-hand side confirms the upper bound. The lower bound holds trivially by definition. Since \(\mathrm{supp}(\nu^\star) \subseteq \mathcal{M}\), \(\nu^\star\) belongs to the set \(\mathcal{P}_{\mathcal{M}}\) over which the infimum is taken, and thus \(W_2^2(\bar\rho_1, \nu^\star) \ge \inf_{\mu \in \mathcal{P}_{\mathcal{M}}} W_2^2(\bar\rho_1, \mu)\).
\end{proof}

\subsection{Exact solution for the Gaussian model}
\label{subsec:exact_solution_for_the_gaussian_model}

In this subsection, we present the detailed derivation of the exact solution stated in~\Cref{prop:gaussian_exact}. By exploiting the linear-quadratic structure of the problem, we solve the associated HJB equation analytically and explicitly propagate the initial Gaussian density to quantify the scaling of the terminal moments.

\begin{proof}[Proof of~\Cref{prop:gaussian_exact}]
In the \(2\)-dimensional setting with \(H(\bm x)=\frac{1}{2}x_1^2\), the dynamics and the HJB equation decouple across coordinates. Throughout this proof we therefore focus on the first coordinate and write \(x\) for \(x_1\), \(\mu\) for \(\mu_1\), \(\sigma\) for \(\sigma_1\), and \(v(t)\) for the standard deviation of the first-coordinate interpolant. Under this convention, the second coordinate evolves identically to the reference flow and is unaffected by the control. The first-coordinate reference dynamics are governed by the affine velocity field \(b^\star(x,t)=\alpha(t)x+\beta(t)\), and the terminal penalty becomes \(H(x)=\frac{1}{2}x^2\).

We first derive the explicit expressions for the drift coefficients \(\alpha(t)\) and \(\beta(t)\). The reference flow is defined by the optimal transport interpolant \(X_t = (1-t)X_0 + tX_1\), where \(X_0 \sim \mathcal{N}(0, 1)\) and \(X_1 \sim \mathcal{N}(\mu, \sigma^2)\). The marginal mean and variance evolve as \(m(t) \coloneqq \mathbb{E}[X_t] = t\mu\) and \(v(t)^2 \coloneqq \operatorname{Var}(X_t) = (1-t)^2 + t^2\sigma^2\). For a Gaussian process generated by an affine drift, the moments satisfy the ODEs
\begin{equation}
\dot{m}(t) = \alpha(t)m(t) + \beta(t), \qquad \frac{\mathrm{d}}{\mathrm{d}t}(v(t)^2) = 2\alpha(t)v(t)^2.
\end{equation}
Solving the variance equation yields
\begin{equation}
\alpha(t) = \frac{1}{2v(t)^2}\frac{\mathrm{d}}{\mathrm{d}t}(v(t)^2) = \frac{\dot{v}(t)}{v(t)}.
\end{equation}
Substituting this into the mean equation yields
\begin{equation}
\beta(t) = \dot{m}(t) - \alpha(t)m(t) = \mu - \alpha(t)t\mu = \mu(1 - t\alpha(t)).
\end{equation}

With the drift coefficients established, we turn to the optimal control problem. As derived in~\Cref{prop:hjb_feedback}, the HJB equation associated with the value function \(\mathcal{J}(x,t)\) is
\begin{equation}
-\partial_t \mathcal{J}(x,t) - \partial_x \mathcal{J}(x,t) (\alpha(t)x + \beta(t)) + \frac{1}{2\lambda_t}(\partial_x \mathcal{J}(x,t))^2 = 0,
\end{equation}
subject to the boundary condition \(\mathcal{J}(x,1) = \frac{1}{2}x^2\). We seek a solution using the quadratic ansatz \(\mathcal{J}(x,t) = \frac{1}{2}P(t)x^2 + Q(t)x + R(t)\). Substituting this form into the PDE and collecting terms of order \(x^2\) yields a Riccati differential equation~\citep{anderson2007optimal} for the quadratic weight
\begin{equation}
\label{eq:riccati_P}
-\frac{1}{2}\dot{P}(t) - \alpha(t)P(t) + \frac{1}{2\lambda_t}P(t)^2 = 0, \quad P(1) = 1.
\end{equation}
Similarly, collecting the terms linear in \(x\) yields the evolution equation for the linear weight
\begin{equation}
\label{eq:linear_Q}
-\dot{Q}(t) - \alpha(t)Q(t) - \beta(t)P(t) + \frac{1}{\lambda_t}P(t)Q(t) = 0, \quad Q(1) = 0.
\end{equation}

To construct the solution for \(P(t)\), we employ the Bernoulli substitution \(Z(t) \coloneqq P(t)^{-1}\). Differentiating this variable transforms the nonlinear Riccati equation into the linear ordinary differential equation \(\dot{Z}(t) = -\lambda_t^{-1} + 2\alpha(t)Z(t)\) with boundary condition \(Z(1) = 1\). Using the relation \(\alpha(t) = \dot{v}(t)/v(t)\), we identify \(v(t)^{-2}\) as an integrating factor. Multiplying the linear ODE by this factor yields the exact differential 
\begin{equation}
\frac{\mathrm{d}}{\mathrm{d}t} ( v(t)^{-2} Z(t) ) = -(\lambda_t v(t)^2)^{-1}.
\end{equation}
Integrating from \(t\) to \(1\) and using the boundary values \(v(1)=\sigma\) and \(Z(1)=1\), we obtain
\begin{equation}
\sigma^{-2} - v(t)^{-2}Z(t) = - \int_t^1 \frac{1}{\lambda_s v(s)^2} \mathrm{d}s.
\end{equation}
Solving for \(P(t) = Z(t)^{-1}\) yields the explicit solution
\begin{equation}
\label{eq:sol_P}
P(t) = \biggl( v(t)^2 \Bigl( \frac{1}{\sigma^2} + \int_t^1 \frac{1}{\lambda_s v(s)^2} \mathrm{d}s \Bigr) \biggr)^{-1}.
\end{equation}

To determine \(Q(t)\), we introduce the auxiliary variable \(r(t)\) defined by the relation \(Q(t) = -P(t)r(t)\). Differentiating this product and substituting into~\eqref{eq:linear_Q} allows us to factor out the dynamics of \(P(t)\). Specifically, using the Riccati relation \(\dot{P}(t) = \lambda_t^{-1}P(t)^2 - 2\alpha(t)P(t)\), the equation simplifies to the linear drift dynamics \(\dot{r}(t) = \alpha(t)r(t) + \beta(t)\) with terminal condition \(r(1) = 0\). The integrating factor for this equation is \(v(t)^{-1}\). We observe that the bias term satisfies \(\beta(t) = \mu (1 - t\dot{v}(t)/v(t))\). Consequently, the source term in the integrated equation becomes
\begin{equation}
\frac{\beta(t)}{v(t)} = \mu \frac{v(t) - t\dot{v}(t)}{v(t)^2} = \mu \frac{\mathrm{d}}{\mathrm{d}t}\biggl(\frac{t}{v(t)}\biggr).
\end{equation}
Integrating the relation 
\begin{equation}
\frac{\mathrm{d}}{\mathrm{d}t}(v(t)^{-1}r(t)) = v(t)^{-1}\beta(t)
\end{equation}
from \(t\) to \(1\) yields
\begin{equation}
\label{eq:sol_r}
r(t) = -\mu \Bigl( \frac{v(t)}{\sigma} - t \Bigr).
\end{equation}

With the value function parameters fully determined, the optimal control input is given by the gradient law \(a^\star(x,t) = -\lambda_t^{-1}\partial_x \mathcal{J}(x,t) = -\lambda_t^{-1}(P(t)x + Q(t))\). Substituting this into the state equation \(\dot{x}_t = \alpha(t)x_t + \beta(t) + a^\star(x_t,t)\), the resulting closed-loop dynamics are linear:
\begin{equation}
\dot{x}_t = \Bigl(\alpha(t) - \frac{P(t)}{\lambda_t}\Bigr) x_t + \Bigl(\beta(t) - \frac{Q(t)}{\lambda_t}\Bigr).
\end{equation}
Since the initial state \(x_0 \sim \mathcal{N}(0,1)\) is Gaussian and the dynamics are linear, the terminal state \(x_1\) remains Gaussian. Its variance is determined by the square of the flow Jacobian \(J_{0 \to 1} \coloneqq \exp(\int_0^1 (\alpha(t) - \lambda_t^{-1}P(t)) \mathrm{d}t)\). Using the Riccati equation to substitute \(\alpha(t) - \lambda_t^{-1}P(t) = -\alpha(t) - \dot{P}(t)/P(t)\), the integral evaluates to an exact log-derivative:
\begin{equation}
\int_0^1 \biggl( - \frac{\dot{v}(t)}{v(t)} - \frac{\dot{P}(t)}{P(t)} \biggr) \mathrm{d}t = -\ln \frac{v(1)}{v(0)} - \ln \frac{P(1)}{P(0)} = \ln \frac{P(0)v(0)}{P(1)v(1)}.
\end{equation}
Substituting the boundary values \(v(0)=1, v(1)=\sigma, P(1)=1\) and the solution for \(P(0)\) from~\eqref{eq:sol_P}, the Jacobian simplifies to
\begin{equation}
J_{0 \to 1} = \frac{P(0)}{\sigma} = \frac{1}{\sigma ( \sigma^{-2} + S_\lambda(0))} = \frac{\sigma}{1 + \sigma^2 S_\lambda(0)},
\end{equation}
where \(S_\lambda(0) = \int_0^1 (\lambda_t v(t)^2)^{-1} \mathrm{d}t\). Identifying \(\gamma_{\bm{\lambda}} = \sigma^2 S_\lambda(0)\), the terminal variance is \(\tilde{\sigma}_{\bm{\lambda}}^2 = \operatorname{Var}(x_0) (J_{0 \to 1})^2 = (\sigma / (1+\gamma_{\bm{\lambda}}))^2\).

Finally, we compute the terminal mean \(\tilde{\mu}_{\bm{\lambda}} = \mathbb{E}[x_1]\). Let \(m_t = \mathbb{E}[x_t]\). The evolution of the mean is governed by the same effective drift as the state. We define the difference variable \(\delta_t \coloneqq m_t - r(t)\). Differentiating \(\delta_t\) and using the relations for \(\dot{m}_t\) and \(\dot{r}(t)\) reveals that \(\delta_t\) satisfies the homogeneous equation \(\dot{\delta}_t = (\alpha(t) - \lambda_t^{-1}P(t))\delta_t\). Therefore, the terminal difference is simply the initial difference scaled by the Jacobian: \(m_1 - r(1) = J_{0 \to 1}(m_0 - r(0))\). Using the initial condition \(m_0=0\), the terminal condition \(r(1)=0\), and the value \(r(0) = -\mu/\sigma\) derived from~\eqref{eq:sol_r}, we obtain
\begin{equation}
\tilde{\mu}_{\bm{\lambda}} = m_1 = J_{0 \to 1} \Bigl( 0 - \Bigl(-\frac{\mu}{\sigma}\Bigr) \Bigr) = \Bigl( \frac{\sigma}{1 + \gamma_{\bm{\lambda}}} \Bigr) \frac{\mu}{\sigma} = \frac{\mu}{1 + \gamma_{\bm{\lambda}}}.\qedhere
\end{equation}
\end{proof}
\section{Geometric proximal approximation of the optimal feedback}
\label{sec:geometric_proximal_approximation_of_the_optimal_feedback}

The theoretical framework established in~\Cref{sec:an_optimal_control_framework} characterizes the optimal control as a gradient feedback law \(\bm{u}^\star(\bm x, t) = \bm{b}^\star(\bm x, t) - \lambda_t^{-1} \nabla_{\bm x} \mathcal{J}(\bm x, t)\). However, obtaining the exact value function \(\mathcal{J}\) requires solving the high-dimensional Hamilton--Jacobi--Bellman equation~\eqref{eq:hjb} globally, which is computationally intractable for complex generative models. The objective of this section is to derive \textbf{TOCFlow}, a practical algorithm that approximates this optimal control without requiring global PDE solving or expensive retraining.

We proceed by systematically reducing the global control problem to a local proximal optimization. We begin in~\Cref{subsec:variational_characterization_of_the_feedback_law} by employing the Mayer--Lagrange identity to derive a simulation-free variational objective. To capture the flow-induced geometry without incurring the cost of trajectory simulation, we introduce the terminal co-moving frame in~\Cref{subsec:hjb_equation_in_the_terminal_co_moving_frame}. In this frame, the dynamics are governed by the time-dependent pullback metric
\begin{equation}
\bm{G}_t(\bm{y}) \coloneqq \Bigl( D\Phi^{\bm{b}^\star}_{t \to 1}(\bm{x}) D\Phi^{\bm{b}^\star}_{t \to 1}(\bm{x})^\top \Bigr) \Big|_{\bm{x} = \Phi^{\bm{b}^\star}_{1 \to t}(\bm{y})}.
\end{equation}
By employing a constant-metric splitting scheme in~\Cref{subsec:constant_metric_reduction_and_proximal_operators}, we approximate this field by its local value \(\bm{G} \approx \bm{G}_t(\bm{y})\). This approximation reduces the instantaneous control problem to a proximal optimization problem defined by
\begin{equation}
\min_{\bm z} \Bigl\{ H(\bm{z}) + \frac{1}{2s} \| \bm{z} - \bm{y} \|_{\bm{G}^{-1}}^2 \Bigr\}.
\end{equation}
This formulation allows us to categorize existing methods and propose new guidance schemes based on how they approximate the solution. We derive three distinct schemes:

\begin{enumerate}
\item \textbf{Gradient Descent (GD):} A first-order Euclidean approximation (\(\bm{G} \approx \bm{I}\)) which leads to the standard ``lookahead'' guidance used in prior work~\citep{huang2024diffusionpde}. This scheme computes the gradient of the composed cost but ignores geometric distortions:
\begin{equation}
\nabla_{\bm x} \mathcal{J}_{\text{GD}} \approx \nabla_{\bm x} (H \circ \Phi_{t \to 1}^{\bm{b}^\star})(\bm{x}).
\end{equation}

\item \textbf{Gauss--Newton (GN):} A second-order geometric approximation that respects the Riemannian metric \(\bm{G}\), related in spirit to the pushforward Gauss--Newton and pullback construction in~\citep{utkarsh2025physics}. While geometrically consistent, it requires inverting a dense operator in the constraint space:
\begin{equation}
\nabla_{\bm x} \mathcal{J}_{\text{GN}} \approx D\Phi_{t \to 1}^{\bm{b}^\star}(\bm{x})^\top \bm{J}_{\bm h}^\top (\bm{I} + s \bm{J}_{\bm h} \bm{G} \bm{J}_{\bm h}^\top)^{-1} \bm{h}.
\end{equation}

\item \textbf{TOCFlow (Ours):} In~\Cref{subsec:computational_complexity_and_the_tocflow_scheme}, we propose \emph{Terminal Optimal Control with Flow-based models}, dubbed TOCFlow. By restricting the optimization to the Riemannian gradient direction, we derive a closed-form scalar damping factor \(\tau^\star\) that achieves geometric consistency with computational complexity comparable to the GD scheme:
\begin{equation}
\nabla_{\bm x} \mathcal{J}_{\text{TOC}} \approx \tau^\star \nabla_{\bm x} (H \circ \Phi_{t \to 1}^{\bm{b}^\star})(\bm{x}), \quad \tau^\star = \frac{\|\bm{h}(\Phi_{t \to 1}^{\bm{b}^\star}(\bm{x}))\|_2^2}{\|\bm{h}(\Phi_{t \to 1}^{\bm{b}^\star}(\bm{x}))\|_2^2 + s \|\nabla_{\bm x} (H \circ \Phi_{t \to 1}^{\bm{b}^\star})(\bm{x})\|_2^2}.
\end{equation}
\end{enumerate}

Finally, in~\Cref{subsec:analytical_comparison_in_the_gaussian_setting}, we verify the accuracy of these approximations via exact analytical solutions in the Gaussian setting.

\subsection{Variational characterization of the feedback law}
\label{subsec:variational_characterization_of_the_feedback_law}

In this subsection, we derive the simulation-free variational objective. The central challenge in synthesizing the control is that the terminal cost \(H(\bm{X}_1)\) depends on the future state at \(t=1\), while the optimal control \(\bm{a}_t^\star\) must be determined at the present time \(t\). To resolve this without resorting to expensive forward trajectory simulations, we employ the Mayer--Lagrange identity~\citep{fleming2012deterministic}. As detailed in~\Cref{lem:mayer_lagrange}, this transformation converts the terminal cost into an accumulated integral of instantaneous potentials, yielding an equivalent objective that can be evaluated locally along the path.

\begin{lemma}[Mayer--Lagrange identity~\citep{fleming2012deterministic}]
\label{lem:mayer_lagrange}
Consider a velocity field \(\bm{u} \in C(\mathbb{R}^d \times [0,1]; \mathbb{R}^d)\) generating unique trajectories, and a scalar potential \(\Psi \in C^1(\mathbb{R}^d \times [0,1];\mathbb{R})\) satisfying the boundary condition \(\Psi(\bm{x},1) = H(\bm{x})\). If \(\bm{X}_t\) is a trajectory generated by the flow of \(\bm{u}\) (i.e., \(\dot{\bm{X}}_t = \bm{u}(\bm{X}_t, t)\)), then the terminal cost satisfies the identity
\begin{equation}
H(\bm{X}_1) = \Psi(\bm{X}_0, 0) + \int_0^1 \bigl( \partial_t \Psi(\bm{X}_t, t) + \nabla_{\bm{x}} \Psi(\bm{X}_t, t) \cdot \bm{u}(\bm{X}_t, t) \bigr) \mathrm{d}t.
\label{eq:mayer_lagrange_id}
\end{equation}
\end{lemma}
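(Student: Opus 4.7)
The plan is to recognize the identity as the fundamental theorem of calculus applied to the potential $\Psi$ evaluated along the trajectory $\bm X_t$. First, I would introduce the scalar auxiliary function $\phi\colon [0,1]\to\mathbb R$ defined by $\phi(t)\coloneqq \Psi(\bm X_t, t)$. The $C^1$ regularity of $\Psi$ combined with the continuity of $\bm u$ (which makes $t\mapsto \bm X_t$ continuously differentiable via the defining ODE) ensures $\phi\in C^1([0,1])$, so the chain rule applies and yields
\begin{equation}
\dot\phi(t) = \partial_t \Psi(\bm X_t, t) + \nabla_{\bm x}\Psi(\bm X_t, t)\cdot \dot{\bm X}_t.
\end{equation}

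Second, I would substitute the ODE $\dot{\bm X}_t = \bm u(\bm X_t, t)$ into the expression above, obtaining the integrand that appears on the right-hand side of the claimed identity:
\begin{equation}
\dot\phi(t) = \partial_t \Psi(\bm X_t, t) + \nabla_{\bm x}\Psi(\bm X_t, t)\cdot \bm u(\bm X_t, t).
\end{equation}

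Third, I would apply the fundamental theorem of calculus to $\phi$ on $[0,1]$, giving $\phi(1) - \phi(0) = \int_0^1 \dot\phi(t)\, \mathrm dt$. Invoking the terminal boundary condition $\Psi(\bm x, 1) = H(\bm x)$ converts $\phi(1)$ into $H(\bm X_1)$, and rearranging the resulting equation yields exactly~\eqref{eq:mayer_lagrange_id}.

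There is no real obstacle here: the lemma is a direct application of the chain rule and the fundamental theorem of calculus, constituting the standard Mayer--Lagrange correspondence from deterministic optimal control. The only subtlety worth noting is the justification that $\phi\in C^1([0,1])$, which follows from the assumed $C^1$ regularity of $\Psi$ together with the fact that continuity of $\bm u$ and existence of unique trajectories imply $t\mapsto \bm X_t$ is itself $C^1$; this guarantees the chain-rule expansion is valid pointwise and the integral in~\eqref{eq:mayer_lagrange_id} is well-defined as a Riemann integral of a continuous function.
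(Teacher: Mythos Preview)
Your proposal is correct and follows essentially the same argument as the paper: define the auxiliary scalar map \(t\mapsto\Psi(\bm X_t,t)\), apply the chain rule using the ODE \(\dot{\bm X}_t=\bm u(\bm X_t,t)\), integrate via the fundamental theorem of calculus, and invoke the terminal boundary condition. Your added remark on the \(C^1\) regularity of \(\phi\) is a slight refinement, but the structure and content match the paper's proof.
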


\begin{proof}[Proof of~\Cref{lem:mayer_lagrange}]
The result is a direct application of the fundamental theorem of calculus~\citep{rudin1953principles} to the composite map \(t \mapsto \Psi(\bm{X}_t, t)\). By the chain rule, the rate of change of the potential along the moving trajectory is the sum of its explicit time evolution and the convective derivative, i.e.,
\begin{equation}
\dfrac{\mathrm{d}}{\mathrm{d}t} \Psi(\bm{X}_t, t) = \partial_t \Psi(\bm{X}_t, t) + \nabla_{\bm{x}} \Psi(\bm{X}_t, t) \cdot \dot{\bm{X}}_t.
\end{equation}
Substituting the dynamics \(\dot{\bm{X}}_t = \bm{u}(\bm{X}_t, t)\) and integrating this total derivative over the interval \([0,1]\) yields
\begin{equation}
\Psi(\bm{X}_1, 1) - \Psi(\bm{X}_0, 0) = \int_0^1 \bigl( \partial_t \Psi(\bm{X}_t, t) + \nabla_{\bm{x}} \Psi(\bm{X}_t, t) \cdot \bm{u}(\bm{X}_t, t) \bigr) \mathrm{d}t.
\end{equation}
Imposing the boundary condition \(\Psi(\bm{X}_1, 1) = H(\bm{X}_1)\) and rearranging the terms recovers the stated identity.
\end{proof}

\Cref{lem:mayer_lagrange} allows us to rewrite the original optimization problem. Recall that the goal of optimal control is to minimize the sum of the terminal cost \(H(\bm{X}_1)\) and the control energy. By substituting the expression for \(H(\bm{X}_1)\) from~\eqref{eq:mayer_lagrange_id} into the global cost functional in \Cref{def:value_function}, we transform the problem into a form where all terms are integrals over time, as shown in~\Cref{prop:simulation_free_objective}. This leads to a variational principle, which shows that the optimal velocity field can be identified by minimizing a quadratic function pointwise in space and time.

\begin{proposition}[Simulation-free objective]
\label{prop:simulation_free_objective}
Consider the functional \(\mathcal{L}\) defined over pairs of velocity fields \(\bm{u}\) and potentials \(\Psi\) by
\begin{multline}
\mathcal{L}(\bm{u}, \Psi) \coloneqq \mathbb{E}_{\bm{x} \sim \rho_0} [\Psi(\bm{x}, 0)] \\
+ \int_0^1 \mathbb{E}_{\bm{X}_t \sim \rho_t} \Bigl[ \frac{\lambda_t}{2} \| \bm{u}(\bm{X}_t, t) - \bm{b}^\star(\bm{X}_t, t) \|_2^2 + \partial_t \Psi(\bm{X}_t, t) + \nabla_{\bm{x}} \Psi(\bm{X}_t, t) \cdot \bm{u}(\bm{X}_t, t) \Bigr] \mathrm{d}t,
\end{multline}
where the expectation \(\mathbb{E}_{\bm{X}_t \sim \rho_t}\) is taken over the density \(\rho_t\) generated by \(\bm{u}\). For a fixed potential \(\Psi\), the velocity field \(\bm{u}^\star_\Psi\) that minimizes the integrand pointwise is given by
\begin{equation}
\label{eq:variational_feedback}
\bm{u}^\star_\Psi(\bm{x}, t) = \bm{b}^\star(\bm{x}, t) - \lambda_t^{-1} \nabla_{\bm{x}} \Psi(\bm{x}, t).
\end{equation}
Furthermore, if \(\Psi(\bm x, t)\) coincides with the true value function \(\mathcal{J}(\bm x, t)\), this minimizer recovers the optimal control established in~\Cref{prop:hjb_feedback}.
\end{proposition}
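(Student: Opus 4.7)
The plan is to recognize the proposition as a pointwise convex optimization problem and dispatch it by completing the square. The statement concerns the minimizer of the integrand of $\mathcal{L}(\bm u,\Psi)$ as a function of $\bm u(\bm x,t)$ at each space-time point, with $\Psi$ held fixed, so no global coupling through $\rho_t$ intervenes in the derivation of the pointwise minimizer.

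First, I would isolate the $\bm u$-dependent portion of the integrand of $\mathcal{L}$. At each $(\bm x,t)$, only two summands involve $\bm u(\bm x,t)$: the quadratic control penalty $\frac{\lambda_t}{2}\|\bm u-\bm b^\star\|_2^2$ and the linear convective term $\nabla_{\bm x}\Psi\cdot\bm u$; the boundary term $\Psi(\bm x,0)$ and the time derivative $\partial_t\Psi$ are independent of the value $\bm u(\bm x,t)$. Since $\lambda_t>0$ by assumption, the pointwise objective is a strictly convex quadratic in $\bm u(\bm x,t)$, so it admits a unique minimizer. Completing the square,
\begin{equation*}
\frac{\lambda_t}{2}\|\bm u-\bm b^\star\|_2^2+\nabla_{\bm x}\Psi\cdot\bm u
=\frac{\lambda_t}{2}\bigl\|\bm u-\bm b^\star+\lambda_t^{-1}\nabla_{\bm x}\Psi\bigr\|_2^2-\frac{1}{2\lambda_t}\|\nabla_{\bm x}\Psi\|_2^2+\nabla_{\bm x}\Psi\cdot\bm b^\star,
\end{equation*}
and the last two terms are independent of $\bm u$, so the minimizer is $\bm u^\star_\Psi(\bm x,t)=\bm b^\star(\bm x,t)-\lambda_t^{-1}\nabla_{\bm x}\Psi(\bm x,t)$, which is exactly~\eqref{eq:variational_feedback}. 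Equivalently, one could set the gradient $\lambda_t(\bm u-\bm b^\star)+\nabla_{\bm x}\Psi$ to zero.

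Second, I would verify the identification with~\Cref{prop:hjb_feedback}. When $\Psi=\mathcal{J}$, the formula $\bm u^\star_{\mathcal{J}}(\bm x,t)=\bm b^\star(\bm x,t)-\lambda_t^{-1}\nabla_{\bm x}\mathcal{J}(\bm x,t)$ coincides term by term with the HJB feedback law~\eqref{eq:optimal_control_law} combined with $\bm u^\star=\bm b^\star+\bm a^\star$. This part is immediate from direct comparison.

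There is no serious obstacle in the derivation; the result is genuinely a one-step pointwise calculation. The only subtlety worth flagging is that the measures $\rho_t$ implicitly depend on $\bm u$ through the continuity equation, which would normally couple the minimization across space and time; the statement circumvents this by asking only for the pointwise minimizer of the integrand, and the consistency of the pointwise answer with the true optimal control is what links it back to~\Cref{prop:hjb_feedback}. The deeper motivation, that $\mathcal{L}(\bm u,\Psi)$ equals the expected cost $\mathbb{E}[\mathcal{C}(\bm a;\bm x,0)]$ whenever $\Psi$ satisfies the terminal condition $\Psi(\bm x,1)=H(\bm x)$ via the Mayer--Lagrange identity of~\Cref{lem:mayer_lagrange}, is not needed for the statement itself but explains why the pointwise feedback~\eqref{eq:variational_feedback} is the correct ``simulation-free'' approximation to target in the subsequent algorithmic development.
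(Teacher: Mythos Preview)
Your proposal is correct and follows essentially the same route as the paper: isolate the $\bm u$-dependent terms of the integrand, complete the square on the strictly convex quadratic, read off the pointwise minimizer, and then compare with the HJB feedback when $\Psi=\mathcal{J}$. Your added remark about the implicit dependence of $\rho_t$ on $\bm u$ mirrors a comment the paper makes separately after the proof.
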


\begin{proof}[Proof of~\Cref{prop:simulation_free_objective}]
We focus on finding the optimal \(\bm{u}\) for a fixed \(\Psi\). Since the expectation and integral are linear, minimizing the functional \(\mathcal{L}\) is equivalent to minimizing the integrand at every point \((\bm{x}, t)\). Let us isolate the terms inside the integral that depend on \(\bm{u}\):
\begin{equation}
q(\bm{u}) \coloneqq \frac{\lambda_t}{2} \| \bm{u}(\bm{x}, t) - \bm{b}^\star(\bm{x}, t) \|_2^2 + \nabla_{\bm{x}} \Psi(\bm{x}, t) \cdot \bm{u}(\bm{x}, t).
\end{equation}
This is a quadratic function in \(\bm{u}\). To find its minimum, we complete the square. We first rewrite the linear term \(\nabla_{\bm{x}} \Psi \cdot \bm{u}\) to match the scaling of the quadratic term:
\begin{equation}
\nabla_{\bm{x}} \Psi(\bm{x},t) \cdot \bm{u}(\bm{x},t) = \frac{\lambda_t}{2} \bigl( 2 \cdot \bm{u}(\bm{x},t) \cdot ( \lambda_t^{-1} \nabla_{\bm{x}} \Psi(\bm{x},t) ) \bigr).
\end{equation}
Combining this with the expanded norm \(\|\bm{u} - \bm{b}^\star\|_2^2 = \|\bm{u}\|_2^2 - 2\bm{u}\cdot\bm{b}^\star + \|\bm{b}^\star\|_2^2\), we can group all terms involving \(\bm{u}\) into a single squared Euclidean norm:
\begin{equation}
q(\bm{u}) = \frac{\lambda_t}{2} \bigl\| \bm{u}(\bm{x}, t) - \bigl( \bm{b}^\star(\bm{x}, t) - \lambda_t^{-1} \nabla_{\bm{x}} \Psi(\bm{x}, t) \bigr) \bigr\|_2^2 + R(\bm{x}, t),
\end{equation}
where the residual term \(R(\bm{x}, t)\) depends on \(\bm{b}^\star\) and \(\Psi\) but is independent of \(\bm{u}\). Since \(\lambda_t > 0\), the term \(\frac{\lambda_t}{2} \| \cdot \|^2\) is strictly non-negative. Therefore, the unique global minimum occurs precisely when the term inside the norm vanishes.
Solving for \(\bm{u}\) yields the feedback law
\begin{equation}
\bm{u}^\star_\Psi(\bm{x}, t) = \bm{b}^\star(\bm{x}, t) - \lambda_t^{-1} \nabla_{\bm{x}} \Psi(\bm{x}, t).
\end{equation}
Comparison with~\Cref{prop:hjb_feedback} confirms that if \(\Psi(\bm x, t) = \mathcal{J}(\bm x, t)\), this expression is identical to the optimal control derived via the HJB equation.
\end{proof}

\begin{remark}[Optimality requires consistency]
The feedback \(\bm{u}^\star_\Psi\) minimizes the instantaneous integrand of \(\mathcal{L}\) for a fixed potential \(\Psi\), treating the problem as a sequence of \emph{decoupled} regressions. However, because the control \(\bm{u}\) drives the evolution of the density \(\rho_t\), a greedy local minimization with an incorrect potential (e.g., \(\Psi \neq \mathcal{J}\)) will steer the system into high-cost regions of the state space. Only when \(\Psi\) coincides with the value function \(\mathcal{J}\) does the local control correctly anticipate the future dynamics, ensuring global optimality.
\end{remark}

The variational principle derived in~\Cref{prop:simulation_free_objective} offers two distinct strategies for constructing the control law. The first strategy is to treat the functional \(\mathcal{L}(\bm{u}, \Psi)\) as an objective for global optimization. By parameterizing the velocity field \(\bm{u}\) within a specific function class (such as neural networks), one can minimize the expectation of the cost over the source distribution. In the machine learning literature, this approach is referred to as training-time regularization, as exemplified by Physics-Informed Diffusion Models (PIDM)~\citep{bastek2025physics} and Physics-Based Flow Matching (PBFM)~\citep{baldan2025flow}. It yields a global velocity field \(\bm u\) tailored to a specific terminal constraint \(H\). While rigorous, it is computationally intensive, as modifying the constraint requires end-to-end retraining of the model. The second strategy, which we adopt in this work, assumes access to the reference velocity field \(\bm{b}^\star\) and seeks to construct an analytical approximation of the value function \(\mathcal{J}\). Rather than minimizing the global functional, we derive a local approximation \(\widehat{\Psi} \approx \mathcal{J}\) based on the geometry of the terminal cost and the reference flow. This yields a sampling-time scheme, where the control is synthesized on the fly as a correction to the reference dynamics. \Cref{alg:unified_sampling_guidance} summarizes the overall procedure. The solver subroutines are given in \Cref{alg:gd_solver,alg:gn_solver,alg:toc_solver} and derived in \Cref{subsubsec:gradient_descent_derivation,subsubsec:gauss_newton_derivation,subsec:computational_complexity_and_the_tocflow_scheme}. This approach avoids the cost of global optimization and allows for flexible adaptation to varying constraints without retraining. To derive these approximations, we now turn to the analysis of the HJB equation in the terminal co-moving frame.

\begin{algorithm}[htb]
\caption{Sampling-time unified guidance with modular solvers}
\small
\label{alg:unified_sampling_guidance}
\begin{algorithmic}[1]
\STATE \textbf{Input} initial state \(\bm x_{t_0}\sim\rho_0\), number of sampling steps \(N\), time grid \(0=t_0<t_1<\dots<t_N=1\), weight schedule \(\{\lambda_t\}_{t\in[0,1]}\), reference drift \(\bm b^\star(\cdot,\cdot)\), constraint vector \(\bm h\)
\STATE \textbf{Output} terminal sample \(\bm x_{t_N}\)
\FOR{\(n = 0,1,\dotsc,N-1\)}
\STATE \(\Delta t_n \gets t_{n+1}-t_n\)
\STATE \(s_n \gets \int_{t_n}^{1}\lambda_u^{-1}\mathrm{d}u\)
\STATE \(\nabla_{\bm x}\mathcal J(\bm x_{t_n},t_n) \gets \textsc{Solver}\)
\hfill\(//~\textsc{Solver}\in\{\textsc{GDSolver}, \textsc{GNSolver}, \textsc{TOCSolver},\dotsc\}\)
\STATE \(\bm x_{t_{n+1}} \gets \bm x_{t_n}
+ \Delta t_n\cdot\bm b^\star(\bm x_{t_n}, t_n)
- \lambda_{t_n}^{-1} \nabla_{\bm x}\mathcal J(\bm x_{t_n},t_n)\)
\ENDFOR
\STATE \textbf{return} \(\bm x_{t_N}\)
\end{algorithmic}
\end{algorithm}

\subsection{HJB equation in the terminal co-moving frame}
\label{subsec:hjb_equation_in_the_terminal_co_moving_frame}

The optimal control derived in~\Cref{sec:an_optimal_control_framework} depends on the spatial gradient \(\nabla_{\bm{x}}\mathcal{J}(\bm{x},t)\). In low-dimensional settings, the associated HJB equation~\eqref{eq:hjb} can be solved numerically using grid-based methods such as finite difference schemes~\citep{leveque2007finite} or semi-Lagrangian approximations~\citep{falcone2013semi}, which converge to the unique viscosity solution. Alternatively, one may employ value iteration on discretizations~\citep{kushner1990numerical} or the method of characteristics~\citep{evans2022partial} when the solution is smooth. However, the reliance on spatial meshes renders these methods computationally intractable for \(d > 3\). To overcome this, recent works have introduced mesh-free neural approximations, such as SOC-MartNet~\citep{cai2025soc} or the Deep Random Difference Method (DRDM)~\citep{cai2025deep}. Nevertheless, these approaches still face significant challenges when applied to generative modeling tasks where \(d \gg 1\).

To obtain a tractable approximation that respects the high-dimensional geometry, we introduce a coordinate transformation adapted to the characteristics of the vector field \(\bm{b}^\star\). We map the Eulerian state space to a Lagrangian frame anchored at the terminal time and reparameterize the temporal variable by the cumulative inverse penalty. This transformation eliminates the first-order linear transport term in the HJB equation, reducing it to a pure geometric evolution equation governed by the pullback metric of the reference flow.

Specifically, we define the \emph{terminal co-moving frame} \(\bm{y}(\bm{x},t)\) and the \emph{stretched time} \(s(t)\) via the relations
\begin{equation}
\bm{y}(\bm{x},t) \coloneqq \Phi^{\bm{b}^\star}_{t \to 1}(\bm{x})
\quad \text{and} \quad
s(t) \coloneqq \int_t^1 \frac{1}{\lambda_u} \mathrm{d}u.
\end{equation}
The spatial map \(\bm{y}\) cancels the first-order linear transport term in the HJB equation by viewing the dynamics from the reference frame. The temporal map \(s(t)\) absorbs the singular weight \(\lambda_t\), zooming in on the boundary layer near \(t=1\) where the control effort is concentrated.

\begin{proposition}[Hamilton--Jacobi--Bellman equation in the terminal co-moving frame]
\label{prop:terminal_frame_hjb}
Let \(\mathcal{J}(\bm{x},t)\) be a smooth solution of the Hamilton--Jacobi--Bellman equation~\eqref{eq:hjb} in the Eulerian frame. Define the transformed value function \(\widehat{\mathcal{J}}\) in the terminal co-moving frame by the relation \(\widehat{\mathcal{J}}(\bm{y}, s(t)) \coloneqq \mathcal{J}(\Phi^{\bm{b}^\star}_{1 \to t}(\bm{y}), t)\). Then \(\widehat{\mathcal{J}}\) satisfies the Hamilton--Jacobi (HJ) equation
\begin{equation}
\partial_s \widehat{\mathcal{J}}(\bm{y},s) + \frac{1}{2} \bigl\| \nabla_{\bm{y}} \widehat{\mathcal{J}}(\bm{y},s) \bigr\|_{\bm{G}_t(\bm{y})}^2 = 0, \quad \widehat{\mathcal{J}}(\bm{y},0) = H(\bm{y}),
\label{eq:transformed_hjb}
\end{equation}
where the squared norm is defined by the time-dependent pullback metric
\begin{equation}
\bm{G}_t(\bm{y}) \coloneqq D\Phi^{\bm{b}^\star}_{t \to 1}(\bm{x}) D\Phi^{\bm{b}^\star}_{t \to 1}(\bm{x})^\top \Big|_{\bm{x} = \Phi^{\bm{b}^\star}_{1 \to t}(\bm{y})}.
\end{equation}
Moreover, the spatial gradient in the original frame can be recovered via the pullback relation
\begin{equation}
\nabla_{\bm{x}}\mathcal{J}(\bm{x},t) = D\Phi^{\bm{b}^\star}_{t \to 1}(\bm{x})^\top \nabla_{\bm{y}} \widehat{\mathcal{J}}(\bm{y}, s(t)).
\label{eq:gradient_recovery}
\end{equation}
\end{proposition}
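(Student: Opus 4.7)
The plan is to prove the result by a direct change-of-variables computation, with the key ingredient being the identity $\partial_t \Phi^{\bm{b}^\star}_{t\to 1}(\bm{x}) = -D\Phi^{\bm{b}^\star}_{t\to 1}(\bm{x})\,\bm{b}^\star(\bm{x},t)$ for the time derivative of the terminal flow map at a fixed base point. This identity is what makes the drift term in~\eqref{eq:hjb} disappear after the pullback.

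First I would establish the flow-map identity. Using the semigroup property $\Phi^{\bm{b}^\star}_{t\to 1} \circ \Phi^{\bm{b}^\star}_{t\to t+\delta} = \Phi^{\bm{b}^\star}_{t+\delta\to 1}$ viewed at $\bm{x}$ and differentiating in $\delta$ at $\delta=0$ (using $\frac{\mathrm{d}}{\mathrm{d}\delta}\big|_0 \Phi^{\bm{b}^\star}_{t\to t+\delta}(\bm{x}) = \bm{b}^\star(\bm{x},t)$) gives the time derivative identity above. Second, I would compute $\nabla_{\bm{x}}\mathcal{J}$ and $\partial_t \mathcal{J}$ by differentiating the definition $\mathcal{J}(\bm{x},t)=\widehat{\mathcal{J}}(\Phi^{\bm{b}^\star}_{t\to 1}(\bm{x}),s(t))$ through the chain rule. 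The spatial chain rule immediately yields the pullback relation~\eqref{eq:gradient_recovery}, and the time chain rule gives
\begin{equation}
\partial_t\mathcal{J}(\bm{x},t) = -\nabla_{\bm{y}}\widehat{\mathcal{J}}(\bm{y},s)\cdot D\Phi^{\bm{b}^\star}_{t\to 1}(\bm{x})\,\bm{b}^\star(\bm{x},t) - \tfrac{1}{\lambda_t}\partial_s\widehat{\mathcal{J}}(\bm{y},s),
\end{equation}
using $s'(t)=-1/\lambda_t$ and the flow-map identity.

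Third, I would substitute both expressions into the Eulerian HJB equation~\eqref{eq:hjb}. The transport term $-\nabla_{\bm{x}}\mathcal{J}\cdot\bm{b}^\star$ rewritten via~\eqref{eq:gradient_recovery} equals $-\nabla_{\bm{y}}\widehat{\mathcal{J}}\cdot D\Phi^{\bm{b}^\star}_{t\to 1}(\bm{x})\,\bm{b}^\star(\bm{x},t)$, which exactly cancels the first term produced by $-\partial_t\mathcal{J}$. What remains is $\tfrac{1}{\lambda_t}\partial_s\widehat{\mathcal{J}} + \tfrac{1}{2\lambda_t}\|D\Phi^{\bm{b}^\star}_{t\to 1}(\bm{x})^\top\nabla_{\bm{y}}\widehat{\mathcal{J}}\|_2^2 = 0$. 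Multiplying through by $\lambda_t>0$ and recognizing that $\|D\Phi^{\bm{b}^\star}_{t\to 1}(\bm{x})^\top\bm{v}\|_2^2 = \langle\bm{v},\bm{G}_t(\bm{y})\bm{v}\rangle$ with $\bm{x}=\Phi^{\bm{b}^\star}_{1\to t}(\bm{y})$ yields~\eqref{eq:transformed_hjb}. The terminal condition follows from $s(1)=0$ and $\Phi^{\bm{b}^\star}_{1\to 1}=\mathrm{id}$, so $\widehat{\mathcal{J}}(\bm{y},0)=\mathcal{J}(\bm{y},1)=H(\bm{y})$.

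The only subtlety, and the step worth presenting carefully, is the flow-map derivative identity; the sign convention is easy to get wrong because $\Phi^{\bm{b}^\star}_{t\to 1}$ depends on $t$ through its \emph{initial} time, not its terminal time. Everything else is a bookkeeping computation with the chain rule, and the algebraic cancellation of the drift term is forced by the definition of the co-moving coordinate. Regularity is not a real obstacle: \Cref{ass:reference_dynamics} guarantees that $\Phi^{\bm{b}^\star}_{t\to 1}$ is a $C^1$ diffeomorphism in $\bm{x}$ and $C^1$ in $t$, and \Cref{ass:lambda_bounds} ensures $s(\cdot)$ is a $C^1$ strictly decreasing bijection from $[0,1]$ onto $[0,s(0)]$, so the change of variables is well-defined and smooth solutions transform to smooth solutions.
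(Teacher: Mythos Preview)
Your proposal is correct and follows essentially the same route as the paper: both differentiate the identity $\mathcal{J}(\bm{x},t)=\widehat{\mathcal{J}}(\Phi^{\bm{b}^\star}_{t\to 1}(\bm{x}),s(t))$ via the chain rule, invoke the flow-map identity $\partial_t\Phi^{\bm{b}^\star}_{t\to 1}(\bm{x})=-D\Phi^{\bm{b}^\star}_{t\to 1}(\bm{x})\,\bm{b}^\star(\bm{x},t)$ to eliminate the drift term, and finish by multiplying through by $\lambda_t$ and identifying the metric $\bm{G}_t$. Your derivation of the flow-map identity from the semigroup property is slightly more explicit than the paper's, which simply asserts it as an advection equation, but the argument is otherwise the same.
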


\begin{proof}[Proof of~\Cref{prop:terminal_frame_hjb}]
We apply the chain rule to the identity \(\mathcal{J}(\bm{x},t) = \widehat{\mathcal{J}}(\Phi^{\bm{b}^\star}_{t \to 1}(\bm{x}), s(t))\).
First, we compute the spatial gradient. Differentiating with respect to \(\bm{x}\) yields
\begin{equation}
\nabla_{\bm{x}} \mathcal{J}(\bm{x},t) = D\Phi^{\bm{b}^\star}_{t \to 1}(\bm{x})^\top \nabla_{\bm{y}} \widehat{\mathcal{J}}(\bm{y}, s(t)).
\end{equation}
This immediately establishes~\eqref{eq:gradient_recovery}. Using this relation, the quadratic Hamiltonian term transforms as
\begin{equation}
\|\nabla_{\bm{x}} \mathcal{J}(\bm{x},t)\|_2^2 = \big\langle D\Phi^{\bm{b}^\star}_{t \to 1}(\bm{x})^\top \nabla_{\bm{y}} \widehat{\mathcal{J}}(\bm{y}, s(t)), D\Phi^{\bm{b}^\star}_{t \to 1}(\bm{x})^\top \nabla_{\bm{y}} \widehat{\mathcal{J}}(\bm{y}, s(t)) \big\rangle.
\end{equation}
Moving the Jacobian to the second argument of the inner product allows us to identify the metric tensor \(\bm{G}_t(\bm{y})\). We have
\begin{equation}
\begin{aligned}
\|\nabla_{\bm{x}} \mathcal{J}(\bm{x},t)\|_2^2 &= \big\langle \nabla_{\bm{y}} \widehat{\mathcal{J}}(\bm{y}, s(t)), D\Phi^{\bm{b}^\star}_{t \to 1}(\bm{x}) D\Phi^{\bm{b}^\star}_{t \to 1}(\bm{x})^\top \nabla_{\bm{y}} \widehat{\mathcal{J}}(\bm{y}, s(t)) \big\rangle \\
&= \big\langle \nabla_{\bm{y}} \widehat{\mathcal{J}}(\bm{y}, s(t)), \bm{G}_t(\bm{y}) \nabla_{\bm{y}} \widehat{\mathcal{J}}(\bm{y}, s(t)) \big\rangle \\
&= \|\nabla_{\bm{y}} \widehat{\mathcal{J}}(\bm{y}, s(t))\|_{\bm{G}_t(\bm{y})}^2.
\end{aligned}
\end{equation}

Next, we compute the time derivative. By the chain rule, we have
\begin{equation}
\partial_t \mathcal{J}(\bm{x},t) = \nabla_{\bm{y}} \widehat{\mathcal{J}}(\bm{y}, s(t)) \cdot \partial_t \Phi^{\bm{b}^\star}_{t \to 1}(\bm{x}) + \partial_s \widehat{\mathcal{J}}(\bm{y}, s(t)) \cdot \dot{s}(t).
\end{equation}
From the definition of the flow map, the map \(\bm{x} \mapsto \Phi^{\bm{b}^\star}_{t \to 1}(\bm{x})\) is constant along the characteristic curves of \(\bm{b}^\star\), implying it satisfies the advection equation
\begin{equation}
\partial_t \Phi^{\bm{b}^\star}_{t \to 1}(\bm{x}) + D\Phi^{\bm{b}^\star}_{t \to 1}(\bm{x}) \bm{b}^\star(\bm{x},t) = 0.
\end{equation}
Thus, we can substitute \(\partial_t \Phi^{\bm{b}^\star}_{t \to 1}(\bm{x}) = -D\Phi^{\bm{b}^\star}_{t \to 1}(\bm{x}) \bm{b}^\star(\bm{x},t)\).
Using this relation and the definition \(\dot{s}(t) = -\lambda_t^{-1}\) in the expression for \(\partial_t \mathcal{J}(\bm{x},t)\) leads to
\begin{equation}
\partial_t \mathcal{J}(\bm{x},t) = - \bigl( D\Phi^{\bm{b}^\star}_{t \to 1}(\bm{x})^\top \nabla_{\bm{y}} \widehat{\mathcal{J}}(\bm{y}, s(t)) \bigr) \cdot \bm{b}^\star(\bm{x},t) - \lambda_t^{-1} \partial_s \widehat{\mathcal{J}}(\bm{y}, s(t)).
\end{equation}
Recognizing the term in parentheses as \(\nabla_{\bm{x}} \mathcal{J}(\bm{x},t)\), we simplify this to
\begin{equation}
\partial_t \mathcal{J}(\bm{x},t) = -\nabla_{\bm{x}} \mathcal{J}(\bm{x},t) \cdot \bm{b}^\star(\bm{x},t) - \lambda_t^{-1} \partial_s \widehat{\mathcal{J}}(\bm{y}, s(t)).
\end{equation}

Finally, we substitute these expressions into the original HJB equation
\begin{equation}
-\partial_t \mathcal{J}(\bm{x},t) - \nabla_{\bm{x}} \mathcal{J}(\bm{x},t) \cdot \bm{b}^\star(\bm{x},t) + \frac{1}{2\lambda_t} \|\nabla_{\bm{x}} \mathcal{J}(\bm{x},t)\|_2^2 = 0.
\end{equation}
The substitution results in
\begin{multline}
-\bigl( -\nabla_{\bm{x}} \mathcal{J}(\bm{x},t) \cdot \bm{b}^\star(\bm{x},t) - \lambda_t^{-1} \partial_s \widehat{\mathcal{J}}(\bm{y}, s(t)) \bigr) \\
- \nabla_{\bm{x}} \mathcal{J}(\bm{x},t) \cdot \bm{b}^\star(\bm{x},t) + \frac{1}{2\lambda_t} \|\nabla_{\bm{y}} \widehat{\mathcal{J}}(\bm{y}, s(t))\|_{\bm{G}_t(\bm{y})}^2 = 0.
\end{multline}
The transport terms \(\nabla_{\bm{x}} \mathcal{J}(\bm{x},t) \cdot \bm{b}^\star(\bm{x},t)\) cancel exactly. Multiplying the remaining terms by \(\lambda_t\) yields the result~\eqref{eq:transformed_hjb}. The terminal condition follows from \(s(1)=0\) and \(\bm{y}(\bm{x},1)=\bm{x}\).
\end{proof}

In the HJ eqaution~\eqref{eq:transformed_hjb}, the metric \(\bm{G}_t\) captures the cumulative expansion and contraction of the reference flow. Directions where \(\bm{G}_t\) is large correspond to directions that have been stretched by the flow, making control in those directions ``cheaper'' in the terminal frame. This geometric structure will be central to designing the approximation schemes in the next subsection.

\begin{remark}[Initial vs.~terminal co-moving frames]
The choice of reference frame is determined by the direction of information flow required for the task. In~\Cref{sec:an_optimal_control_framework}, we sought to quantify the total transport cost relative to the source distribution \(\rho_0\). Pulling back to \(t=0\) allowed us to isolate the displacement caused by the control from the reference motion. In this section, we must compute the optimal control at time \(t\) based on a boundary condition known only at \(t=1\) (i.e., \(\mathcal{J}(\bm x, 1) = H(\bm x)\)). The terminal co-moving frame allows us to fix the cost function \(H\) on a static coordinate grid \(\bm{y}\) and propagate the geometry backwards to the current state, thereby solving the HJB equation in reverse.
\end{remark}

\subsection{Constant-metric reduction and proximal operators}
\label{subsec:constant_metric_reduction_and_proximal_operators}

We recall that the Hamilton--Jacobi (HJ) equation derived in~\Cref{prop:terminal_frame_hjb} is given by
\begin{equation}
\partial_s \widehat{\mathcal{J}}(\bm{y},s) + \frac{1}{2} \bigl\| \nabla_{\bm{y}} \widehat{\mathcal{J}}(\bm{y},s) \bigr\|_{\bm{G}_t(\bm{y})}^2 = 0, \quad \widehat{\mathcal{J}}(\bm{y},0) = H(\bm{y}),
\end{equation}
where
\begin{equation}
\bm{G}_t(\bm{y}) \coloneqq D\Phi^{\bm{b}^\star}_{t \to 1}(\bm{x}) D\Phi^{\bm{b}^\star}_{t \to 1}(\bm{x})^\top \Big|_{\bm{x} = \Phi^{\bm{b}^\star}_{1 \to t}(\bm{y})}.
\end{equation}
The analytical difficulty arises from the time-dependence and spatial variability of the pullback metric \(\bm{G}_t(\bm{y})\). However, since numerical integration proceeds in discrete time steps, we may employ a splitting strategy that freezes the geometry over the integration interval. That is, we approximate the metric as constant over the remaining budget \(s\), fixing it at its current value \(\bm{G} \coloneqq \bm{G}_t(\bm{y})\). In~\Cref{prop:constant_metric_reduction} we show that under this frozen-geometry approximation the HJ equation admits an explicit Hopf--Lax solution together with a closed-form expression for the spatial gradient of the value function.

\begin{proposition}[Constant-metric reduction and gradient recovery]
\label{prop:constant_metric_reduction}
Under the assumption that the metric tensor is constant, i.e., \(\bm{G}_t(\bm{y}) \equiv \bm{G} \succ 0\), the Hamilton--Jacobi equation~\eqref{eq:transformed_hjb} admits an explicit solution given by the Hopf--Lax formula~\citep{evans2022partial}
\begin{equation}
\widehat{\mathcal{J}}(\bm{y},s) = \inf_{\bm{z} \in \mathbb{R}^d} \Bigl\{ H(\bm{z}) + \frac{1}{2s} \| \bm{z} - \bm{y} \|_{\bm{G}^{-1}}^2 \Bigr\}.
\label{eq:hopf_lax_formula}
\end{equation}
Furthermore, at any point \(\bm{y}\) where the optimization problem in~\eqref{eq:hopf_lax_formula} admits a unique minimizer \(\bm{y}^+(\bm{y})\), the value function is differentiable and its spatial gradient satisfies the Moreau--Yosida identity~\citep{rockafellar1998variational}
\begin{equation}
\nabla_{\bm{y}} \widehat{\mathcal{J}}(\bm{y},s) = \frac{1}{s} \bm{G}^{-1} (\bm{y} - \bm{y}^+(\bm{y})).
\label{eq:moreau_yosida_identity}
\end{equation}
Consequently, the gradient of the value function in the original Eulerian frame is given by
\begin{equation}
\nabla_{\bm{x}} \mathcal{J}(\bm{x},t) = \frac{1}{s(t)} D\Phi^{\bm{b}^\star}_{t \to 1}(\bm{x})^\top \bm{G}^{-1} (\bm{y} - \bm{y}^+(\bm{y})).
\label{eq:grad_x_proximal}
\end{equation}
\end{proposition}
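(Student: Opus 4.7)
The plan is to reduce the constant-metric Hamilton--Jacobi equation to a classical variational problem via the Hopf--Lax formula, extract the spatial gradient via the envelope theorem, and transport the result back to the Eulerian frame using the pullback identity~\eqref{eq:gradient_recovery} established in~\Cref{prop:terminal_frame_hjb}. All three steps reduce to computations once one checks that the standard hypotheses of Hopf--Lax theory apply.

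First I would verify that the constant-metric equation \(\partial_s \widehat{\mathcal{J}} + \frac{1}{2}\|\nabla_{\bm y}\widehat{\mathcal{J}}\|_{\bm G}^2 = 0\) falls within the Hopf--Lax framework. The Hamiltonian \(p \mapsto \frac{1}{2}\langle p, \bm{G} p\rangle\) is smooth and strictly convex since \(\bm{G}\succ 0\), and the terminal data \(H\) is \(C^2\) by~\Cref{ass:regularity}. A short Legendre-transform computation, maximizing \(\langle p, v\rangle - \frac{1}{2}\langle p, \bm{G} p\rangle\) at \(p^\star = \bm{G}^{-1} v\), yields the Lagrangian \(L(v) = \frac{1}{2}\|v\|_{\bm{G}^{-1}}^2\). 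Substituting this \(L\) into the classical forward Hopf--Lax representation \(\widehat{\mathcal{J}}(\bm{y},s) = \inf_{\bm z}\{H(\bm z) + s\,L((\bm y - \bm z)/s)\}\) and absorbing the \(1/s^2\) into the quadratic gives exactly~\eqref{eq:hopf_lax_formula}.

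Next, for the Moreau--Yosida identity I would apply the envelope (Danskin) theorem to the joint functional \(\phi(\bm y, \bm z) \coloneqq H(\bm z) + \frac{1}{2s}\|\bm z - \bm y\|_{\bm{G}^{-1}}^2\). The quadratic penalty is strictly convex and coercive in \(\bm z\) (since \(\bm{G}^{-1}\succ 0\)) and \(H\ge 0\), so under the stated uniqueness of the minimizer \(\bm y^+(\bm y)\) the map \(\bm y\mapsto \bm y^+(\bm y)\) is continuous on a neighborhood of \(\bm y\). The first-order optimality condition \(\nabla_{\bm z}\phi(\bm y, \bm y^+(\bm y)) = \bm 0\) kills the implicit contribution from the chain rule when differentiating \(\widehat{\mathcal J}(\bm y, s) = \phi(\bm y, \bm y^+(\bm y))\), leaving \(\nabla_{\bm y}\widehat{\mathcal J}(\bm y, s) = \nabla_{\bm y}\phi(\bm y, \bm z)|_{\bm z = \bm y^+(\bm y)}\). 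A direct computation gives \(\nabla_{\bm y}\phi(\bm y, \bm z) = -\frac{1}{s}\bm{G}^{-1}(\bm z - \bm y) = \frac{1}{s}\bm{G}^{-1}(\bm y - \bm z)\), yielding~\eqref{eq:moreau_yosida_identity}. Substituting into the pullback relation~\eqref{eq:gradient_recovery} with \(\bm y = \Phi_{t\to 1}^{\bm b^\star}(\bm x)\) and \(s = s(t)\) produces~\eqref{eq:grad_x_proximal}.

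The main obstacle is justifying the differentiability of \(\widehat{\mathcal J}\). In general, Hopf--Lax envelopes are only semi-concave and hence differentiable almost everywhere, with non-differentiability occurring precisely where the minimizer is non-unique (shock set). The statement of the proposition finesses this by restricting attention to points admitting a unique minimizer, which is the typical case and automatic when \(H\) has locally bounded Hessian and \(s\) is small enough that the quadratic penalty dominates. A secondary bookkeeping point is to confirm that the infimum in~\eqref{eq:hopf_lax_formula} is attained; this follows from the coercivity of \(\bm z\mapsto \frac{1}{2s}\|\bm z - \bm y\|_{\bm G^{-1}}^2\) together with \(H\ge 0\) and lower semicontinuity of \(\phi(\bm y,\cdot)\), which together guarantee that any minimizing sequence stays in a compact set and admits a convergent subsequence whose limit achieves the infimum.
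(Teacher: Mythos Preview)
Your proposal is correct and follows essentially the same route as the paper: both compute the Legendre transform of the quadratic Hamiltonian to obtain \(L(v)=\tfrac12\|v\|_{\bm G^{-1}}^2\), invoke the Hopf--Lax formula, and then apply the envelope theorem to extract the Moreau--Yosida gradient before pulling back via~\eqref{eq:gradient_recovery}. Your additional remarks on attainment of the infimum and on differentiability failing only on the shock set are accurate refinements that the paper simply assumes away by restricting to points with a unique minimizer.
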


\begin{proof}[Proof of~\Cref{prop:constant_metric_reduction}]
Substituting the constant metric \(\bm{G}\) into the HJ equation~\eqref{eq:transformed_hjb}, the PDE takes the standard form
\begin{equation}
\partial_s \widehat{\mathcal{J}}(\bm{y},s) + \mathcal{H}(\nabla_{\bm{y}} \widehat{\mathcal{J}}(\bm{y},s)) = 0,
\end{equation}
where the Hamiltonian \(\mathcal{H}(\bm{p}) \coloneqq \frac{1}{2} \| \bm{p} \|_{\bm{G}}^2 = \frac{1}{2} \langle \bm{p}, \bm{G}\bm{p} \rangle\) is convex and independent of the spatial variable \(\bm{y}\). For such Hamiltonians, the unique solution is given by the Hopf--Lax formula~\citep{evans2022partial}
\begin{equation}
\widehat{\mathcal{J}}(\bm{y},s) = \inf_{\bm{z} \in \mathbb{R}^d} \Bigl\{ \widehat{\mathcal{J}}(\bm{z},0) + s \mathcal{L}\Bigl( \frac{\bm{y}-\bm{z}}{s} \Bigr) \Bigr\},
\end{equation}
where \(\mathcal{L}\) is the Legendre--Fenchel transform (Lagrangian) of \(\mathcal{H}\), defined by
\begin{equation}
\mathcal{L}(\bm{v}) \coloneqq \sup_{\bm{p} \in \mathbb{R}^d} \{ \langle \bm{p}, \bm{v} \rangle - \mathcal{H}(\bm{p}) \}.
\end{equation}
We compute \(\mathcal{L}\) explicitly by maximizing the quadratic form \(\langle \bm{p}, \bm{v} \rangle - \frac{1}{2}\langle \bm{p}, \bm{G}\bm{p} \rangle\). The first-order optimality condition with respect to \(\bm{p}\) yields \(\bm{v} - \bm{G}\bm{p} = 0\), or \(\bm{p}^\star = \bm{G}^{-1}\bm{v}\). Substituting this maximizer back into the Legendre transform expression yields
\begin{equation}
\mathcal{L}(\bm{v}) = \langle \bm{G}^{-1}\bm{v}, \bm{v} \rangle - \frac{1}{2} \langle \bm{G}^{-1}\bm{v}, \bm{G}\bm{G}^{-1}\bm{v} \rangle = \frac{1}{2} \langle \bm{v}, \bm{G}^{-1}\bm{v} \rangle = \frac{1}{2} \| \bm{v} \|_{\bm{G}^{-1}}^2.
\end{equation}
Substituting this Lagrangian and the initial condition \(\widehat{\mathcal{J}}(\bm{y},0) = H(\bm{y})\) into the general formula recovers the Hopf--Lax formula~\eqref{eq:hopf_lax_formula}.

To derive the gradient identity, we invoke the envelope theorem~\citep{bonnans2013perturbation}, which guarantees that for a parameterized minimization problem with a unique solution, the gradient of the value function equals the partial gradient of the objective evaluated at the minimizer. Let
\begin{equation}
\mathcal{F}(\bm{y}, \bm{z}) \coloneqq H(\bm{z}) + \frac{1}{2s} \langle \bm{y} - \bm{z}, \bm{G}^{-1} (\bm{y} - \bm{z}) \rangle
\end{equation}
be the objective function inside the infimum. Under the assumption that the minimizer \(\bm{y}^+(\bm y)\) is unique, \(\widehat{\mathcal{J}}\) is differentiable at \(\bm{y}\) and its gradient is given by the partial gradient of \(\mathcal{F}\) evaluated at the optimum
\begin{equation}
\nabla_{\bm{y}} \widehat{\mathcal{J}}(\bm{y},s) = \nabla_{\bm{y}} \mathcal{F}(\bm{y}, \bm{z}) \Big|_{\bm{z}=\bm{y}^+}.
\end{equation}
Computing the partial derivative of the quadratic term with respect to \(\bm{y}\) yields
\begin{equation}
\nabla_{\bm{y}} \Bigl( \frac{1}{2s} \langle \bm{y} - \bm{z}, \bm{G}^{-1} (\bm{y} - \bm{z}) \rangle \Bigr) = \frac{1}{s} \bm{G}^{-1} (\bm{y} - \bm{z}).
\end{equation}
Evaluating this at \(\bm{z} = \bm{y}^+(\bm y)\) confirms the identity~\eqref{eq:moreau_yosida_identity}.

Finally, to recover the gradient in the original coordinates, we recall the pullback relation \(\nabla_{\bm{x}}\mathcal{J}(\bm{x},t) = D\Phi^{\bm{b}^\star}_{t \to 1}(\bm{x})^\top \nabla_{\bm{y}} \widehat{\mathcal{J}}(\bm{y}, s(t))\) established in~\Cref{prop:terminal_frame_hjb}. Substituting the expression for \(\nabla_{\bm{y}} \widehat{\mathcal{J}}\) from~\eqref{eq:moreau_yosida_identity} yields the stated result~\eqref{eq:grad_x_proximal}.
\end{proof}

Crucially, \Cref{prop:constant_metric_reduction} implies that computing the high-dimensional optimal control \(\nabla_{\bm{x}}\mathcal{J}\) reduces entirely to finding the minimizer \(\bm{y}^+(\bm{y})\). This requires solving the proximal optimization problem
\begin{equation}
\bm{y}^+(\bm{y}) = \underset{\bm{z} \in \mathbb{R}^d}{\arg\min} \Bigl\{ H(\bm{z}) + \frac{1}{2s} \| \bm{z} - \bm{y} \|_{\bm{G}^{-1}}^2 \Bigr\}.
\label{eq:proximal_problem_def}
\end{equation}
Solving this global minimization exactly is generally intractable, especially when the terminal cost \(H\) is non-convex. We therefore propose generic approximation schemes that rely only on local information (gradients) of \(H\) and do not assume specific structural properties such as convexity.

\begin{remark}[Structure-exploiting solvers]
In specific applications where the terminal cost \(H\) possesses favourable structure such as convexity or decomposability, specialized numerical algorithms may be employed to solve the minimization problem~\eqref{eq:proximal_problem_def} efficiently or even exactly. For instance, if \(H\) is the indicator function of a convex set, the problem reduces to a metric projection. However, the development of such problem-specific solvers falls outside the scope of this work.
\end{remark}

\subsubsection{First-order Euclidean approximation (Gradient Descent)}
\label{subsubsec:gradient_descent_derivation}

We begin with the most elementary approximation scheme. As stated in~\Cref{ass:gd_approximation}, this approach relies on a local linearization of the terminal cost function combined with a simplification of the underlying geometry that treats the pullback metric as Euclidean.

\begin{assumption}[First-order Euclidean approximation]
\label{ass:gd_approximation}
We approximate the terminal cost \(H(\bm{z})\) by its first-order Taylor expansion around the current state \(\bm{y}\), and we approximate the pullback metric tensor \(\bm{G}\) by the Euclidean identity matrix:
\begin{align}
H(\bm{z}) &\approx H(\bm{y}) + \nabla_{\bm{y}} H(\bm{y}) \cdot (\bm{z} - \bm{y}), \\
\bm{G}^{-1} &\approx \bm{I}.
\end{align}
\end{assumption}

Under~\Cref{ass:gd_approximation}, the proximal optimization problem defined in~\eqref{eq:proximal_problem_def} simplifies to a quadratic minimization in Euclidean space:
\begin{equation}
\min_{\bm{z} \in \mathbb{R}^d} \Bigl\{ H(\bm{y}) + \nabla_{\bm{y}} H(\bm{y}) \cdot (\bm{z} - \bm{y}) + \frac{1}{2s} \| \bm{z} - \bm{y} \|_2^2 \Bigr\}.
\end{equation}
The unique minimizer \(\bm{y}^+\) (we omit its dependence on \(\bm y\) for conciseness) is obtained by solving the first-order optimality condition with respect to \(\bm{z}\), which yields the explicit update
\begin{equation}
\bm{y}^+ = \bm{y} - s \nabla_{\bm{y}} H(\bm{y}).
\end{equation}
We recover the value function gradient in the terminal frame using the Moreau--Yosida identity~\eqref{eq:moreau_yosida_identity} with \(\bm{G}=\bm{I}\):
\begin{equation}
\nabla_{\bm{y}} \widehat{\mathcal{J}}(\bm{y},s) \approx \frac{1}{s} (\bm{y} - \bm{y}^+) = \nabla_{\bm{y}} H(\bm{y}).
\end{equation}
Mapping this back to the Eulerian frame via the pullback relation~\eqref{eq:grad_x_proximal}, we obtain the approximate optimal control
\begin{equation}
\nabla_{\bm{x}} \mathcal{J}(\bm{x},t) \approx D\Phi^{\bm{b}^\star}_{t \to 1}(\bm{x})^\top \nabla_{\bm{y}} H(\Phi^{\bm{b}^\star}_{t \to 1}(\bm{x})).
\label{eq:gd_guidance_final}
\end{equation}
We refer to this scheme as \textbf{Gradient Descent (GD)} (summarized as \textsc{GDSolver} in~\Cref{alg:gd_solver}), as it performs a steepest descent step in the flat Euclidean metric. This scheme is computationally inexpensive (scaling linearly with dimension \(d\)) but treats the control cost as isotropic. It fails to account for the fact that the reference flow may stretch space significantly in certain directions, potentially leading to unstable behavior when the condition number of \(\bm{G}_t\) is large.

\begin{algorithm}[htb]
\caption{\textsc{GDSolver}}
\small
\label{alg:gd_solver}
\begin{algorithmic}[1]
\STATE \textbf{Input} current state \(\bm x_t\), time \(t\),
reference flow \(\Phi_{t\to 1}^{\bm b^\star}\), constraint vector \(\bm h\)
\STATE \textbf{Output} guidance control \(\nabla_{\bm x}\mathcal J(\bm x_t,t)\)
\STATE Compute
\[
\bm g \gets \nabla_{\bm x}\bigl(H \circ \Phi_{t\to 1}^{\bm b^\star}\bigr)(\bm x_t),\quad\text{where~}H(\bm z)=\frac{1}{2}\|\bm h(\bm z)\|_2^2
\]
using reverse mode differentiation
\STATE \textbf{return} \(\nabla_{\bm x}\mathcal J(\bm x_t,t) \gets \bm g\)
\end{algorithmic}
\end{algorithm}

\begin{remark}[Connection to gradient guidance methods]
The control derived in~\eqref{eq:gd_guidance_final} recovers the gradient guidance methods employed in recent works such as DiffusionPDE~\citep{huang2024diffusionpde}. In these frameworks, the guidance signal is computed via a ``lookahead'' procedure. First, the current state \(\bm{x}_t\) is mapped to an estimated terminal state \(\hat{\bm{x}}_1 \approx \Phi^{\bm{b}^\star}_{t \to 1}(\bm{x_t})\) (often approximated by a single Euler step \(\hat{\bm{x}}_1 = \bm{x}_t + (1-t)\bm{b}^\star(\bm{x}_t,t)\)). Second, the terminal penalty \(H(\hat{\bm{x}}_1)\) is evaluated; finally, the gradient is taken with respect to the current state \(\bm{x}_t\). Mathematically, this operation calculates the gradient of the composition \(\nabla_{\bm{x}} (H \circ \Phi^{\bm{b}^\star}_{t \to 1})(\bm{x}_t)\). By the chain rule, this yields
\begin{equation}
\nabla_{\bm{x}} (H \circ \Phi^{\bm{b}^\star}_{t \to 1})(\bm{x}_t) = D\Phi^{\bm{b}^\star}_{t \to 1}(\bm{x}_t)^\top \nabla H(\hat{\bm{x}}_1),
\end{equation}
which is identical to the control~\eqref{eq:gd_guidance_final} derived under the first-order Euclidean assumption. This analysis reveals that such approaches implicitly assume a flat Riemannian geometry (\(\bm{G} \approx \bm{I}\)), ignoring the distortion introduced by the transport. In~\Cref{sec:representative_applications}, we implement the GD method as a baseline.
\end{remark}

\subsubsection{Second-order geometric approximation (Gauss--Newton)}
\label{subsubsec:gauss_newton_derivation}

To address the limitations of the Euclidean approximation, we construct a scheme that respects both the curvature of the constraint manifold and the Riemannian geometry of the flow.

\begin{assumption}[Second-order geometric approximation]
\label{ass:gn_approximation}
Recall that the terminal cost is \(H(\bm{z}) = \frac{1}{2}\|\bm{h}(\bm{z})\|_2^2\). We approximate the constraint map \(\bm{h}\) by its first-order linearization around \(\bm{y}\), while retaining the exact pullback metric \(\bm{G}\):
\begin{equation}
\bm{h}(\bm{z}) \approx \bm{h}(\bm{y}) + \bm{J}_{\bm h}(\bm{y})(\bm{z} - \bm{y}),
\end{equation}
where \(\bm{J}_{\bm h}(\bm{y}) \in \mathbb{R}^{r \times d}\) is the Jacobian of \(\bm h\).
\end{assumption}

Under~\Cref{ass:gn_approximation}, substituting the linearization into the proximal problem~\eqref{eq:proximal_problem_def} yields a weighted linear least-squares problem for the perturbation \(\delta \bm{y} \coloneqq \bm{z} - \bm{y}\):
\begin{equation}
\min_{\delta \bm{y} \in \mathbb{R}^d} \Bigl\{ \frac{1}{2} \| \bm{h}(\bm{y}) + \bm{J}_{\bm h}(\bm{y}) \delta \bm{y} \|_2^2 + \frac{1}{2s} \| \delta \bm{y} \|_{\bm{G}^{-1}}^2 \Bigr\}.
\end{equation}
Setting the gradient with respect to \(\delta \bm{y}\) to zero leads to the system of normal equations
\begin{equation}
\Bigl( \bm{J}_{\bm h}(\bm{y})^\top \bm{J}_{\bm h}(\bm{y}) + \frac{1}{s} \bm{G}^{-1} \Bigr) \delta \bm{y}^\star = - \bm{J}_{\bm h}(\bm{y})^\top \bm{h}(\bm{y}).
\end{equation}
We compute the value gradient via the Moreau--Yosida identity \(\nabla_{\bm{y}} \widehat{\mathcal{J}} = -\frac{1}{s} \bm{G}^{-1} \delta \bm{y}^\star\). Substituting \(\delta \bm{y}^\star = -s \bm{G} \nabla_{\bm{y}} \widehat{\mathcal{J}}\) into the normal equations allows us to solve directly for the gradient:
\begin{equation}
\Bigl( \bm{J}_{\bm h}(\bm{y})^\top \bm{J}_{\bm h}(\bm{y}) + \frac{1}{s} \bm{G}^{-1} \Bigr) (-s \bm{G} \nabla_{\bm{y}} \widehat{\mathcal{J}}) = - \bm{J}_{\bm h}(\bm{y})^\top \bm{h}(\bm{y}).
\end{equation}
Rearranging terms yields
\begin{equation}
\bigl( \bm{I}_d + s \bm{J}_{\bm h}(\bm{y})^\top \bm{J}_{\bm h}(\bm{y}) \bm{G} \bigr) \nabla_{\bm{y}} \widehat{\mathcal{J}} = \bm{J}_{\bm h}(\bm{y})^\top \bm{h}(\bm{y}).
\end{equation}
Using the matrix inversion identity \((\bm{I} + \bm{U}\bm{V})^{-1}\bm{U} = \bm{U}(\bm{I} + \bm{V}\bm{U})^{-1}\) with \(\bm{U} = \bm{J}_{\bm h}^\top\) and \(\bm{V} = s \bm{J}_{\bm h} \bm{G}\), we can push the inversion into the lower-dimensional constraint space (assuming \(r \leq d\)):
\begin{equation}
\nabla_{\bm{y}} \widehat{\mathcal{J}}(\bm{y},s) = \bm{J}_{\bm h}(\bm{y})^\top \bigl( \bm{I}_k + s \bm{J}_{\bm h}(\bm{y}) \bm{G} \bm{J}_{\bm h}(\bm{y})^\top \bigr)^{-1} \bm{h}(\bm{y}).
\label{eq:gn_update}
\end{equation}
We refer to this scheme as \textbf{Gauss--Newton (GN)} (summarized as \textsc{GNSolver} in~\Cref{alg:gn_solver}), as it corresponds to a damped step of the classical Gauss--Newton algorithm~\citep{nocedal2006numerical} in the Riemannian metric \(\bm{G}\). The term \(s \bm{J}_{\bm h} \bm{G} \bm{J}_{\bm h}^\top\) functions as a geometry-dependent Tikhonov regularizer in the dual constraint space. Analytically, the inverse operator acts as a spectral filter. In directions corresponding to the large eigenvalues of \(\bm{G}\) (where the reference flow exhibits significant expansion), the regularization term dominates the identity, thereby dampening the guidance signal. This ensures that the control effort remains bounded even in stiff directions where deviations are energetically costly.

\begin{algorithm}[htb]
\caption{\textsc{GNSolver}}
\small
\label{alg:gn_solver}
\begin{algorithmic}[1]
\STATE \textbf{Input} current state \(\bm x_t\), time \(t\), remaining budget \(s\),
reference flow \(\Phi_{t\to 1}^{\bm b^\star}\), constraint vector \(\bm h\),
CG tolerance \(\varepsilon\), maximum CG iterations \(K\)
\STATE \textbf{Output} guidance control \(\nabla_{\bm x}\mathcal J(\bm x_t,t)\)

\STATE \(\bm y \gets \Phi_{t\to 1}^{\bm b^\star}(\bm x_t)\)
\STATE \(\bm r \gets \bm h(\bm y)\)

\STATE Solve \((\bm I + s \bm M \bm M^\top)\bm \alpha = \bm r\) for \(\bm \alpha \in \mathbb R^r\) using Conjugate Gradient with tolerance \(\varepsilon\) and at most \(K\) iterations, where
\[
\bm M = \bm J_{\bm h}(\bm y)D\Phi_{t\to 1}^{\bm b^\star}(\bm x_t)
\]
and the matrix vector products required by CG are evaluated matrix free as
\begin{align*}
\bm M^\top \bm u &\gets D\Phi_{t\to 1}^{\bm b^\star}(\bm x_t)^\top \bm J_{\bm h}(\bm y)^\top \bm u, \tag*{// \text{ VJP}}
\\
\bm M \bm v &\gets \bm J_{\bm h}(\bm y)D\Phi_{t\to 1}^{\bm b^\star}(\bm x_t)\bm v \tag*{// \text{ JVP}}
\end{align*}
using PyTorch primitives VJP or JVP
\STATE \textbf{return} \(\nabla_{\bm x}\mathcal J(\bm x_t,t) \gets \bm M^\top \bm \alpha\)
\end{algorithmic}
\end{algorithm}

\begin{remark}[Connection to manifold projection methods]
The GN scheme derived in~\eqref{eq:gn_update} is conceptually similar to manifold projection methods used in recent works such as~\citep{cheng2025gradient,jacobsen2025cocogen,utkarsh2025physics}. Specifically speaking, these methods operate by ``looking ahead'' to the terminal state \(\bm y \approx \Phi_{t \to 1}^{\bm{b}^\star}(\bm{x})\), performing a constrained optimization step (typically a Gauss--Newton projection) to find a point \(\bm{y}^+\) on the manifold \(\mathcal{M}\) closest to \(\bm{y}\), and then pulling this correction vector back to the current time \(t\) via the linearized inverse flow. This strategy is applied continuously for all \(t\in[0, 1)\) in~\citep{cheng2025gradient,utkarsh2025physics}, or specifically near the terminal time \(t\approx1\) in~\citep{jacobsen2025cocogen}. In~\Cref{sec:representative_applications}, we also implement these projection-based methods as baselines.
\end{remark}

\subsection{Computational complexity and the TOCFlow scheme}
\label{subsec:computational_complexity_and_the_tocflow_scheme}

The practical utility of the derived guidance schemes depends critically on the computational cost of evaluating the control in high dimensions. In many generative modeling applications, the state dimension \(d\) is large, while the number of constraints \(r = \dim(\bm{h})\) varies from a single scalar to the order of \(d\).

To formalize the complexity, we define the linear sensitivity operator \(\bm{M} \in \mathbb{R}^{r \times d}\) as the composition of the constraint Jacobian and the flow tangent map
\begin{equation}
\bm{M} \coloneqq \bm{J}_{\bm h}(\bm{y}) D\Phi^{\bm{b}^\star}_{t \to 1}(\bm{x})\Big|_{\bm x = \Phi_{1\to t}^{\bm b^\star}(\bm y)}.
\end{equation}
We assume that these Jacobians are not instantiated as dense matrices but are accessed via automatic differentiation primitives. The application of the adjoint operator \(\bm{u} \mapsto \bm{M}^\top \bm{u}\) corresponds to a vector-Jacobian product (VJP) computed via reverse-mode differentiation, with a cost \(\mathcal{C}_{\mathrm{bwd}}\) comparable to one backward integration of the reference flow. Similarly, the forward application \(\bm{v} \mapsto \bm{M} \bm{v}\) corresponds to a Jacobian-vector product (JVP) computed via forward-mode differentiation, with a cost \(\mathcal{C}_{\mathrm{fwd}}\) comparable to one forward integration.

We compare the cost of evaluating the guidance update \(\nabla_{\bm{x}} \mathcal{J}\) for the GD scheme~\eqref{eq:gd_guidance_final} and GN scheme~\eqref{eq:gn_update} established in the previous subsection.

The first-order Euclidean approximation (i.e, the GD scheme) derived in~\Cref{subsubsec:gradient_descent_derivation} requires computing the pullback of the terminal cost gradient \(\bm{M}^\top \nabla_{\bm{y}} H(\bm{y})\). Since \(\nabla_{\bm{y}} H(\bm{y}) = \bm{J}_{\bm h}(\bm{y})^\top \bm{h}(\bm{y})\), this operation is equivalent to a single VJP applied to the scalar objective \(H \circ \Phi_{t\to 1}^{\bm b^\star}\). The computational cost scales as \(O(1)\) relative to the flow integration.

The second-order geometric approximation (i.e., the GN scheme) derived in~\Cref{subsubsec:gauss_newton_derivation} necessitates solving the linear system \(\bm{A} \bm{\alpha} = \bm{h}(\bm{y})\), where \(\bm{A} \coloneqq \bm{I}_r + s \bm{M}\bm{M}^\top\) is the regularized Gram matrix of the sensitivity operator. Constructing \(\bm{A}\) explicitly requires \(r\) separate VJP evaluations to populate the rows of \(\bm{M}\), which renders the scheme prohibitively expensive for high-dimensional constraints. Alternatively, the system may be solved iteratively using the Conjugate Gradient method~\citep{trefethen2022numerical} without explicit matrix formation. Each iteration involves the operator action \(\bm{v} \mapsto \bm{M}(\bm{M}^\top \bm{v})\), requiring one VJP followed by one JVP. Due to the typically large condition number of the flow-induced metric, the number of iterations required for convergence can be substantial, making the GN scheme significantly more computationally intensive than the GD scheme.

This analysis highlights a dichotomy. The GD scheme is efficient but geometrically isotropic, while the GN scheme is geometrically consistent but computationally demanding. To resolve this trade-off, we propose \textbf{Terminal Optimal Control with Flow-based models (TOCFlow)} (summarized as \textsc{TOCSolver} in~\Cref{alg:toc_solver}). Instead of optimizing over the full space \(\mathbb{R}^d\), we restrict the optimization to a specific one-dimensional subspace. This reduces the problem to a scalar line search, which admits a closed-form solution.

We recall the fundamental proximal optimization problem~\eqref{eq:hopf_lax_formula} defining the value function in the terminal frame
\begin{equation}
\bm{y}^+ \in \underset{\bm{z} \in \mathbb{R}^d}{\arg\min} \Bigl\{ H(\bm{z}) + \frac{1}{2s} \| \bm{z} - \bm{y} \|_{\bm{G}^{-1}}^2 \Bigr\}.
\end{equation}
The natural gradient associated with the metric \(\bm{G}^{-1}\) is given by \(\nabla_{\bm{G}^{-1}} H(\bm{y}) = \bm{G} \nabla_{\bm{y}} H(\bm{y})\). Let \(\bm{g} \coloneqq \nabla_{\bm{y}} H(\bm{y})\) denote the standard Euclidean gradient. We define the search direction \(\bm{d} \coloneqq \bm{G} \bm{g}\) and posit the ansatz
\begin{equation}
\bm{z}(\tau) = \bm{y} - s \tau \bm{d}, \quad \tau \in \mathbb{R}.
\end{equation}
This choice is structurally advantageous because substituting it into the gradient recovery formula \(\nabla_{\bm{y}} \widehat{\mathcal{J}} = \frac{1}{s} \bm{G}^{-1}(\bm{y} - \bm{z})\) yields a scalar scaling of the Euclidean gradient, \(\nabla_{\bm{y}} \widehat{\mathcal{J}}(\bm{y},s) = \tau \bm{g}\). Consequently, the high-dimensional vector optimization reduces to finding the optimal scalar damping factor \(\tau\) for the standard gradient descent update.

Substituting the ansatz \(\bm{z}(\tau)\) into the proximal objective function yields a scalar function. The regularization term simplifies algebraically due to the choice of direction as
\begin{equation}
\frac{1}{2s} \| \bm{z}(\tau) - \bm{y} \|_{\bm{G}^{-1}}^2 = \frac{1}{2s} \langle -s \tau \bm{G}\bm{g}, \bm{G}^{-1}(-s \tau \bm{G}\bm{g}) \rangle = \frac{s \tau^2}{2} \langle \bm{g}, \bm{G}\bm{g} \rangle.
\end{equation}
This term represents the energy cost of moving in the direction \(\bm{d}\). For the target cost \(H(\bm{z}(\tau))\), exact evaluation requires querying the constraint function at a new point, which is expensive inside a derivative calculation. Instead, we introduce a local model. Since \(H(\bm{y}) = \frac{1}{2}\|\bm{h}(\bm{y})\|_2^2\), we model the behavior of the constraint residual \(\bm{h}\) along the descent direction.

\begin{assumption}[Linear constraint decay]
\label{ass:linear_decay}
We assume that along the Riemannian gradient direction, the constraint vector \(\bm{h}\) decays linearly at a rate \(\rho \in \mathbb{R}\). Specifically, we posit the approximation
\begin{equation}
\bm{h}(\bm{z}(\tau)) \approx (1 - s \tau \rho) \bm{h}(\bm{y}).
\end{equation}
\end{assumption}

\begin{remark}[Justification of~\Cref{ass:linear_decay}]
This approximation is motivated by the first-order Taylor expansion along the search direction, which gives
\begin{equation}
\bm h(\bm z(\tau)) \approx \bm h(\bm y) - s\tau\bm J_{\bm h}(\bm y)\bm d.
\end{equation}
\Cref{ass:linear_decay} further approximates the first-order change by a scalar contraction along the current residual, namely \(\bm J_{\bm h}(\bm y)\bm d \approx \rho\bm h(\bm y)\), for example with \(\rho\) chosen as the Rayleigh quotient along \(\bm h(\bm y)\) (as derived below). The relation is exact for a single scalar constraint \(r=1\). For general nonlinear constraints, the assumption serves as a local surrogate as \(s\tau \to 0\), effectively replacing \(\bm h\) by its local linearization.
\end{remark}

Under~\Cref{ass:linear_decay}, the terminal cost is approximated by the quadratic model
\begin{equation}
H(\bm{z}(\tau)) \approx \frac{1}{2} \| (1 - s \tau \rho) \bm{h}(\bm{y}) \|_2^2 = \frac{1}{2} (1 - s \tau \rho)^2 \| \bm{h}(\bm{y}) \|_2^2.
\end{equation}
To determine the decay rate \(\rho\), we require this model to be consistent with the true geometry to first order. We match the derivative of the model at \(\tau=0\) with the true directional derivative of the cost function. The true directional derivative is
\begin{equation}
\frac{\mathrm{d}}{\mathrm{d}\tau} H(\bm{z}(\tau)) \Big|_{\tau=0} = \nabla_{\bm{y}} H(\bm{y})^\top \frac{\mathrm{d}\bm{z}}{\mathrm{d}\tau} = \bm{g}^\top (-s \bm{G}\bm{g}) = -s \bm{g}^\top \bm{G} \bm{g}.
\end{equation}
The derivative of the model is 
\begin{equation}
\frac{\mathrm{d}}{\mathrm{d}\tau} \Bigl( \frac{1}{2} (1 - s \tau \rho)^2 \| \bm{h}(\bm{y}) \|_2^2 \Bigr) \big|_{\tau=0} = -s \rho \| \bm{h}(\bm{y}) \|_2^2.    
\end{equation}
Equating these two expressions yields the consistency condition for \(\rho\)
\begin{equation}
\rho = \frac{\bm{g}^\top \bm{G} \bm{g}}{\| \bm{h}(\bm{y}) \|_2^2}.
\end{equation}
This quantity \(\rho\) can be interpreted as a generalized Rayleigh quotient, measuring the sensitivity of the cost function relative to the magnitude of the constraint violation in the metric space.

We now solve for the optimal \(\tau\) by minimizing the total scalarized objective, which is the sum of the approximated target cost and the transport cost
\begin{equation}
\frac{1}{2} (1 - s \tau \rho)^2 \| \bm{h}(\bm{y}) \|_2^2 + \frac{s \tau^2}{2} \bm{g}^\top \bm{G} \bm{g}.
\end{equation}
Substituting the identity \(\bm{g}^\top \bm{G} \bm{g} = \rho \| \bm{h}(\bm{y}) \|_2^2\) into the second term allows us to factor out the squared norm of the residual, reducing the objective to a form proportional to \(\frac{1}{2}(1 - s \tau \rho)^2 + \frac{1}{2}s \tau^2 \rho\). We find the minimum by setting the derivative with respect to \(\tau\) to zero:
\begin{equation}
-(1 - s \tau \rho)(s \rho) + s \tau \rho = s \rho \bigl( -(1 - s \tau \rho) + \tau \bigr) = 0.
\end{equation}
Assuming \(\rho \neq 0\), we obtain the algebraic equation \(\tau(1 + s \rho) = 1\). The optimal damping factor is thus
\begin{equation}
\tau^\star = \frac{1}{1 + s \rho} = \frac{\| \bm{h}(\bm{y}) \|_2^2}{\| \bm{h}(\bm{y}) \|_2^2 + s\bm{g}^\top \bm{G} \bm{g}}.
\label{eq:dampling_factor}
\end{equation}

The resulting feedback law in the original state space is \(\nabla_{\bm{x}} \mathcal{J}(\bm{x},t) \approx \tau^\star \nabla_{\bm{x}} (H \circ \Phi^{\bm{b}^\star}_{t \to 1})(\bm{x})\). Crucially, the scalar \(\tau^\star\) can be computed efficiently without forming any dense matrices. The quadratic form appearing in the denominator is equivalent to the squared norm of the gradient in the \(\bm{x}\)-frame
\begin{equation}
\begin{aligned}
\bm{g}^\top \bm{G} \bm{g} &= \nabla_{\bm{y}} H(\bm{y})^\top \bigl( D\Phi^{\bm{b}^\star}_{t \to 1}(\bm{x}) D\Phi^{\bm{b}^\star}_{t \to 1}(\bm{x})^\top \bigr) \nabla_{\bm{y}} H(\bm{y}) \\
&= \bigl\| D\Phi^{\bm{b}^\star}_{t \to 1}(\bm{x})^\top \nabla_{\bm{y}} H(\bm{y}) \bigr\|_2^2 \\
&= \bigl\| \nabla_{\bm{x}} ( H \circ \Phi^{\bm{b}^\star}_{t \to 1} )(\bm{x}) \bigr\|_2^2.
\end{aligned}
\end{equation}
This quantity requires only a single VJP to evaluate. Therefore, TOCFlow achieves a geometry-aware correction with a computational cost of \(O(1)\), identical to that of the GD scheme and independent of the number of constraints.

\begin{algorithm}[htb]
\caption{\textsc{TOCSolver}}
\small
\label{alg:toc_solver}
\begin{algorithmic}[1]
\STATE \textbf{Input} current state \(\bm x_t\), time \(t\), remaining budget \(s\), reference flow \(\Phi_{t\to 1}^{\bm b^\star}\), constraint vector \(\bm h\)
\STATE \textbf{Output} guidance control \(\nabla_{\bm x}\mathcal J(\bm x_t,t)\)

\STATE \(\bm y \gets \Phi_{t\to 1}^{\bm b^\star}(\bm x_t)\)
\STATE \(\bm r \gets \bm h(\bm y)\), \(r^2 \gets \|\bm r\|_2^2\)
\STATE Compute
\[
\bm g \gets \nabla_{\bm x}\bigl(H \circ \Phi_{t\to 1}^{\bm b^\star}\bigr)(\bm x_t),\quad\text{where}~H(\bm z)=\dfrac{1}{2}\|\bm h(\bm z)\|_2^2
\]
using reverse mode differentiation

\IF{\(r^2 = 0\)}
\STATE \textbf{return} \(\nabla_{\bm x}\mathcal J(\bm x_t,t) \gets \bm 0\)
\ENDIF
\STATE \(g^2 \gets \|\bm g\|_2^2\), \(\rho \gets g^2 / r^2\)
\STATE \(\tau^\star \gets (1 + s \rho)^{-1}\)
\STATE \textbf{return} \(\nabla_{\bm x}\mathcal J(\bm x_t,t) \gets \tau^\star \bm g\)
\end{algorithmic}
\end{algorithm}

\subsection{Analytical comparison in the Gaussian setting}
\label{subsec:analytical_comparison_in_the_gaussian_setting}

To verify the accuracy of the proposed approximation schemes in~\Cref{subsec:constant_metric_reduction_and_proximal_operators,subsec:computational_complexity_and_the_tocflow_scheme}, we revisit the one-dimensional Gaussian model problem introduced in~\Cref{subsec:exact_solution_for_the_gaussian_model}. In this setting, the control induced by GD, GN, and TOCFlow can be analyzed explicitly, as detailed in~\Cref{prop:approximate_moments}.

\begin{proposition}
[Analytical comparison in the Gaussian setting]
\label{prop:approximate_moments}
Consider the first-coordinate Gaussian subproblem induced by the \(2\)-dimensional setting in~\Cref{prop:gaussian_exact}, with source \(\mathcal N(0,1)\) and first-coordinate unconstrained target \(\mathcal N(\mu,\sigma^2)\). Let \(v(t)^2\coloneqq(1-t)^2 + t^2\sigma^2\), and let \(\gamma_{\bm{\lambda}} \coloneqq \int_0^1 \frac{\sigma^2}{\lambda_t v(t)^2} \mathrm{d}t\). Then terminal densities generated by the Gradient Descent (GD), Gauss--Newton (GN), and TOCFlow schemes are Gaussians \(\mathcal{N}(\tilde\mu, \tilde\sigma^2)\) characterized as follows:
\begin{enumerate}
\item Gradient Descent (GD): The GD scheme yields the moments
\begin{equation}
\tilde\mu_{\mathrm{GD}} = \mu \exp(-\gamma_{\bm{\lambda}}) \quad \text{and} \quad \tilde\sigma_{\mathrm{GD}} = \sigma \exp(-\gamma_{\bm{\lambda}}).
\end{equation}
\item Gauss--Newton (GN) and TOCFlow: In the one-dimensional setting, the GN and TOCFlow schemes coincide. They yield the moments
\begin{equation}
\tilde\mu_{\mathrm{TOC}} = \mu \exp(-\eta_{\bm{\lambda}}) \quad \text{and} \quad \tilde\sigma_{\mathrm{TOC}} = \sigma \exp(-\eta_{\bm{\lambda}}),
\end{equation}
where the effective stiffness \(\eta_{\bm{\lambda}}\) incorporates the geometric regularization via the stretched time \(s(t) = \int_t^1 \lambda_u^{-1} \mathrm{d}u\) as
\begin{equation}
\eta_{\bm{\lambda}} \coloneqq \int_0^1 \frac{\sigma^2}{\lambda_t (v(t)^2 + \sigma^2 s(t))} \mathrm{d}t.
\label{eq:tocflow_contraction}
\end{equation}
\end{enumerate}
\end{proposition}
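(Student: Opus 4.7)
The plan is to exploit the affine structure of the one-dimensional reference dynamics to reduce each scheme to a simple linear ODE in the terminal co-moving frame. Since the problem decouples across coordinates and only the first coordinate is nontrivial, I will work exclusively with the scalar subproblem $H(z)=\tfrac12 z^2$ on $\mathbb R$, writing $\mu,\sigma,v(t)$ for $\mu_1,\sigma_1,v_1(t)$. From the derivation of \Cref{prop:gaussian_exact}, the reference drift is $b^\star(x,t)=\alpha(t)x+\beta(t)$ with $\alpha(t)=\dot v(t)/v(t)$ and $\beta(t)=\mu(1-t\alpha(t))$, so the reference flow is the explicit affine map $\Phi^{b^\star}_{t\to 1}(x)=\frac{\sigma}{v(t)}(x-t\mu)+\mu$, with constant spatial Jacobian $D\Phi^{b^\star}_{t\to 1}=\sigma/v(t)$ and pullback metric $G_t=\sigma^2/v(t)^2$.

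The key reduction is to pass to the terminal co-moving variable $\tilde y(t)\coloneqq \Phi^{b^\star}_{t\to 1}(\tilde x(t))$, where $\tilde x(t)$ follows the controlled ODE $\dot{\tilde x}=b^\star(\tilde x,t)-\lambda_t^{-1}\nabla_x\mathcal J(\tilde x,t)$. Using the advection identity $\partial_t\Phi^{b^\star}_{t\to 1}+D\Phi^{b^\star}_{t\to 1}b^\star=0$ from the proof of \Cref{prop:terminal_frame_hjb}, I obtain
\begin{equation}
\dot{\tilde y}(t)=D\Phi^{b^\star}_{t\to 1}(\tilde x)\bigl(\dot{\tilde x}-b^\star(\tilde x,t)\bigr)=-\frac{\lambda_t^{-1}\sigma}{v(t)}\nabla_x\mathcal J(\tilde x,t),
\end{equation}
with boundary conditions $\tilde y(0)=\sigma\tilde x(0)+\mu$ and $\tilde y(1)=\tilde x(1)$. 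This is the lever by which the closed-loop dynamics collapse to a scalar linear ODE whose solution is a pure exponential.

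Next, I would instantiate each scheme in turn. For GD, substituting the pullback formula from \Cref{subsubsec:gradient_descent_derivation} gives $\nabla_x\mathcal J_{\mathrm{GD}}=(\sigma/v(t))\tilde y$, so the ODE becomes $\dot{\tilde y}=-\lambda_t^{-1}\sigma^2/v(t)^2\,\tilde y$; integrating yields $\tilde y(1)=\tilde y(0)\exp(-\gamma_{\bm\lambda})$ with $\gamma_{\bm\lambda}=\int_0^1\sigma^2/(\lambda_t v(t)^2)\,dt$. Since $\tilde x(0)\sim\mathcal N(0,1)$, this gives $\tilde x(1)\sim\mathcal N(\mu e^{-\gamma_{\bm\lambda}},\sigma^2 e^{-2\gamma_{\bm\lambda}})$. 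For GN, formula \eqref{eq:gn_update} with $h(y)=y$, $J_h=1$, $G=\sigma^2/v(t)^2$ yields $\nabla_y\widehat{\mathcal J}_{\mathrm{GN}}=\tilde y/(1+s(t)\sigma^2/v(t)^2)$, so the co-moving ODE becomes $\dot{\tilde y}=-\lambda_t^{-1}\sigma^2/(v(t)^2+\sigma^2 s(t))\,\tilde y$, producing the effective stiffness $\eta_{\bm\lambda}$ and the announced moments. For TOCFlow, in the one-dimensional case I would compute $\|h(y)\|_2^2=\tilde y^2$ and $\|\nabla_x(H\circ\Phi^{b^\star}_{t\to 1})\|_2^2=(\sigma^2/v(t)^2)\tilde y^2$, so the scalar damping factor \eqref{eq:dampling_factor} simplifies to $\tau^\star=v(t)^2/(v(t)^2+s(t)\sigma^2)$; multiplying by the GD gradient reproduces the GN feedback exactly, establishing the equivalence of TOCFlow and GN in the scalar setting.

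The only non-routine step is verifying algebraically that the TOCFlow damping factor, which in general approximates the GN correction via a one-dimensional line search under \Cref{ass:linear_decay}, coincides with GN in this setting. I expect this to follow cleanly because with $r=1$ the Gauss--Newton normal equation is already a scalar, so the linear-decay approximation \Cref{ass:linear_decay} underlying TOCFlow is exact rather than approximate; consequently the two schemes must agree, and the common terminal distribution is obtained from the same exponential integration as in the GD case with $\gamma_{\bm\lambda}$ replaced by $\eta_{\bm\lambda}$.
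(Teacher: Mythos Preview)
Your proof is correct and relies on the same core identifications as the paper: the affine flow map $\Phi^{b^\star}_{t\to 1}(x)=\frac{\sigma}{v(t)}(x-t\mu)+\mu$, its scalar Jacobian $\sigma/v(t)$, the GD feedback $\nabla_x\mathcal J_{\mathrm{GD}}=(\sigma/v(t))\tilde y$, and the damping factor $\tau^\star=v(t)^2/(v(t)^2+s(t)\sigma^2)$ that collapses GN and TOCFlow in the scalar case. The difference lies only in how the closed-loop ODE is integrated. The paper stays in the Eulerian $x$-frame, writes $\dot X_t=(\alpha(t)-\kappa(t))X_t+C(t)$, and integrates the variance equation $\dot V_t/V_t=2\dot v/v-2\kappa(t)$, recognizing $\int\alpha=\ln(\sigma/1)$ as a log-derivative; the mean is then argued separately. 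Your route through the terminal co-moving variable $\tilde y(t)=\Phi^{b^\star}_{t\to 1}(\tilde x(t))$ is cleaner: the advection identity eliminates the reference drift automatically, leaving the homogeneous ODE $\dot{\tilde y}=-\kappa(t)\tilde y$, and the boundary relations $\tilde y(0)=\sigma\tilde x(0)+\mu\sim\mathcal N(\mu,\sigma^2)$ and $\tilde y(1)=\tilde x(1)$ deliver mean and variance simultaneously without tracking the affine constant $C(t)$. Both arguments are equivalent; yours front-loads the change of frame that the paper develops in \Cref{subsec:hjb_equation_in_the_terminal_co_moving_frame} and thereby bypasses the separate treatment of $\alpha(t)$.
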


\begin{proof}[Proof of~\Cref{prop:approximate_moments}]
We analyze the closed-loop dynamics induced by each scheme. Recall from~\Cref{prop:gaussian_exact} that the reference velocity field is affine: \(b^\star(x,t) = \alpha(t)x + \beta(t)\), where \(\alpha(t) = \dot{v}(t)/v(t)\).
The reference flow map \(\Phi_{t \to 1}^{\bm{b}^\star}(x)\) is linear. Using the moment matching property of the optimal transport interpolant, the map scales the centered process by the ratio of standard deviations:
\begin{equation}
y(x) \coloneqq \Phi_{t \to 1}^{\bm{b}^\star}(x) = \frac{\sigma}{v(t)}(x - \mu t) + \mu.
\end{equation}
Consequently, the Jacobian is scalar and spatially constant:
\begin{equation}
J_{t \to 1} \coloneqq D\Phi_{t \to 1}^{\bm{b}^\star}(x) = \frac{\sigma}{v(t)}.
\end{equation}
The terminal cost is \(H(y) = \frac{1}{2}y^2\), with gradient \(\nabla_y H(y) = y\).

\textbf{1. Gradient Descent (GD):}
The GD control law is defined by the pullback of the Euclidean gradient:
\begin{equation}
u_{\mathrm{GD}}(x,t) = b^\star(x,t) - \lambda_t^{-1} J_{t \to 1} \nabla_y H(y(x)).
\end{equation}
Substituting the expressions for \(J_{t \to 1}\) and \(\nabla_y H(y)\) yields
\begin{equation}
u_{\mathrm{GD}}(x,t) = \alpha(t)x + \beta(t) - \lambda_t^{-1} \frac{\sigma}{v(t)} \biggl( \frac{\sigma}{v(t)}(x - \mu t) + \mu \biggr).
\end{equation}
Isolating the terms dependent on \(x\), we define the effective feedback gain \(\kappa_{\mathrm{GD}}(t)\):
\begin{equation}
\kappa_{\mathrm{GD}}(t) \coloneqq \frac{\sigma^2}{\lambda_t v(t)^2}.
\end{equation}
The dynamics of the controlled process \(X_t\) are governed by the linear ODE
\begin{equation}
\dot{X}_t = (\alpha(t) - \kappa_{\mathrm{GD}}(t))X_t + C(t),
\end{equation}
where \(C(t)\) collects the terms independent of the state. The variance \(V_t \coloneqq \operatorname{Var}(X_t)\) evolves according to the homogeneous equation
\begin{equation}
\dot{V}_t = 2(\alpha(t) - \kappa_{\mathrm{GD}}(t))V_t.
\end{equation}
Substituting \(\alpha(t) = \dot{v}(t)/v(t)\) and dividing by \(V_t\), we obtain
\begin{equation}
\frac{\dot{V}_t}{V_t} = 2 \frac{\dot{v}(t)}{v(t)} - 2 \kappa_{\mathrm{GD}}(t).
\end{equation}
Integrating from \(t=0\) to \(1\) yields
\begin{equation}
\ln V_1 - \ln V_0 = 2 \ln \frac{v(1)}{v(0)} - 2 \int_0^1 \kappa_{\mathrm{GD}}(t) \mathrm{d}t.
\end{equation}
Using the boundary values \(V_0 = v(0)^2 = 1\) and \(v(1) = \sigma\), and identifying the integral as \(\gamma_{\bm{\lambda}}\), we obtain the terminal variance
\begin{equation}
\tilde{\sigma}_{\mathrm{GD}}^2 = V_1 = \sigma^2 \exp\biggl(-2 \int_0^1 \frac{\sigma^2}{\lambda_t v(t)^2} \mathrm{d}t \biggr) = \sigma^2 \exp(-2\gamma_{\bm{\lambda}}).
\end{equation}
The mean follows the same linear dynamics relative to the target origin, yielding the same scaling factor.

\textbf{2. TOCFlow / Gauss--Newton:}
In the scalar setting (\(d=r=1\)), the matrix inverse in the GN update becomes scalar division, coinciding with the TOCFlow damping factor \(\tau^\star\). The squared norm of the gradient is
\begin{equation}
\|\nabla_x (H \circ \Phi_{t \to 1}^{\bm{b}^\star})\|_2^2 = (J_{t \to 1} y)^2 = \frac{\sigma^2}{v(t)^2} y^2.
\end{equation}
The squared norm of the residual is \(\|\bm{h}(y)\|_2^2 = y^2\). Substituting these into the formula for \(\tau^\star\) from~\Cref{subsec:computational_complexity_and_the_tocflow_scheme} with the effective step size \(s(t)\):
\begin{equation}
\tau^\star(t) = \frac{y^2}{y^2 + s(t) (\frac{\sigma^2}{v(t)^2} y^2)} = \frac{v(t)^2}{v(t)^2 + s(t)\sigma^2}.
\end{equation}
The feedback gain \(\kappa_{\mathrm{TOC}}(t)\) is simply the GD gain scaled by this damping factor:
\begin{equation}
\kappa_{\mathrm{TOC}}(t) = \tau^\star(t) \kappa_{\mathrm{GD}}(t) = \biggl( \frac{v(t)^2}{v(t)^2 + s(t)\sigma^2} \biggr) \frac{\sigma^2}{\lambda_t v(t)^2} = \frac{\sigma^2}{\lambda_t (v(t)^2 + s(t)\sigma^2)}.
\end{equation}
Following the same integration steps as in the GD case, the log-variance ratio is determined by the integral of \(\kappa_{\mathrm{TOC}}(t)\):
\begin{equation}
\ln \frac{\tilde{\sigma}_{\mathrm{TOC}}^2}{\sigma^2} = -2 \int_0^1 \frac{\sigma^2}{\lambda_t (v(t)^2 + s(t)\sigma^2)} \mathrm{d}t.
\end{equation}
Identifying the integral as \(\eta_{\bm{\lambda}}\) completes the proof.
\end{proof}

\begin{remark}[Asymptotic consistency and over-compression]
We may quantify the accuracy of the approximations via the contraction factor, defined as the ratio of the terminal standard deviation to the reference, \(\tilde{\sigma}/\sigma\). According to~\Cref{prop:gaussian_exact}, the exact solution exhibits an algebraic decay \((1+\gamma_{\bm{\lambda}})^{-1}\), standing in contrast to the exponential decay \(\exp(-\gamma_{\bm{\lambda}})\) of the GD scheme. In the regime of weak penalties (where \(\lambda_t \to \infty\) implies \(\gamma_{\bm{\lambda}} \ll 1\)), these scalings are asymptotically consistent, coinciding up to the first order expansion \(1 - \gamma_{\bm{\lambda}} + \mathcal{O}(\gamma_{\bm{\lambda}}^2)\). However, in the strong penalty regime (\(\gamma_{\bm{\lambda}} \gg 1\)), the exponential factor vanishes significantly faster than the algebraic factor. This divergence indicates that the Euclidean approximation suffers from over-compression. The GN and TOCFlow schemes mitigate this through geometric regularization. The presence of the strictly positive term \(s(t)\sigma^2\) in the denominator of~\eqref{eq:tocflow_contraction} ensures that the effective stiffness satisfies \(\eta_{\bm{\lambda}} < \gamma_{\bm{\lambda}}\). Consequently, the resulting contraction factor \(\exp(-\eta_{\bm{\lambda}})\) decays more slowly than the GD solution, structurally correcting the trajectory to better approximate the exact algebraic behavior. See~\cref{fig:gaussian_comparison} for an illustration.
\end{remark}

\begin{figure}[htb]
\centering
\includegraphics[width=0.6\linewidth]{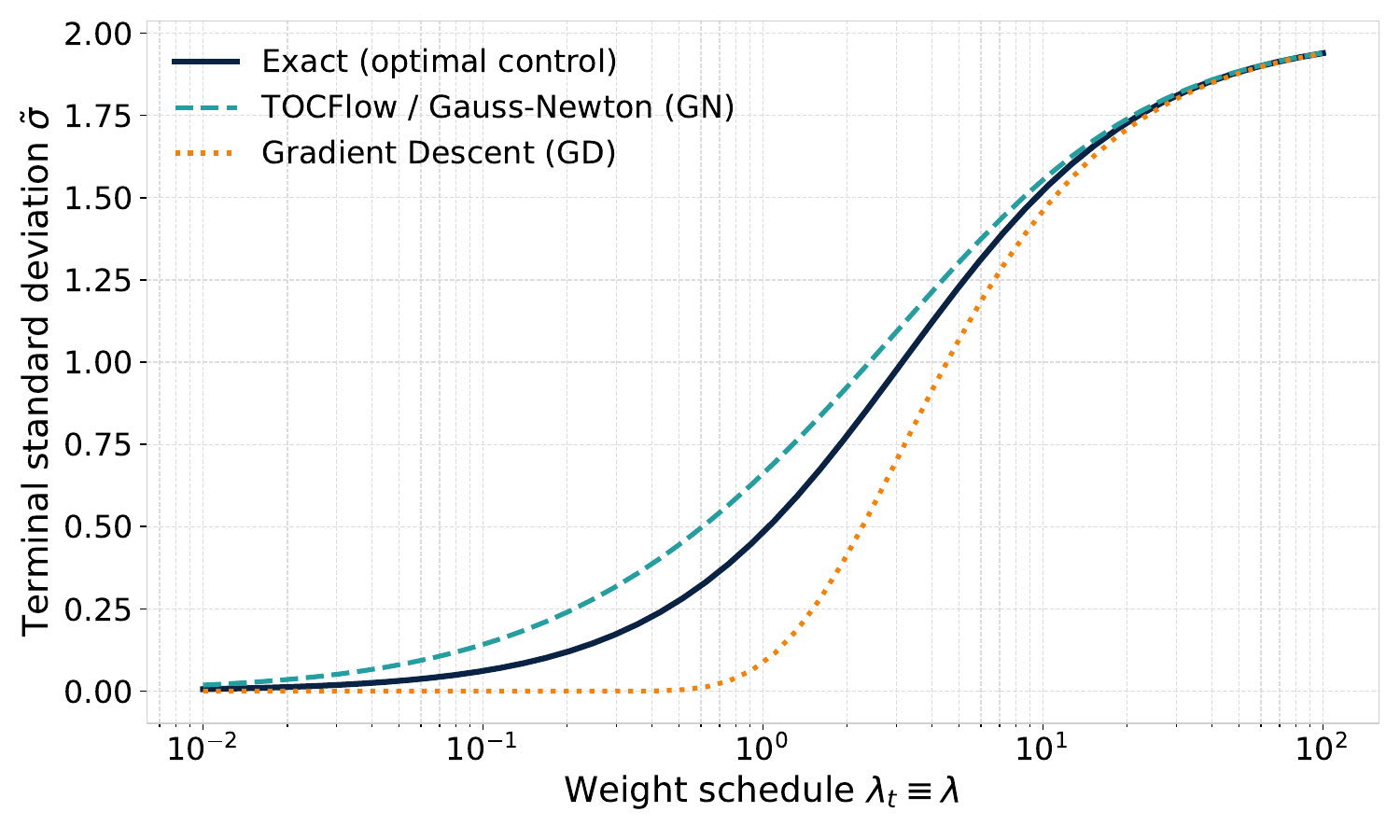}
\caption{\textbf{Terminal standard deviation \(\tilde{\sigma}\) against the weight schedule \(\lambda_t\) under different guidance schemes.} Near \(\lambda=0\), the GD scheme decays too fast compared with the optimal-control solution, whereas GN and TOCFlow exhibit a comparatively more accurate contraction.}
\label{fig:gaussian_comparison}
\end{figure}
\section{Representative applications}
\label{sec:representative_applications}

To demonstrate the versatility and effectiveness of our framework, we evaluate TOCFlow on three distinct high-dimensional generation tasks, each presenting a unique class of constraints and computational challenges. As summarized in~\Cref{tab:experiments_overview}, these applications cover a diverse range of scientific scenarios. First, we address Darcy flow generation (\Cref{subsec:high_dimensional_pde_constraints_darcy_flow}), which imposes high-dimensional equality constraints coupled through an elliptic differential operator, testing the method's ability to handle stiff, implicit manifolds. Second, we consider trajectory planning (\Cref{subsec:geometric_inequality_constraints_trajectory_planning}), where the goal is to satisfy non-convex geometric inequalities (obstacle avoidance), challenging the method to navigate discontinuous energy landscapes. Finally, we tackle turbulence snapshot generation (\Cref{subsec:global_spectral_constraints_turbulence_snapshot}), which enforces a global spectral law (Kolmogorov scaling), requiring the control to act coherently across all spatial scales.

\begin{table}[!ht]
\centering
\caption{Overview of the experimental settings for the three representative applications. The tasks are chosen to cover a diverse range of constraint types (equality vs.~inequality, local vs.~global) and dimensionalities.}
\label{tab:experiments_overview}
\footnotesize
\setlength{\tabcolsep}{4pt}
\renewcommand{\arraystretch}{1.25}
\begin{tabular}{@{}p{27mm}p{50mm}p{44mm}p{39mm}@{}}
\toprule
\footnotesize
& \textbf{Darcy flow} & \textbf{Trajectory planning} & \textbf{Turbulence snapshot} \\
\midrule
\textbf{Data dim \(d\)} &
\(2\times 64\times 64\) &
\(512\) &
\(256\times 256\) \\
\midrule
\textbf{Constraint} &
Enforce the discretized Darcy PDE residual at every grid location &
Obstacle avoiding inequality along the whole path in state space &
Match the Kolmogorov's power-law scaling of the generated field \\
\midrule
\textbf{Constraint dim \(r\)} &
\(64\times 64\) (High-dim) &
\(4\times 41\) (Medium-dim) &
\(1\) (Scalar) \\
\midrule
\textbf{Challenge} &
Constraint dimension comparable to data dimension and tightly coupled through an elliptic operator &
Inequality constraints that define a non convex feasible region for the trajectory &
Global constraint that acts in Fourier space and couples all spatial locations \\
\midrule
\textbf{Generative modeling approaches} & Constrained generation using gradient guidance~\citep{huang2024diffusionpde} or manifold projection~\citep{jacobsen2025cocogen,utkarsh2025physics} & Unconstrained trajectory generation~\citep{kerrigan2023functional} & Unconstrained turbulence snapshot generation~\citep{whittaker2024turbulence}\\
\midrule
\textbf{Architecture} &
Diffusion Transformer~\citep{peebles2023scalable} &
Fourier Neural Operator~\citep{li2021fourier} &
U-Net~\citep{ronneberger2015u}\\
\bottomrule
\end{tabular}
\end{table}

\subsection{Numerical implementation and comparison baselines}
\label{subsec:numerical_implementation_and_baselines}

\paragraph{Discretization of controlled dynamics.}
For all tasks, the reference velocity field \(\bm{b}^\star\) is parameterized by a deep neural network trained via Flow Matching. The sampling process involves integrating the controlled dynamics
\begin{equation}
\dot{\bm{X}}_t = \bm{b}^\star(\bm{X}_t, t) + \bm{a}(\bm{X}_t, t), \quad t\in[0, 1].
\end{equation}
The choice of integrator depends on the computational demands of the domain. For the Darcy flow and trajectory planning experiments, we use Heun's method (explicit trapezoidal rule) with a fixed step size \(\Delta t = 0.005\) (\(N=200\) steps) to ensure high-order accuracy. For the turbulence snapshot experiment, we employ the Forward Euler method to maximize throughput given the high dimensionality of the state space.

\paragraph{Gradient computation via automatic differentiation.}
Both the proposed TOCFlow method and the Gradient Descent (GD) method require the gradient of the terminal cost composed with the flow map, i.e., \(\nabla_{\bm x}(H \circ \Phi_{t \to 1}^{\bm{b}^\star})\). We estimate this spatial gradient using a lookahead estimator. At state \(\bm{X}_t\), we simulate the reference ODE to terminal time \(t=1\) using \(k\) Forward Euler steps (typically \(k=4\), with ablation studies of this parameter provided). We then compute the gradient of the terminal penalty with respect to \(\bm{X}_t\) by differentiating through this unrolled solver sequence. In our implementation, this is performed using PyTorch's standard automatic differentiation engine (\texttt{torch.autograd})~\citep{paszke2019pytorch}, which executes reverse-mode differentiation (the discrete adjoint method) to obtain exact gradients up to machine precision.

\paragraph{Implementation details of TOCFlow.}
Our method computes the control using the closed-form damping factor~\eqref{eq:dampling_factor} derived in~\Cref{subsec:computational_complexity_and_the_tocflow_scheme}:
\begin{equation}
\bm{a}(\bm{x}, t) = -\frac{\lambda_t^{-1}}{1 + s(t)\rho} \nabla_{\bm x} (H \circ \Phi_{t \to 1}^{\bm{b}^\star})(\bm{x}),
\end{equation}
where
\begin{equation}
s(t) = \int_t^1\dfrac{1}{\lambda_u}\mathrm du, \quad \rho = \dfrac{\|\nabla_{\bm x}(H\circ\Phi_{t\to 1}^{\bm b^\star})(\bm x)\|_2^2}{\|\bm h(\Phi_{t\to1}^{\bm b^\star}(\bm x))\|_2^2}. 
\end{equation}
In our implementation, we use the \(k\)-step Forward Euler lookahead described above to estimate the gradient \(\nabla_{\bm x} (H \circ \Phi_{t \to 1}^{\bm{b}^\star})(\bm{x})\). The weight schedule follows a power-law schedule \(\lambda_t = \lambda_0 (1-t)^\gamma\), where the scale \(\lambda_0\) and decay \(\gamma\) are hyperparameters tuned via grid search over powers of \(10\) (i.e., \(\lambda_0, \gamma\in\{10^i\}_{i\in\mathbb{Z}}\)). We will report these hyperparameters in each subsection below. We note a specific adaptation for the turbulence snapshot experiment. Due to the global nature of the spectral constraint (i.e., the constraint dimension \(r=1\)), we perform the exact Gauss--Newton update (solving the linear system without the diagonal approximation) to maximize accuracy. For consistency in the discussion of results, we refer to this method as TOCFlow throughout the text.

\paragraph{Comparison baselines.}
We compare TOCFlow against four widely-used baselines:

\begin{enumerate}
\item \textbf{Vanilla:} Samples are generated by integrating the reference field \(\bm{b}^\star(\bm x, t)\) without control (\(\bm{a}(\bm x, t) \equiv \bm{0}\)). This characterizes the unconstrained data distribution \(\bar{\rho}_1\).

\item \textbf{Gradient Descent (GD):} As derived in~\Cref{subsubsec:gradient_descent_derivation}, the GD method approximates the optimal control by taking a steepest descent step on the composed terminal cost, defined by the control law \(\bm{a}(\bm{x}, t) = -\eta_t \nabla_{\bm x} (H \circ \Phi_{t \to 1}^{\bm{b}^\star})(\bm{x})\). In our implementation, we estimate the gradient using the \(k\)-step Forward Euler lookahead described above. We treat the step size \(\eta_t \equiv \eta\) as a constant hyperparameter, selected via a grid search over powers of \(10\) (i.e., \(\eta \in \{10^i\}_{i \in \mathbb{Z}}\)) to minimize the terminal constraint violation.

\item \textbf{Approximated Gauss--Newton (GN):} As derived in~\Cref{subsubsec:gauss_newton_derivation}, the GN method approximates the optimal control by
\begin{equation}
\bm{a}(\bm{x}, t) = -\lambda_t^{-1} D\Phi_{t \to 1}^\top \bm{J}_{\bm h}^\top \bigl(\bm{I} + s \bm{J}_{\bm h} \bm{G} \bm{J}_{\bm h}^\top \bigr)^{-1} \bm{h}(\Phi_{t \to 1}(\bm{x})).
\end{equation}
As discussed in~\Cref{subsec:computational_complexity_and_the_tocflow_scheme}, evaluating this expression is computationally prohibitive because the linear operator involves the flow Jacobian \(D\Phi_{t\to 1}^{\bm b^\star}\) and inverting it requires differentiating through the ODE solver at every iteration of the Conjugate Gradient loop. To circumvent this, we implement a heuristic based on the linearity of the optimal transport map, similar to strategies employed in ECI~\citep{cheng2025gradient} and PCFM~\citep{utkarsh2025physics}. At each step \(t\), we first estimate the terminal state \(\hat{\bm{X}}_1 \approx \Phi_{t \to 1}^{\bm{b}^\star}(\bm{X}_t)\) using the \(k\)-step Forward Euler lookahead. We then project \(\hat{\bm{X}}_1\) onto the constraint manifold using a standard Gauss--Newton solver to obtain a valid target \(\bm X_1\) (this projection is cheap as it occurs at \(t=1\) and does not involve \(D\Phi_{t\to1}^{\bm b^\star}\)). Finally, we construct the state at the next timestep \(t+\Delta t\) by interpolating between the initial noise \(\bm{X}_0\) and the projected target \(\bm X_1\), effectively pulling the correction back along the optimal transport geodesic:
\begin{equation}
\bm{X}_{t+\Delta t} \gets (1-(t+\Delta t))\bm{X}_0 + (t+\Delta t)\bm{X}_1.
\end{equation}
The above process is repeated for each \(t\in[0,1)\) until a final generated sample \(\bm X_1\) is obtained.

\item \textbf{Terminal projection:} This method generates unconstrained samples \(\bm{X}_1 \sim \bar{\rho}_1\) using the vanilla method and subsequently projects them onto the constraint manifold \(\mathcal{M} = \{\bm{x} \colon \bm{h}(\bm{x}) = \bm{0}\}\). The projection is obtained by solving the constrained minimization problem \(\min_{\bm{z}} \|\bm{z} - \bm{X}_1\|_2^2\) subject to \(\bm{h}(\bm{z})=\bm{0}\). We employ the same Gauss--Newton solver used in the GN baseline. Since the solver performs only an approximate projection, we use a fixed budget of \(1000\) iterations to match the maximum iteration budget in comparable work such as CoCoGen~\citep{jacobsen2025cocogen}. Empirical analysis confirms that this computational budget yields results similar as those obtained using adaptive stopping criteria based on the relative change of the residual norm.
\end{enumerate}

\subsection{High-dimensional PDE constraints: Darcy flow}
\label{subsec:high_dimensional_pde_constraints_darcy_flow}

We first apply our framework to Darcy flow, a fundamental system in fluid dynamics describing flow through porous media. In this setting, the state space consists of high-dimensional grid-based fields representing physical coefficients and solutions, and the objective is to generate consistent \emph{pairs} that satisfy the PDE. We define the constraints by enforcing the PDE residual to be zero at every discretization point. Mathematically, this requires enforcing \emph{high-dimensional equality constraints} coupled through differential operators, making it challenging.

\paragraph{Dataset.}
We use the standard Darcy flow setup~\citep{baldan2025flow,bastek2025physics}. The physical system is governed by the steady-state diffusion equation with mass conservation:
\begin{equation}
\begin{aligned}
\bm{u}(\bm{x}) &= -\bm K(\bm{x})\nabla \bm p(\bm{x}), \\
\nabla \cdot \bm{u}(\bm{x}) &= f(\bm{x}), \quad \bm{x} \in \Omega = [0, 1]^2,
\end{aligned}
\end{equation}
subject to zero-flux Neumann boundary conditions \(\bm{u} \cdot \bm{n} = 0\) on \(\partial \Omega\) and a global zero-mean pressure constraint \(\int_\Omega p(\bm{x}) \mathrm d\bm{x} = 0\). The source term \(f(\bm{x})\) is a fixed, piecewise constant function defined as
\begin{equation}
f(\bm{x}) = \begin{cases} 
r, & \text{if } |x_i - w/2| \le w/2, \quad \text{for } i=1,2, \\ 
-r, & \text{if } |x_i - (1 - w/2)| \le w/2, \quad \text{for } i=1,2, \\ 
0, & \text{otherwise}
\end{cases}
\end{equation}
with magnitude \(r=10\) and width \(w=0.125\). The permeability field \(\bm K(\bm{x})\) is sampled from a log-normal Gaussian Random Field (GRF), defined as \(\bm K(\bm{x}) = \exp(\bm G(\bm{x}))\). The underlying GRF \(\bm G(\bm{x})\) is characterized by a zero mean and an exponential covariance kernel
\begin{equation}
k(\bm{x}, \bm{x}') = \exp(-\|\bm{x} - \bm{x}'\|_2 / \ell)
\end{equation}
with \(\ell=0.1\). To ensure computational tractability, the field is constructed using a truncated Karhunen--Lo\`{e}ve expansion~\citep{wang2008karhunen}
\begin{equation}
\bm G(\bm{x}) = \sum_{i=1}^{s} \sqrt{\lambda_i} z_i \bm {\phi}_i(\bm{x}),
\end{equation}
where \(s=64\) is the number of retained modes, \((\lambda_i, \bm{\phi}_i)\) are the eigenvalues and eigenfunctions of the kernel, and \(z_i \sim \mathcal{N}(0, 1)\) are standard normal coefficients. The corresponding pressure fields \(\bm p(\bm{x})\) are obtained by solving the discretized system using a least-squares solver to handle the over-determined constraints.

We use the dataset publicly available at~\href{https://doi.org/10.3929/ethz-b-000674074}{Darcy Flow dataset}~\citep{bastek2025physics}, which consists of \(10{,}000\) pairs of permeability fields \(\bm K(\bm x)\) and pressure solutions \(\bm p(\bm x)\) defined on the unit square domain \(\bm x \in [0, 1]^2\). The data is discretized on a spatial grid of resolution \(64 \times 64\), resulting in a state dimension of \(d = 2 \times 64^2 = 8{,}192\). Before training, both the permeability and pressure channels are standardized channel-wise using the global mean and standard deviation computed over the training set.

\paragraph{Pre-trained model architecture.}
The time-dependent velocity field \(\bm{b}^\star(\bm x, t)\) is parameterized using a Diffusion Transformer (DiT)~\citep{peebles2023scalable}. The model processes the discretized PDE fields as images, partitioning the \(64 \times 64\) input into non-overlapping patches (size \(4 \times 4\)) which are linearly projected into a latent embedding space. To preserve spatial structure, fixed \(2\)-dimensional sinusoidal (sin-cos) positional embeddings are added to these patch tokens. The diffusion timestep \(t\) is encoded via a sinusoidal frequency embedding followed by a Multi-Layer Perceptron (MLP) to form a global conditioning vector \(\bm{c}\). The network consists of a sequence of DiT blocks that employ an adaptive layer normalization (adaLN-Zero) mechanism. Within each block, the conditioning vector \(\bm{c}\) is projected to regress scale and shift parameters that modulate the normalized features immediately prior to the self-attention and pointwise feedforward layers. Additionally, \(\bm{c}\) determines dimension-wise gating scalars for the residual connections, which are initialized to zero to approximate an identity mapping at the start of training.

\paragraph{Training details.}
The model is trained using the Flow Matching objective with an Optimal Transport (OT) conditional vector field~\citep{lipman2023flow,liu2023flow}. We optimize the network parameters using the AdamW~\citep{loshchilov2019decoupled} optimizer with a constant learning rate of \(3 \times 10^{-5}\) and a batch size of \(128\). and training proceeds for a total of \(8{,}000\) epochs. To stabilize the training dynamics and improve sample quality, we maintain an Exponential Moving Average (EMA)~\citep{morales2024exponential} of the model parameters with a decay rate of \(0.999\).

\paragraph{Constraint formulation.}
We interpret the generative task as a physics-constrained sampling problem where the generated state \((\bm K, \bm p)\) must satisfy the steady-state Darcy flow PDE. To handle the global zero-mean pressure constraint, we explicitly project the generated pressure field onto the valid subspace by subtracting its spatial mean, \(\bm p(\bm{x}) \gets\bm p(\bm{x}) - \int_\Omega \bm p(\bm{x}) \mathrm{d}\bm{x}\), before evaluating the residuals. We then define the constraint function \(\bm{h}(\bm{x})\) by concatenating the discretized interior PDE residual and the boundary flux violations. Using a second-order central finite difference scheme \(\mathcal{D}\), the components at grid index \((i, j)\) are given by
\begin{equation}
h_{\text{int}}(\bm{x})_{i,j} = \bigl( -\nabla \cdot (\bm K \nabla \bm p) - f \bigr)_{i,j} \approx - K_{i,j} \mathcal{D}^2 p_{i,j} - (\mathcal{D} K_{i,j}) \cdot (\mathcal{D} p_{i,j}) - f_{i,j},
\end{equation}
for interior points, and the Neumann boundary residual is
\begin{equation}
h_{\text{bc}}(\bm{x})_{i,j} = (\nabla \bm p \cdot \bm{n})_{i,j} \approx \mathcal{D}_{\bm{n}} p_{i,j},
\end{equation}
for points on the boundary \(\partial \Omega\). The total constraint vector \(\bm{h}(\bm{x})\) collects these values across the \(64 \times 64\) grid. The terminal cost is defined as the squared Euclidean norm of this residual, \(H(\bm{x}) = \frac{1}{2} \|\bm{h}(\bm{x})\|_2^2\). This formulation is particularly challenging because the constraint Jacobian approximates an elliptic operator, which is sparse but ill-conditioned, and the constraint dimension \(r\) scales linearly with the data dimension \(d\) (specifically \(r=d/2\)), severely restricting the manifold of valid solutions.

\paragraph{Implementation details.}
We detail here the specific hyperparameters for the Darcy flow task, complementing the general settings described in~\Cref{subsec:numerical_implementation_and_baselines}. For GD and TOCFlow, the spatial gradient \(\nabla_{\bm{x}} (H \circ \Phi_{t\to 1}^{\bm{b}^\star})\) is estimated using a \(k=4\) step forward Euler lookahead. For GD, we employ a constant step size of \(\eta = 0.01\). For TOCFlow, we set the weight schedule parameters to \(\lambda_0 = 100\) and \(\gamma = 0\) (constant weight schedule). For the terminal projection method, we project samples onto the constraint manifold using a damped Gauss--Newton solver with a maximum budget of \(1{,}000\) iterations. The linearized subproblem at each step is solved inexactly via Conjugate Gradient with \(20\) inner iterations. All methods are implemented on top of the public~\href{https://github.com/tum-pbs/PBFM}{Physics Based Flow Matching} codebase~\citep{baldan2025flow}, which we extend with our GD, GN, and TOCFlow modules.

\paragraph{Results.}
We compare the performance of TOCFlow against the baselines in terms of constraint satisfaction and sample quality. We note that the approximated Gauss--Newton baseline is omitted from this analysis as it exhibited numerical divergence. \Cref{fig:darcy_visualization} visualizes the generated permeability \(\bm{K}\) and pressure \(\bm{p}\) fields alongside their associated PDE residual absolute error \(|\bm h(\bm{K}, \bm{p})|\). Interestingly, the results reveal two distinct performance clusters. The vanilla method (top left) and the terminal projection method (bottom right) yield nearly identical results, exhibiting high residual errors (bright yellow regions). This suggests that the projection solver fails to converge, likely because the unguided samples lie outside the basin of attraction of the constraint manifold, where the ill-conditioning of the discrete elliptic operator makes the local Gauss--Newton projectin ineffective. In contrast, both GD method (top right) and TOCFlow (bottom left) demonstrate significant and similar improvements, effectively reducing the residual magnitude by an order of magnitude (dominated by blue regions).

\begin{figure}[!ht]
\includegraphics[width=0.495\linewidth]{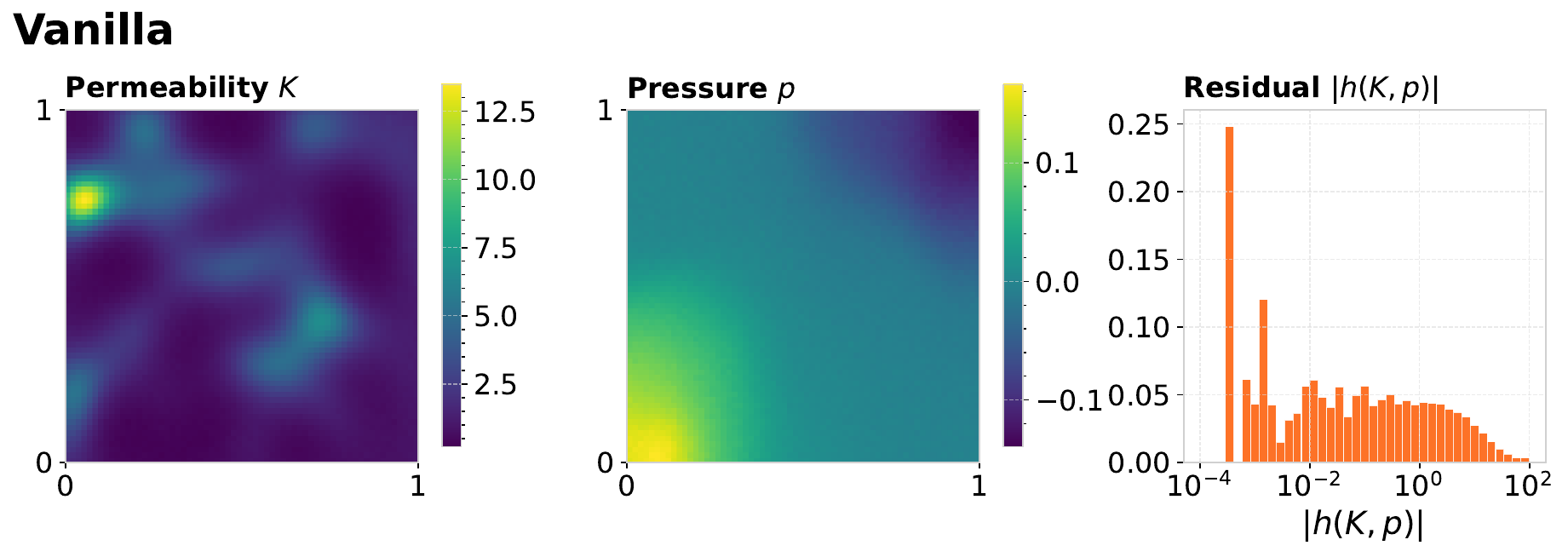}
\includegraphics[width=0.495\linewidth]{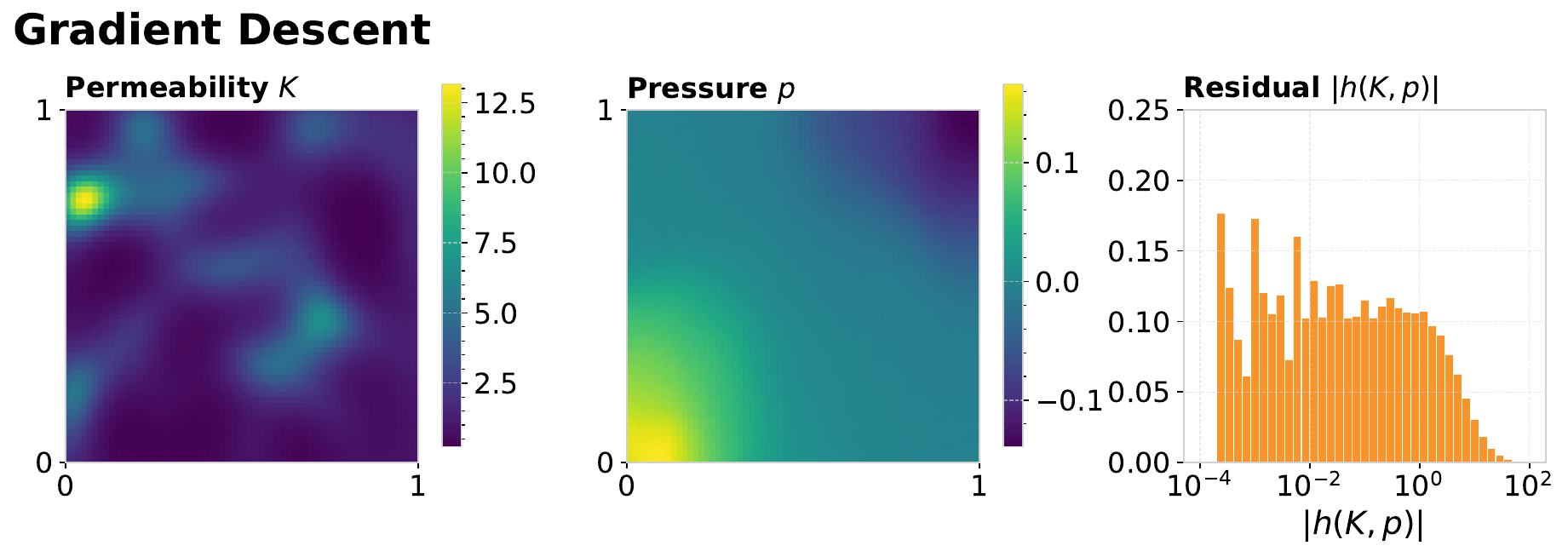}\par
\includegraphics[width=0.495\linewidth]{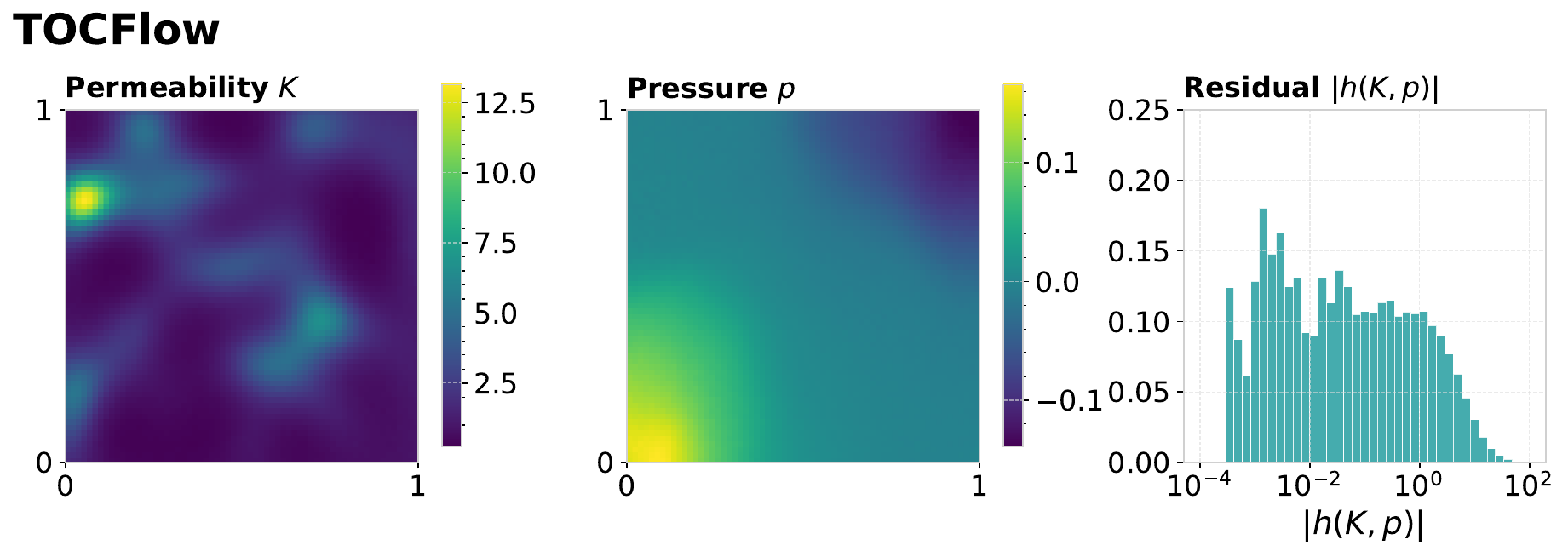}
\includegraphics[width=0.495\linewidth]{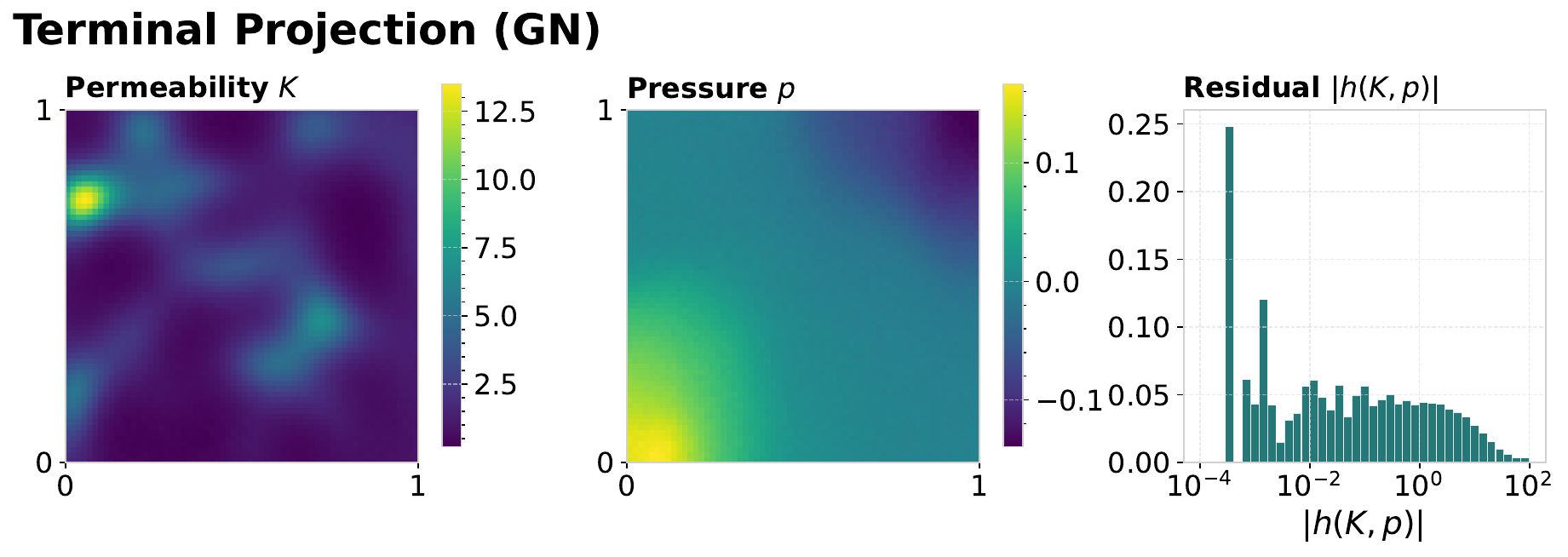}
\caption{\textbf{Qualitative comparison of the generated samples using different methods.}  The panels display the permeability field \(\bm K\) (left) and the pressure field \(\bm p\) (middle) for a representative sample. The rightmost plot in each block shows the histogram of PDE residuals aggregated over \(512\) generated samples, with the \(y\)-axis representing density. \textbf{Top row:} Vanilla (left) and GD (right). \textbf{Bottom row:} TOCFlow (left) and terminal projection (right). Both GD and TOCFlow shift the residual distribution toward lower values, indicating better constraint satisfaction.}
\label{fig:darcy_visualization}
\end{figure}

Quantitative results are summarized in~\cref{fig:darcy_ablation}. The violin plots of the log-terminal cost \(\log_{10} H(\bm{f})\) (top) confirm this grouping: vanilla and terminal projection methods cluster around a high cost of \(10^{2.6}\), while GD and TOCFlow achieve significantly lower costs around \(10^{1.8}\). We further investigate the impact of hyperparameters in the bottom rows of~\cref{fig:darcy_ablation}. Notably, GD and TOCFlow exhibit remarkably similar behavior. Increasing the lookahead steps \(k\) yields a slight increase in the terminal cost for both methods. For the terminal projection baseline (bottom right), increasing the number of iteration steps from \(1\) to \(1{,}000\) has almost no effect on the terminal cost, confirming that the solver remains stuck in a local minimum far from the valid manifold.

\begin{figure}[!ht]
\hfil
\includegraphics[width=0.45\linewidth]{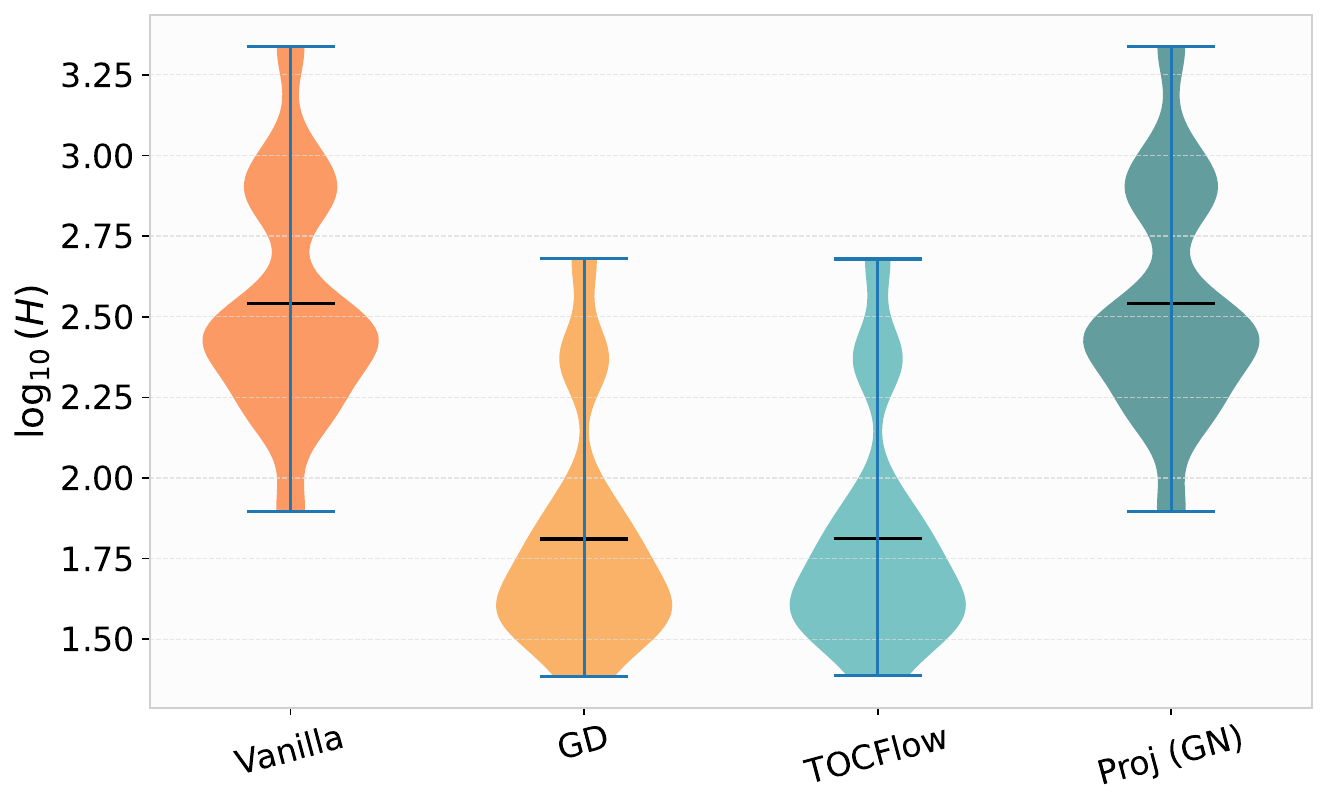}\par
\includegraphics[width=0.337\linewidth]{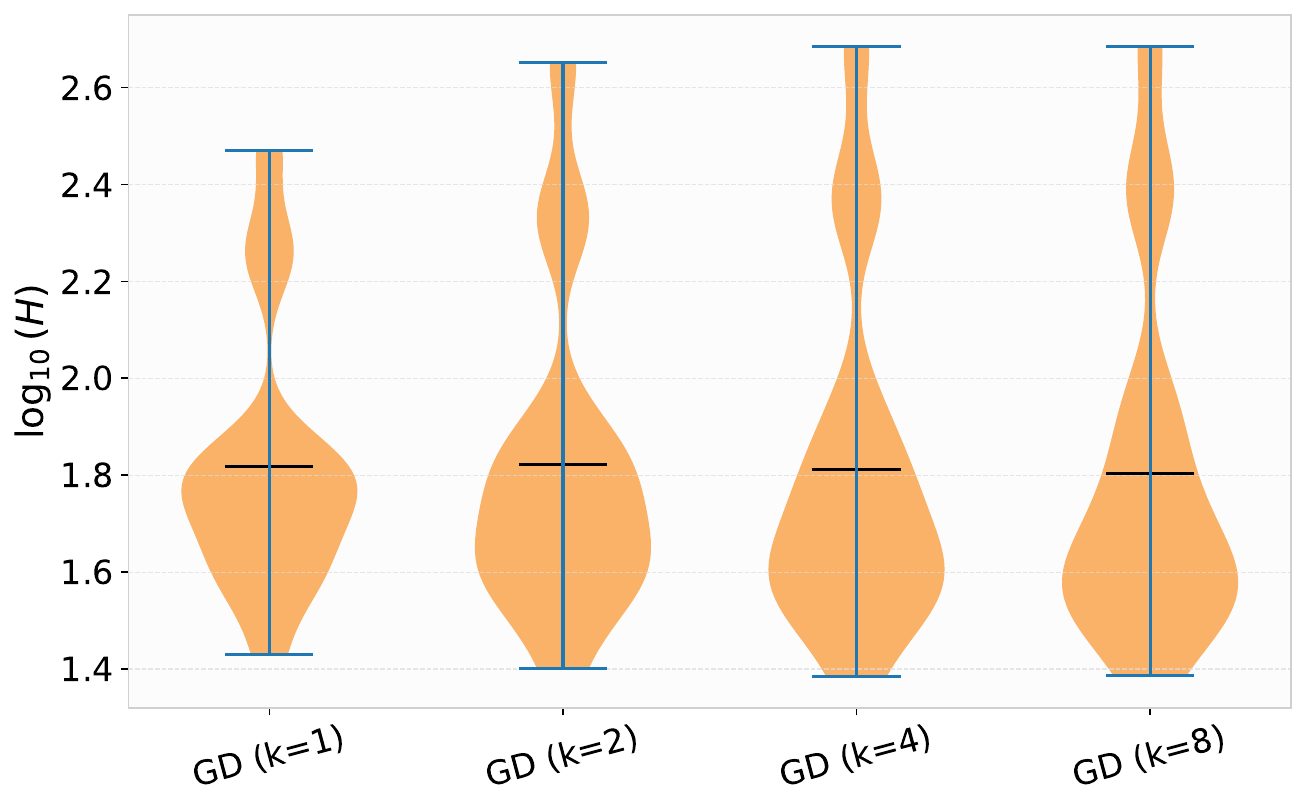}
\includegraphics[width=0.328\linewidth]{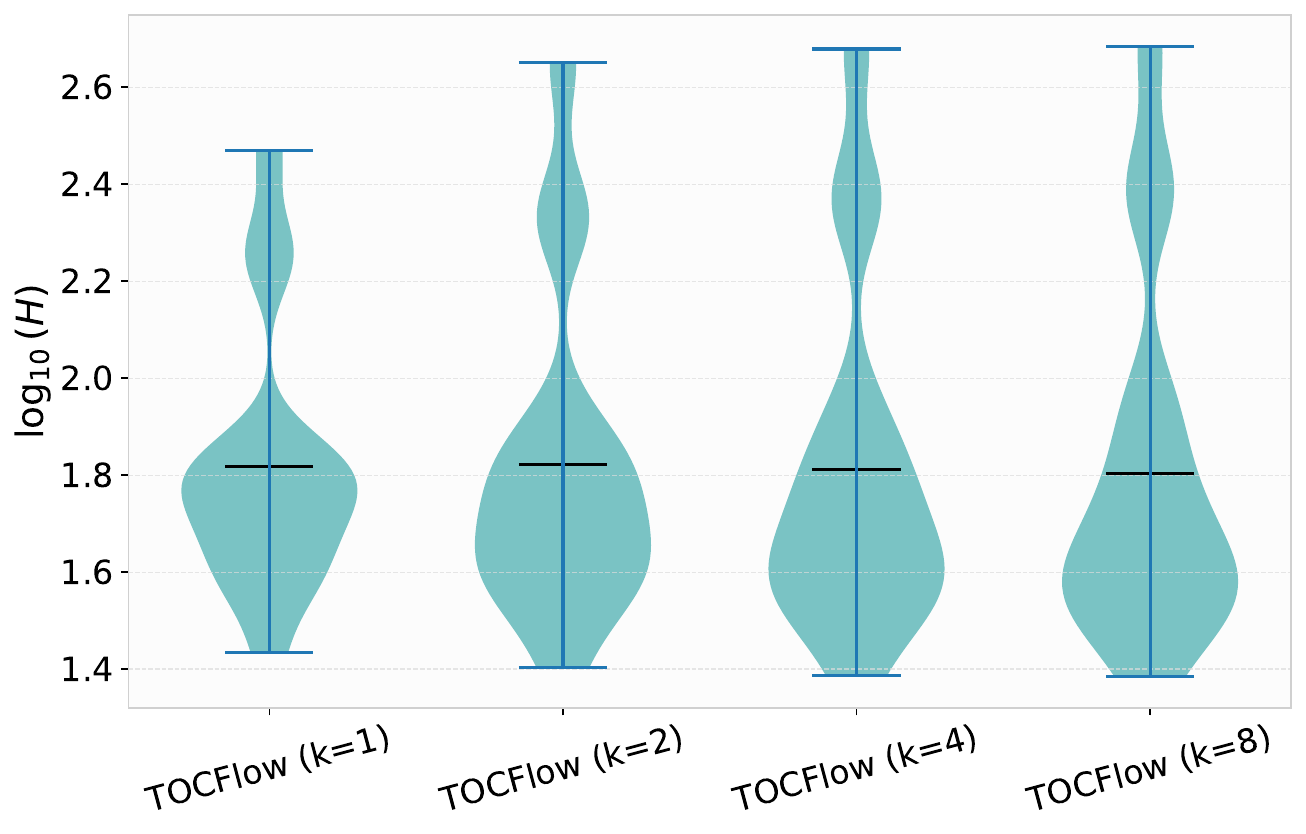}
\includegraphics[width=0.325\linewidth]{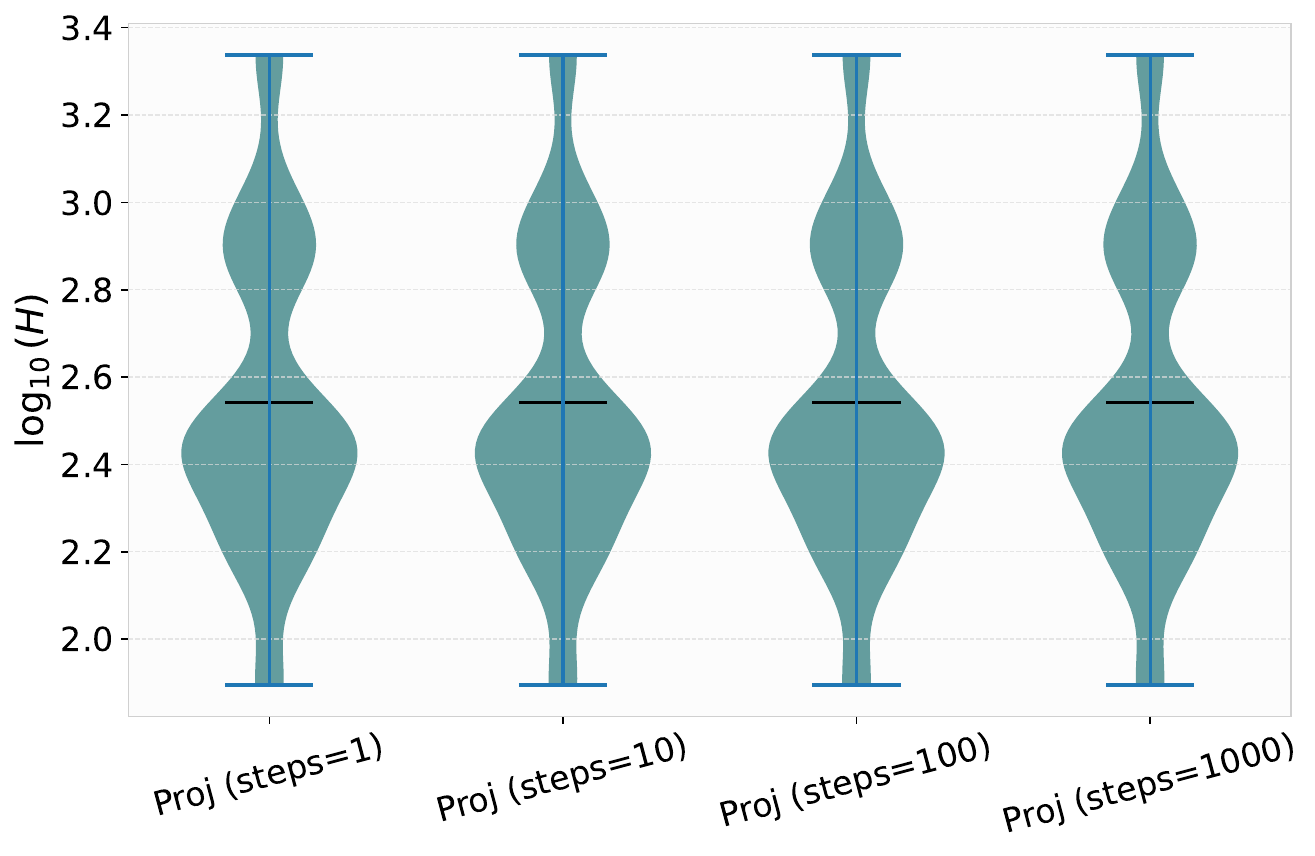}
\caption{\textbf{Quantitative evaluation of constraint violation and hyperparameter sensitivity.} \textbf{Top:} Violin plots of the terminal cost \(H(\bm{x})\) (log scale) for \(512\) generated samples across all methods. The results show two performance tiers: vanilla/terminal projection (high error) vs. GD/TOCFlow (low error). \textbf{Bottom left/center:} Ablation of the number of lookahead explicit Euler steps \(k\) for GD and TOCFlow. Both methods show similar sensitivity, with increased \(k\) slightly increasing the cost. \textbf{Bottom right:} Ablation of the number of iterations for terminal projection. The error remains high and constant regardless of the iteration count, indicating that the solver cannot bridge the gap from the unguided distribution to the constraint manifold.}
\label{fig:darcy_ablation}
\end{figure}

\subsection{Geometric inequality constraints: Trajectory planning}
\label{subsec:geometric_inequality_constraints_trajectory_planning}

We then apply our framework to a trajectory planning problem, a standard task in robotic motion planning and control, where collision avoidance is commonly enforced either via corridor-based constraints~\citep{gao2018online} or via trajectory optimization with sequential convexification~\citep{schulman2014motion}. In this setting, the state space consists of functions, and the objective is to generate smooth trajectories that navigate through a domain populated with obstacles. We define the constraints by confining the trajectories to ``safety corridors.'' Mathematically, this necessitates enforcing \emph{geometric inequality constraints}, distinguishing this application from the equality constraints addressed in the Darcy flow example.

\paragraph{Dataset.}
We construct a synthetic dataset of smooth \(1\)-dimensional trajectories \(\bm{f} = \{f(x_i)\}_{i=1}^{n_x}\) over the longitudinal domain \(x \in [0, 1]\). The unconstrained data distribution \(\bar{\rho}_1\) is defined as a mixture of two Gaussian Processes (GPs), i.e., \(0.5\mathcal{GP}_1 + 0.5\mathcal{GP}_2\). Both components share a Radial Basis Function (RBF) covariance kernel
\begin{equation}
k(x, x') = \sigma^2 \exp\biggl(-\frac{|x-x'|^2}{2\ell^2}\biggr),
\end{equation}
with signal variance \(\sigma^2 = 0.4\) and length scale \(\ell = 0.1\). The components are distinguished by their mean functions, which represent two crossing linear paths:
\begin{equation}
\mu_1(x) = 10x - 5 \quad \text{and} \quad \mu_2(x) = -10x + 5.
\end{equation}
To generate a discrete sample \(\bm{f} \in \mathbb{R}^{n_x}\), we first sample \(n_x=512\) spatial coordinates uniformly from \([0,1]\) and sort them. We then evaluate a function draw from one of the GPs at these coordinates. See~\cref{fig:trajectory_dataset} for representative dataset samples.

\begin{figure}[!ht]
\centering
\includegraphics[width=0.6\linewidth]{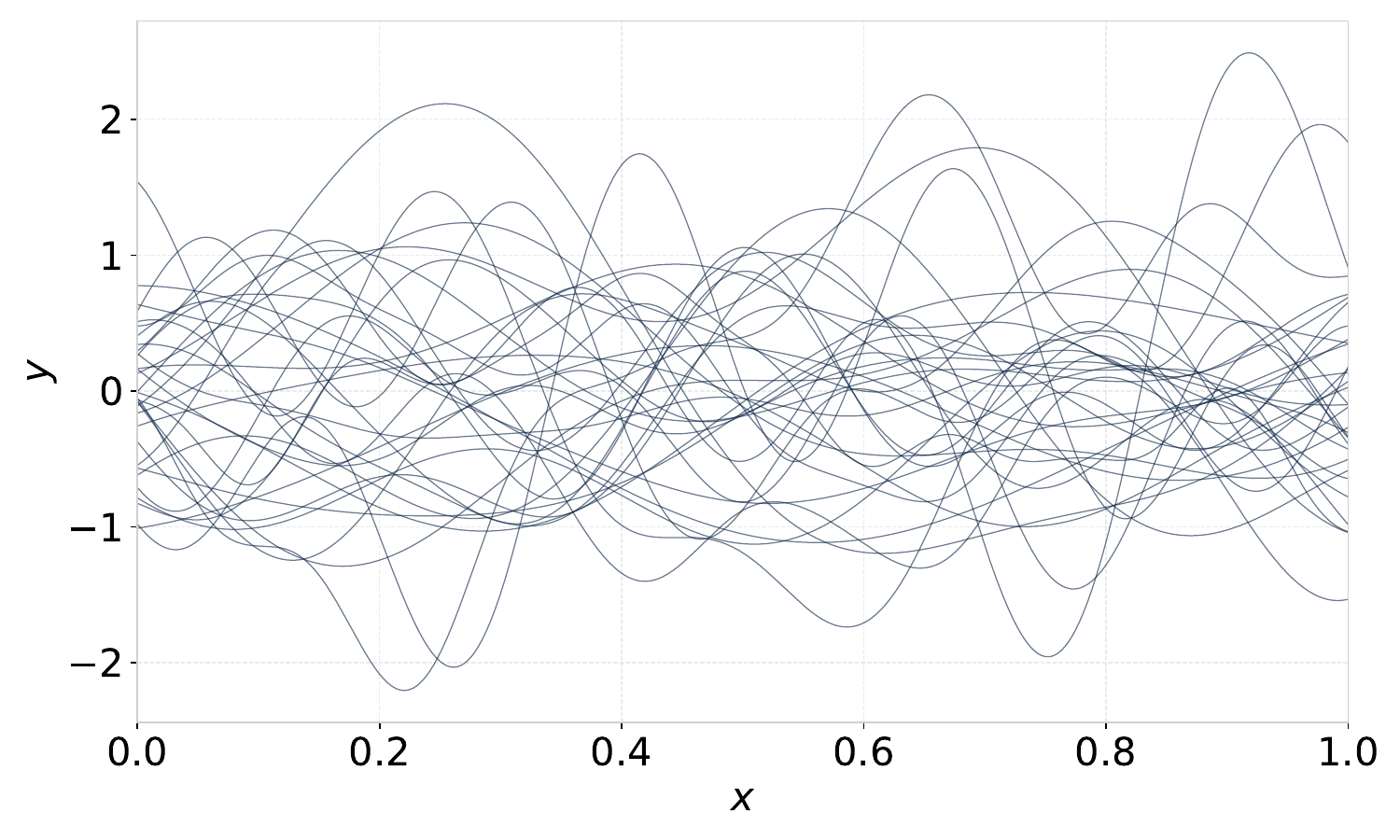}
\caption{Visualization of \(32\) independent realizations from the synthetic trajectory dataset in~\Cref{subsec:geometric_inequality_constraints_trajectory_planning}.}
\label{fig:trajectory_dataset}
\end{figure}

\paragraph{Pre-trained model architecture.}
The time-dependent velocity field \(\bm{b}^\star(\bm f, t)\) is parameterized using a Fourier Neural Operator (FNO)~\citep{li2021fourier}. The model inputs a concatenation of the trajectory state \(\bm{f}(x)\), the spatial coordinate \(x\), and the Flow Matching timestep \(t\) (broadcasted as a constant function over the domain). The input is lifted to a codomain of dimension \(256\) and processed through a sequence of spectral convolution layers. In each layer, the signal is transformed to the frequency domain via the Fast Fourier Transform (FFT). A learnable complex weight matrix then acts linearly on the lowest \(64\) frequency modes, filtering the signal before an inverse FFT recovers the spatial representation. Finally, a MLP with a hidden width of \(128\) projects the features back to the velocity space. By defining the non-linearity in the frequency domain, this architecture approximates a global integral operator, allowing the velocity field to capture long-range spatial correlations independent of the discretization grid.

\paragraph{Training details.}
The model is trained using the Functional Flow Matching (FFM)~\citep{kerrigan2023functional} objective with the Optimal Transport (OT) conditional vector field. To handle the functional nature of the data, the regression loss is defined using a reproducing kernel Hilbert space (RKHS) norm induced by a Gaussian RBF kernel \(k(x, y) = \sigma_k^2 \exp(-|x-y|^2 / 2\ell_k^2)\), with variance \(\sigma_k^2=0.1\) and length scale \(\ell_k=0.01\). The training is performed on a dataset of \(5{,}120\) synthetic curves discretized at \(n_x = 512\) grid points, which are scaled to the interval \([-3, 3]\) prior to processing. We optimize the network parameters using the Adam~\citep{kingma2014adam} optimizer with an initial learning rate of \(10^{-3}\) and a batch size of \(32\). The learning rate is decayed using a step scheduler every \(25\) epochs, and training proceeds for a total of \(1{,}000\) epochs.

\paragraph{Constraint formulation.}
We interpret the generative task as a trajectory planning problem for a robot crossing a domain from longitudinal position \(x=0\) to \(x=1\). In this setting, the grid index represents the longitudinal progress, and the function value \(\bm{f}\) represents the robot's lateral position. We evaluate the methods on a ``safety corridor'' constraint. Specifically, the path \(\bm{f}\) is constrained on a subset of longitudinal coordinates \(\mathcal{T} \subset \{0, \dots, n_x-1\}\) corresponding to \(k_{\text{spans}}=4\) distinct segments. In these segments, the lateral position \(f_k\) is restricted to a randomly generated interval \(\mathcal{I}_k = [\ell_k, u_k]\). The constraints are generated such that the width of the valid corridor varies between \(12\%\) and \(25\%\) of the total lateral range.

We model this inequality constraint via a non-smooth penalty function composed of Rectified Linear Units (ReLU) (i.e., \(x\mapsto\max(0, x)\)). The vector-valued constraint residual \(\bm{h}(\bm{f}) \in \mathbb{R}^{|\mathcal{T}|}\) is defined by the scaled pointwise violations:
\begin{equation}
h_k(\bm{f}) = \frac{1}{\sqrt{|\mathcal{T}|}}\cdot\bigl( \max(0, \ell_k - f_k) + \max(0, f_k - u_k) \bigr), \quad k \in \mathcal{T}.
\end{equation}
The terminal cost is then given by the squared Euclidean norm \(H(\bm{f}) = \frac{1}{2} \|\bm{h}(\bm{f})\|_2^2\). This formulation creates a flat potential with zero gradient strictly inside the safety corridor, and quadratic growth within the obstacles. This poses a challenge for gradient-based sampling, as the guidance signal vanishes entirely when the robot is momentarily in free space, potentially allowing the diffusion noise to push the state back into collision.

\paragraph{Implementation details.}
We detail here the specific hyperparameters for the trajectory planning task, complementing the general settings described in~\Cref{subsec:numerical_implementation_and_baselines}. For GD and TOCFlow, the spatial gradient \(\nabla_{\bm{x}} (H \circ \Phi_{t\to 1}^{\bm{b}^\star})\) is estimated using a \(k=4\) step forward Euler lookahead. For GD, we employ a constant step size of \(\eta = 0.1\). For TOCFlow, we set the weight schedule parameters to \(\lambda_0 = 0.1\) and \(\gamma = 0\) (constant weight schedule). For the terminal projection method, we project samples onto the constraint manifold using a damped Gauss--Newton algorithm. Since the constraint penalty is piecewise quadratic, the Gauss--Newton quadratic approximation is exact. Consequently, we employ only \(1\) outer iteration. The linearized subproblem is solved inexactly via Conjugate Gradient with \(20\) inner iterations. All methods are implemented on top of the public~\href{https://github.com/GavinKerrigan/functional_flow_matching}{Functional Flow Matching} codebase~\citep{kerrigan2023functional}, which we extend with our GD, GN, and TOCFlow modules.

\paragraph{Results.}
We compare the performance of TOCFlow against the baselines in terms of constraint satisfaction and sample quality. \Cref{fig:trajectory_visualization} visualizes the generated trajectories relative to the safety corridors (shaded grey regions). The vanilla method (top left), being unguided, frequently generates trajectories that collide with the obstacles, confirming that the reference distribution has support outside the safety corridors. Gradient Descent (GD) (top middle) improves compliance but exhibits residual collisions near the boundaries due to the heuristic nature of the step size schedule. In contrast, TOCFlow (top right) successfully steers the robot into the feasible region while preserving the smooth nature of the reference distribution.

\begin{figure}[!ht]
\hfil
\includegraphics[width=0.33\linewidth]{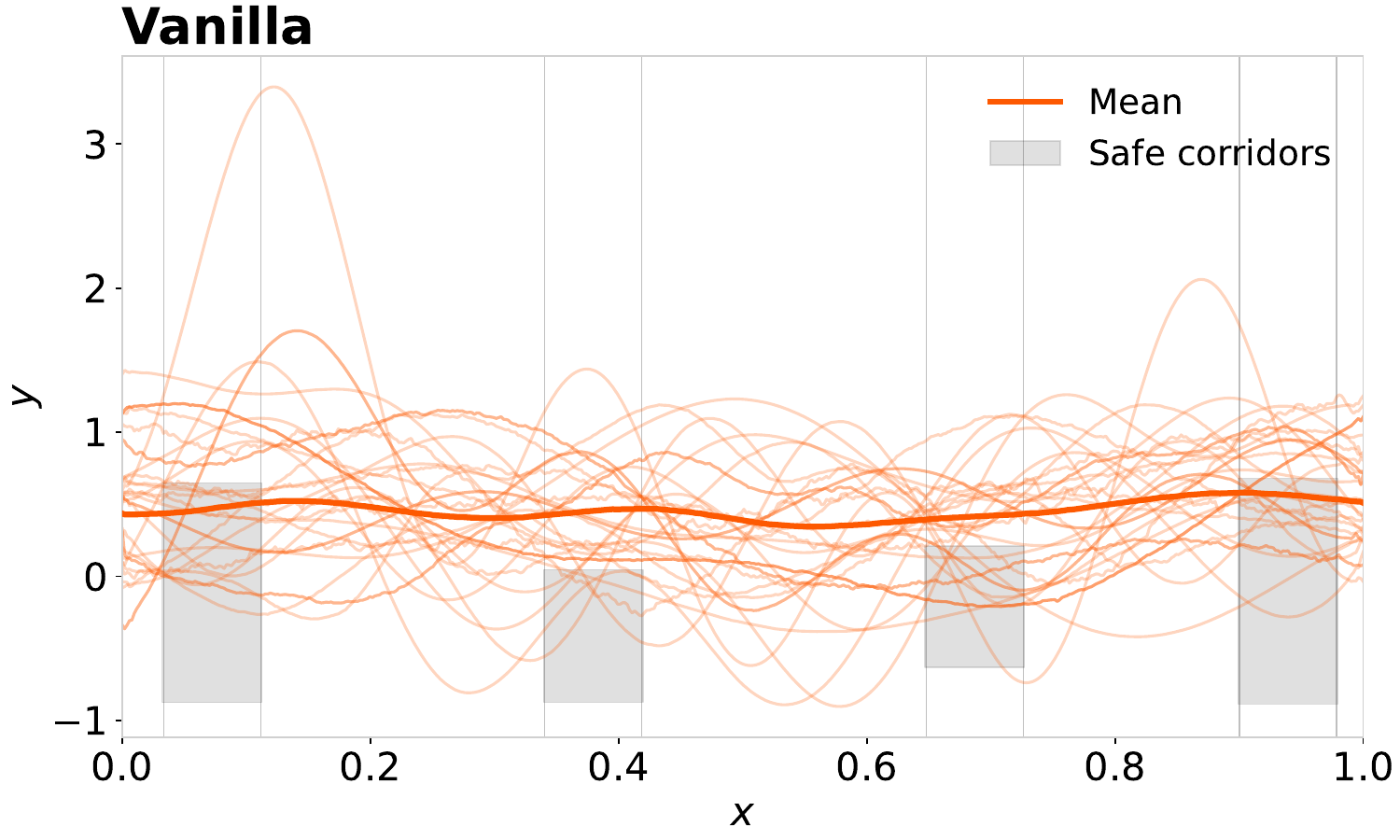}
\includegraphics[width=0.33\linewidth]{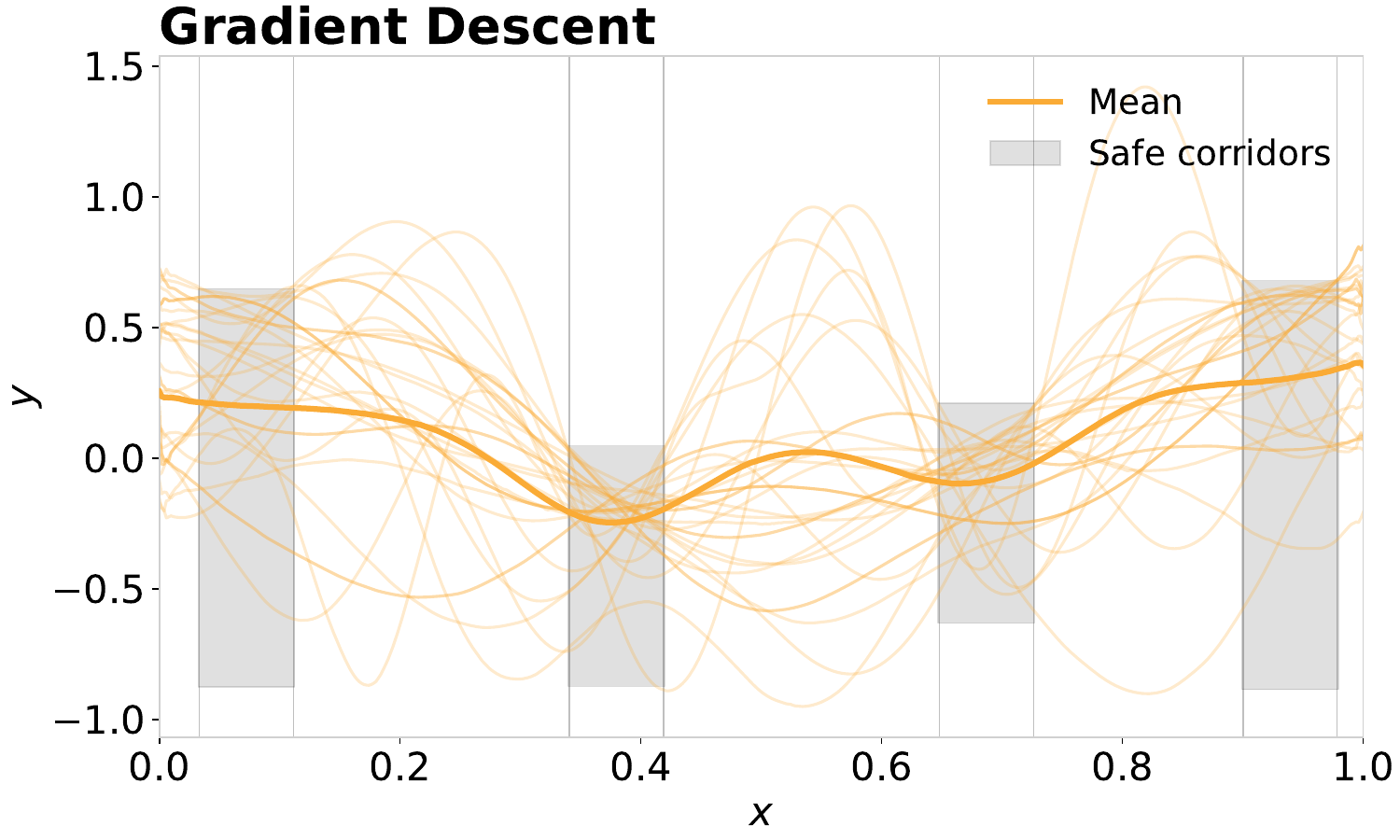}\par
\includegraphics[width=0.33\linewidth]{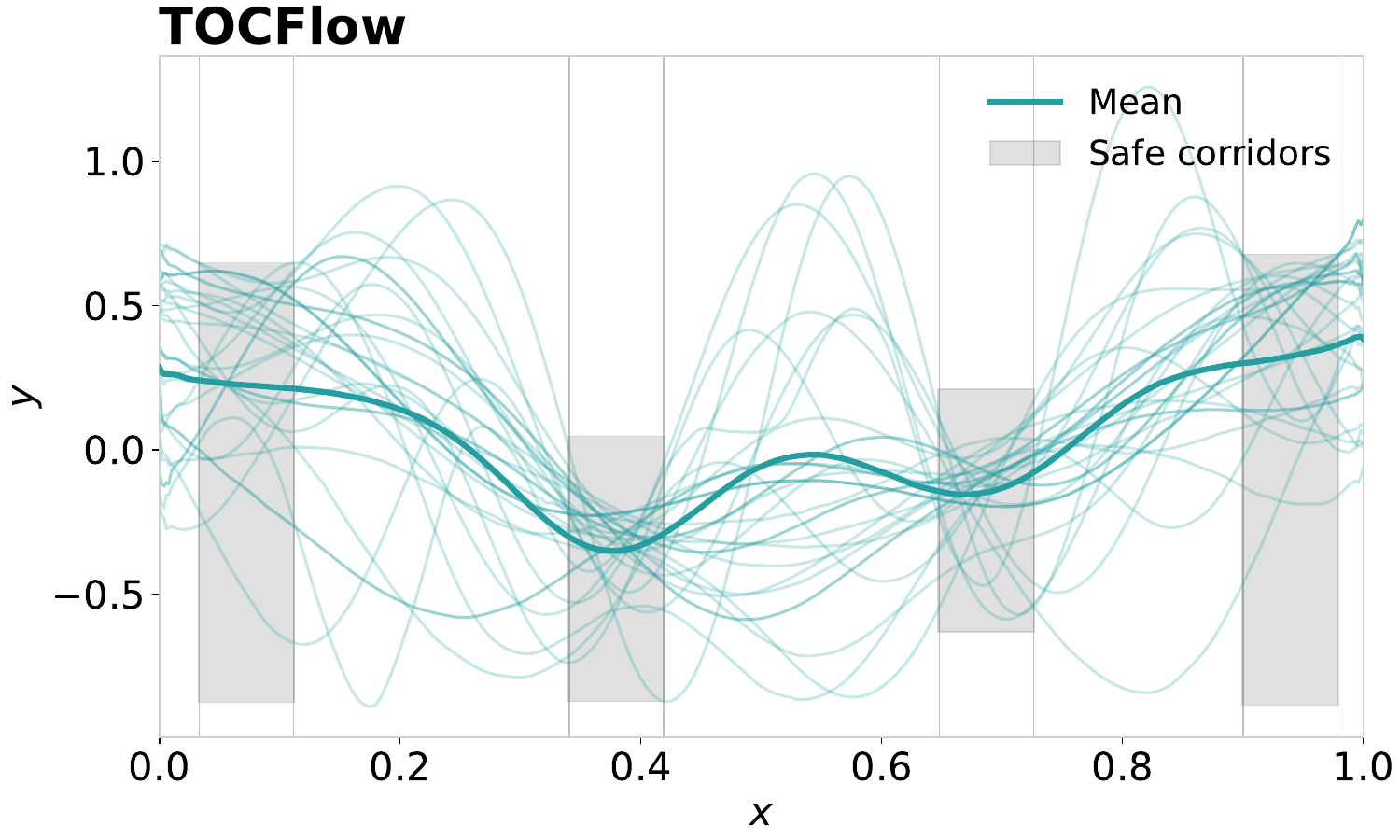}
\includegraphics[width=0.33\linewidth]{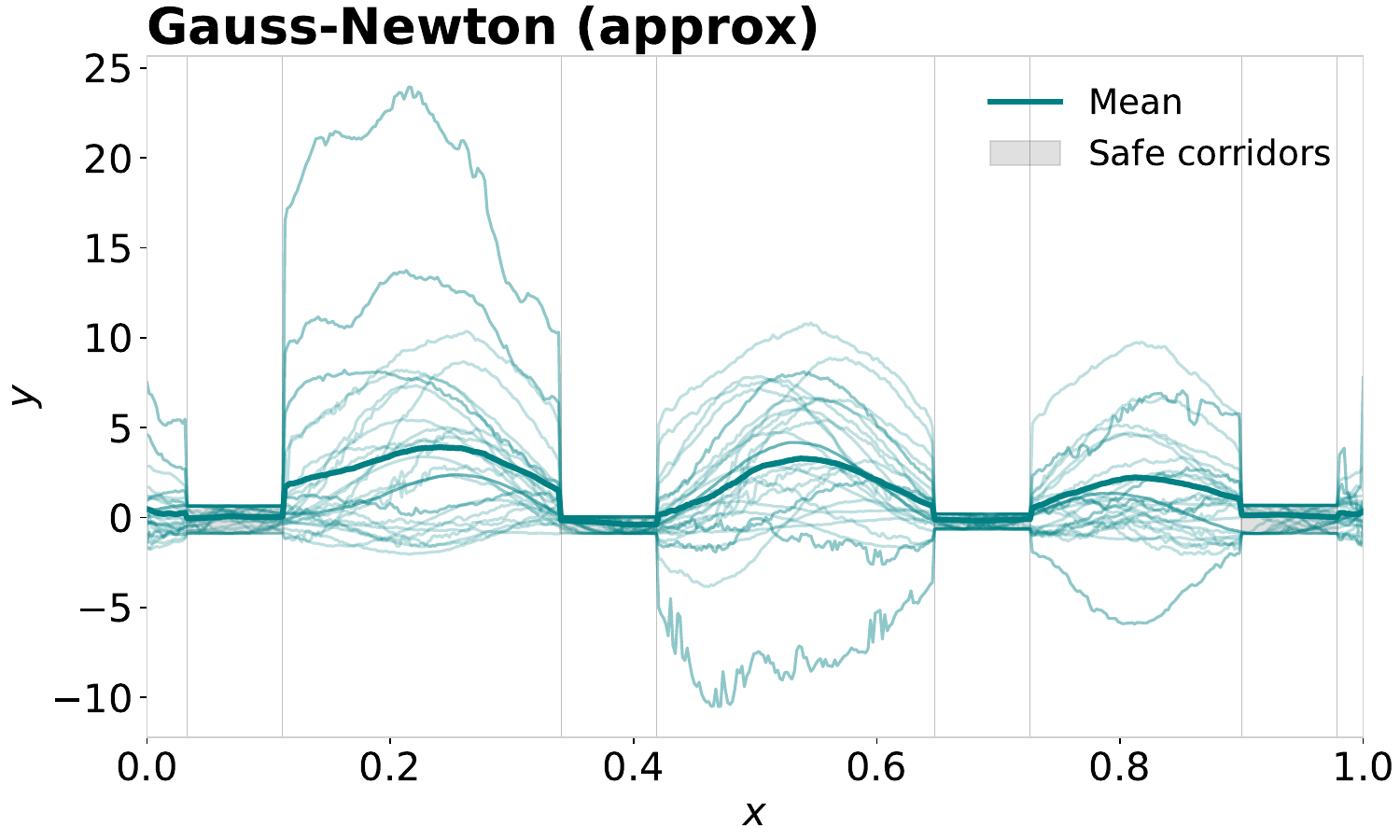}
\includegraphics[width=0.33\linewidth]{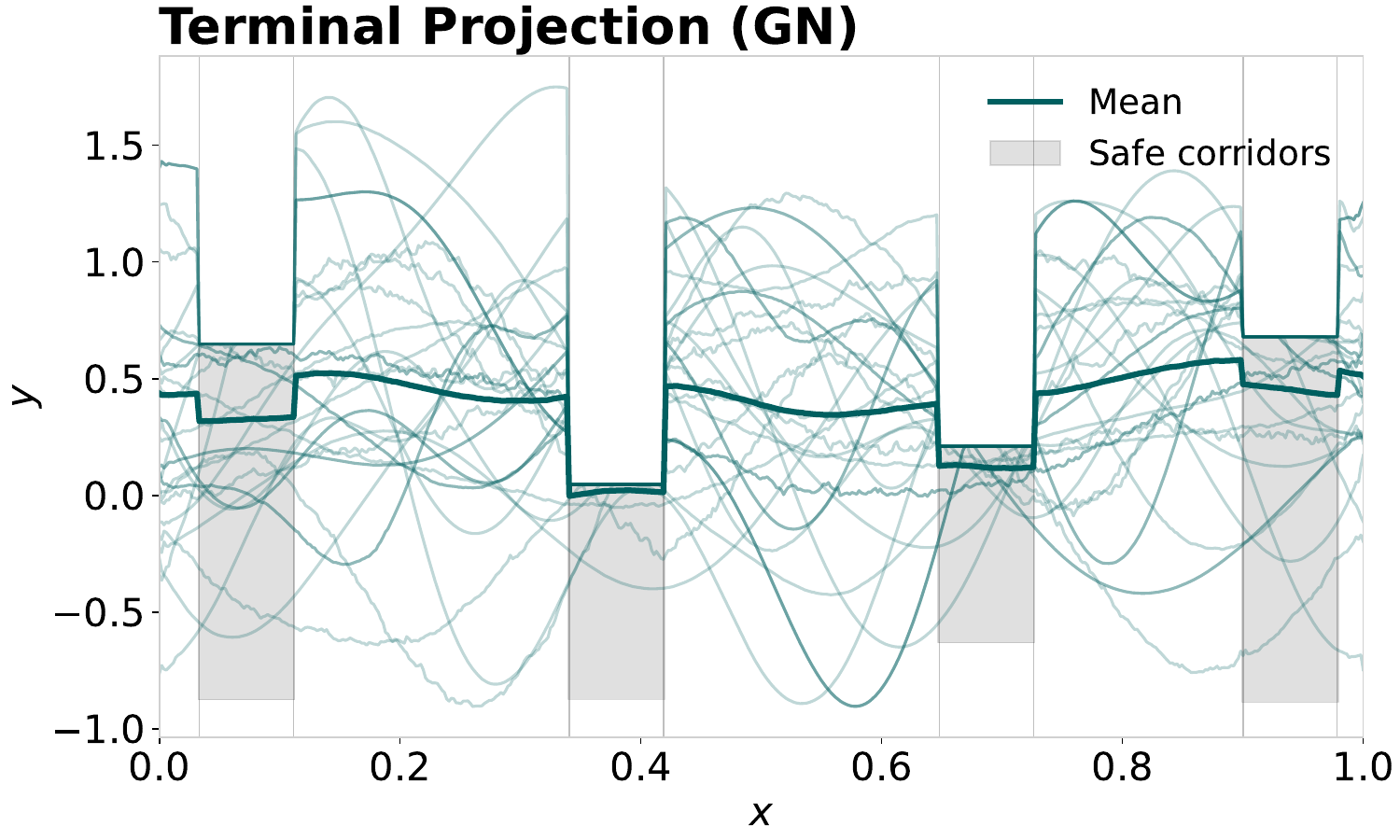}
\caption{\textbf{Qualitative comparison of the generated trajectories using different methods.} Thin lines display \(32\) individual samples for clarity. \textbf{Top row:} Samples generated by the vanilla and GD methods. \textbf{Bottom row:} Samples from the TOCFlow, approximated GN and terminal projection methods. The grey shaded regions denote the permissible safety corridors. The vanilla method frequently violates the boundaries. Both GD and TOCFlow successfully steer the trajectory into the valid region while preserving the geometric regularity (smoothness) of the reference distribution. In contrast, the projection-based methods (GN and terminal projection), while achieving strict constraint satisfaction, exhibit non-physical artifacts characterized by sharp kinks at the boundaries where the trajectory is ``snapped'' to the safety corridors.}
\label{fig:trajectory_visualization}
\end{figure}

Crucially, the comparison with projection-based methods highlights the importance of the optimal control formulation. While approximated Gauss--Newton (GN) and terminal projection (bottom row) achieve zero constraint violation, they introduce non-physical artifacts. As seen in~\cref{fig:trajectory_visualization}, these methods produce paths with sharp kinks and discontinuities at the constraint boundaries, effectively snapping the robot to the wall orthogonal to the Euclidean metric rather than respecting the smooth geometry of the reference distribution.

Quantitative results are summarized in~\cref{fig:trajectory_ablation}. The violin plots of the log-terminal cost \(\log_{10} H(\bm{f})\) (left) confirm that TOCFlow achieves a constraint satisfaction orders of magnitude better than the GD baseline, approaching the numerical precision of the projection methods. We also investigate the sensitivity of the guidance methods to the accuracy of the gradient estimation. We perform two ablation studies on the effect of the number of explicit Euler lookahead steps \(k\) used to estimate \(\Phi_{t\to1}^{\bm b^\star}(\bm x_t)\)), as described in~\Cref{subsec:numerical_implementation_and_baselines}. As shown in the center and right panels of~\cref{fig:trajectory_ablation}, increasing \(k\) reduces the (geometric) mean terminal cost for the GD method. As for TOCFlow, it achieves a terminal cost in the range of \(10^{-12}\) even when \(k=1\), significantly outperforming the GD baseline even when the latter used \(k=8\) steps (reaching only approximately \(10^{-9}\)).

\begin{figure}[!ht]
\hfil
\includegraphics[width=0.45\linewidth]{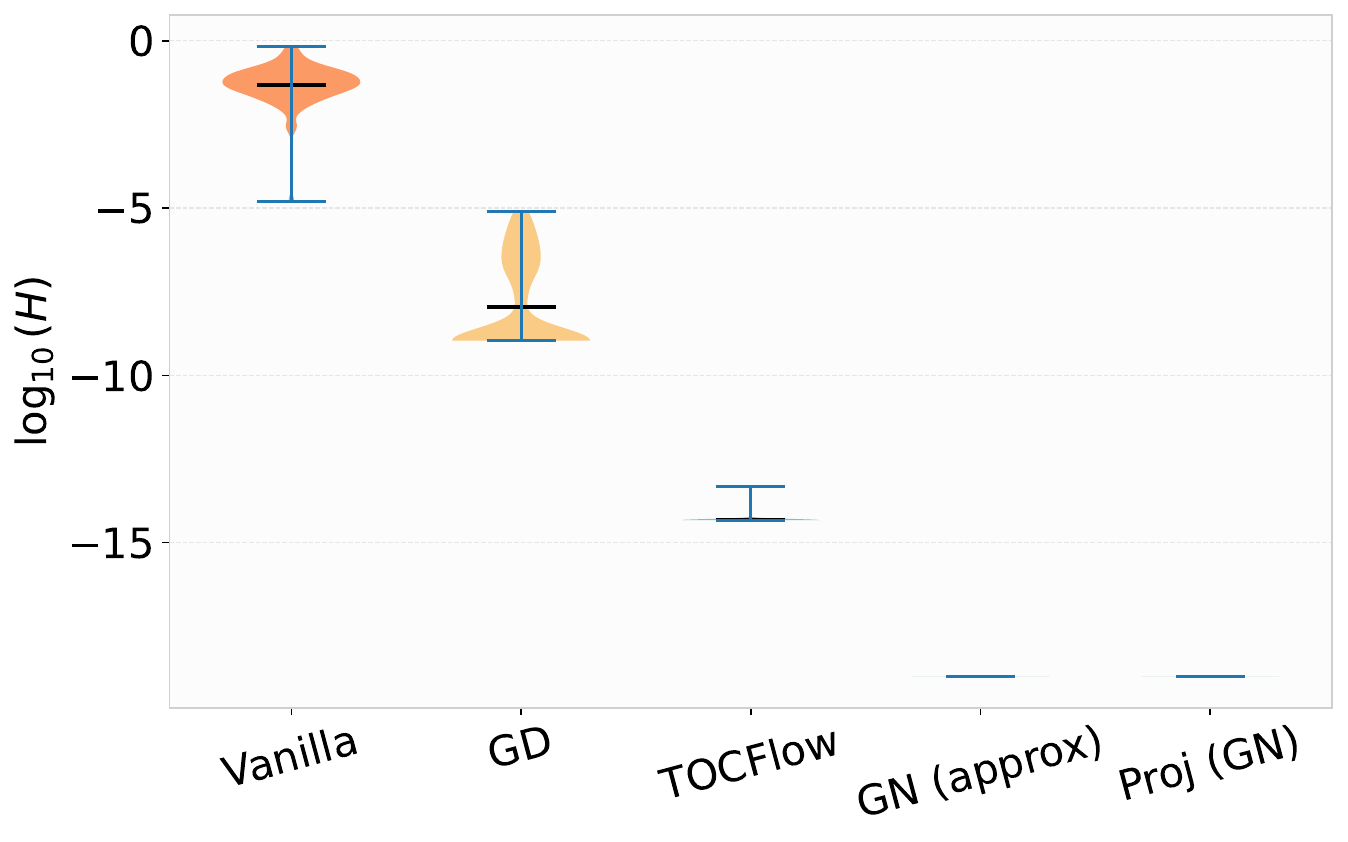}\par\hfil
\includegraphics[width=0.33\linewidth]{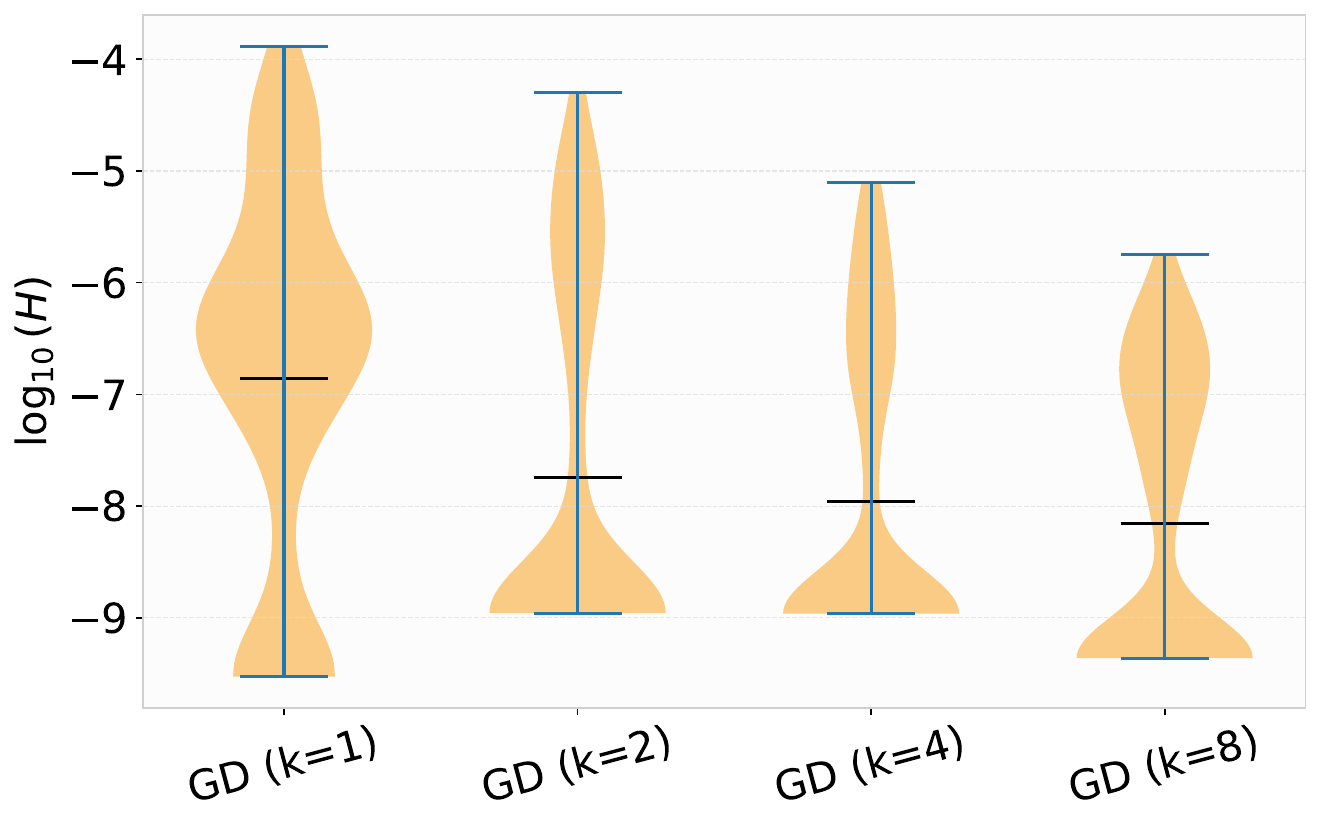}
\includegraphics[width=0.33\linewidth]{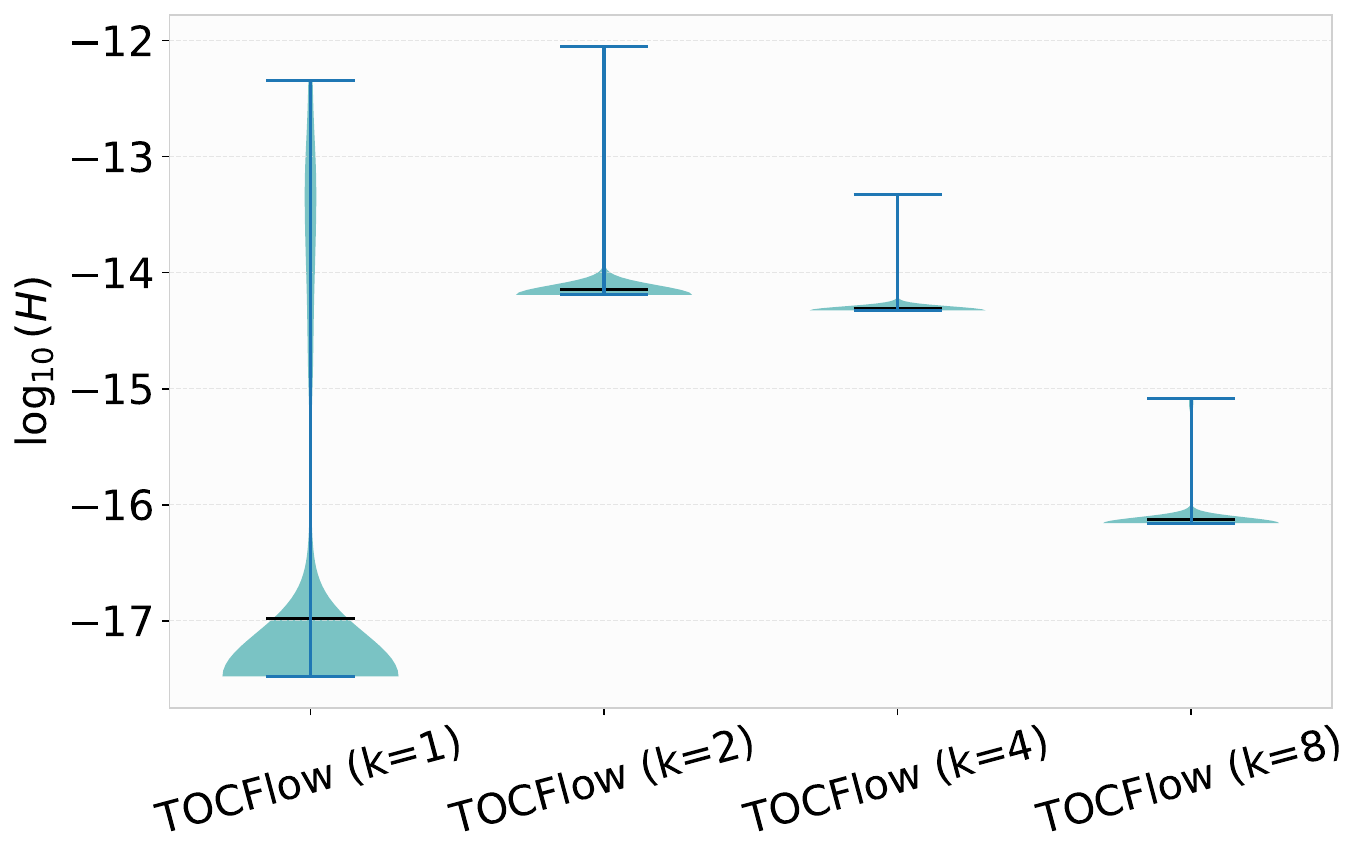}
\caption{\textbf{Quantitative evaluation of constraint violation and hyperparameter sensitivity.} \textbf{Top:} Violin plots of the terminal cost \(H(\bm{f})\) (log scale) for \(512\) generated samples across all methods. TOCFlow achieves significantly lower violations than the vanilla and GD baselines, matching the performance of GN and terminal projection. \textbf{Bottom left/right:} Ablation of the number of lookahead explicit Euler steps \(k\) for the GD and TOCFlow methods. Notably, TOCFlow achieves a terminal cost of approximately \(10^{-12}\) even with a single step (\(k=1\)), significantly outperforming the GD baseline even when the latter uses \(k=8\) steps (approximately \(10^{-9}\)).}
\label{fig:trajectory_ablation}
\end{figure}

\subsection{Global spectral constraints: Turbulence snapshot}
\label{subsec:global_spectral_constraints_turbulence_snapshot}

Finally, we consider the generation of turbulence snapshots, a task critical for applications in climate modeling and aerodynamic design. In this setting, the objective is to synthesize physically realistic snapshots of \(2\)-dimensional turbulence that adhere to the statistical laws of fluid dynamics. Unlike the previous examples where constraints were defined locally (avoiding obstacles or satisfying PDEs at grid points), here we enforce a \emph{global spectral constraint}. The generated fields must reproduce the Kolmogorov's power-law scaling~\citep{kolmogorov1941local}. This requires the control to act coherently across all spatial scales, coupling every pixel in the domain through the Fourier transform.

\paragraph{Dataset.}
We generate a dataset of forced \(2\)-dimensional turbulence using a high-resolution pseudo-spectral solver~\citep{whittaker2024turbulence}. The generating script is publicly available in the~\href{https://github.com/timwhittaker/TurbulenceDiagnostics/tree/main}{Turbulence Dataset} repository~\citep{whittaker2024turbulence}. The system evolves the vorticity field \(\bm{\omega}(\bm{x}, t) = \nabla \times \bm{u}(\bm{x}, t)\) according to the incompressible Navier--Stokes equations with linear drag and stochastic forcing:
\begin{equation}
\partial_t \bm{\omega} + (\bm{u} \cdot \nabla) \bm{\omega} = \nu \nabla^2 \bm{\omega} - \alpha \bm{\omega} + \bm{f},
\end{equation}
where \(\nu = 10^{-4}\) is the kinematic viscosity and \(\alpha = 5 \times 10^{-6}\) is a linear drag coefficient representing large-scale damping. The system is defined on a doubly periodic domain \(\Omega = [0, 2\pi]^2\) discretized on a \(512 \times 512\) grid. The forcing term \(\hat{f}(\bm{k}, t)\) is defined in the spectral domain as a time-correlated stochastic process acting on the wavenumber band \(k \in [10, 15]\). Its discrete time evolution follows the autoregressive rule
\begin{equation}
\hat{f}_{t+\Delta t}(\bm{k}) = r \hat{f}_t(\bm{k}) + \sqrt{1-r^2} A \xi_t(\bm{k}) \bm{1}_{10 \le \|\bm{k}\| \le 15},
\end{equation}
where \(\bm{k}=(k_x, k_y)\) is the wave vector, \(r=0.9\) sets the temporal correlation, \(A\) is a fixed amplitude constant, and \(\xi_t(\bm{k}) = \exp(2\pi i \theta_{\bm{k}})\) represents random phase noise with \(\theta_{\bm{k}} \sim \mathcal{U}[0, 1]\). The simulation is time-stepped using a third-order Adams--Bashforth scheme for the nonlinear advection terms and a Crank--Nicolson scheme for the linear terms, with a time step of \(\Delta t = 10^{-3}\). We collect \(500\) snapshots from the statistically stationary regime (time \(t \in [51, 550]\)). These snapshots are subsequently downsampled to \(256 \times 256\), and are standardized instance-wise and clipped to the range \([-1, 1]\) before training.

\paragraph{Pre-trained model architecture.}
The time-dependent velocity field \(\bm{b}^\star(\bm x, t)\) is parameterized using a 2D U-Net~\citep{ronneberger2015u} implemented via the Diffusers library~\citep{von2022diffusers}. The network operates on single-channel vorticity fields and features a deep, symmetric encoder-decoder structure with seven resolution levels. The channel width progressively doubles, following the sequence \((128, 128, 256, 256, 512, 512, 1024)\), resulting in a highly compressed bottleneck representation. Each resolution level is processed by a stack of \(2\) ResNet-style residual blocks, which employ Group Normalization (\(32\) groups) and SiLU activations. Continuous time conditioning is achieved by mapping the scalar timestep \(t\) to a high-dimensional embedding via Gaussian Fourier projections followed by a two-layer MLP. This embedding is injected into every residual block to modulate the features via scale and shift parameters. To capture long-range spatial dependencies while maintaining computational efficiency, self-attention mechanisms are selectively integrated only at the sixth resolution level (corresponding to the \(512\)-channel feature maps), where the spatial dimension is sufficiently reduced (\(8 \times 8\)) to render the quadratic attention cost negligible.

\paragraph{Training details.}
The model is trained using the Flow Matching objective with an Optimal Transport (OT) conditional vector field~\citep{lipman2023flow,liu2023flow}. We optimize the network parameters using the AdamW~\citep{loshchilov2019decoupled} optimizer with a peak learning rate of \(1 \times 10^{-4}\), scheduled via a cosine decay with \(500\) warmup steps. The training is performed for \(30{,}000\) iterations using a batch size of \(16\). To ensure optimization stability, we apply gradient clipping with a maximum norm of \(1.0\).

\paragraph{Constraint formulation.}
The hallmark of developed turbulence is the distribution of kinetic energy across length scales, known as the energy spectrum \(E(k)\). We define the constraint to ensure that the generated samples exhibit the correct inertial sub-range scaling predicted by Kolmogorov's 1941 theory~\citep{kolmogorov1941local}, i.e., \(E(k) \propto k^{-5/3}\). We define the scalar constraint residual \(h(\bm \omega)\) as the mean squared deviation from this power law within the inertial band \(\mathcal{K} = \{k \in \mathbb{Z} \colon 2 \le k \le 9\}\). Specifically, we compute the radially averaged energy spectrum \(E(k)\) via the discrete Fourier transform and define the residual as the variance of the compensated spectrum:
\begin{equation}
h(\bm \omega) = \frac{1}{|\mathcal{K}|} \sum_{k \in \mathcal{K}} \biggl( \Bigl( \log E(k) + \frac{5}{3} \log k \Bigr) - \mu_{\mathcal{K}} \biggr)^2,
\end{equation}
where \(\mu_{\mathcal{K}}\) is the arithmetic mean of the compensated spectral term over the inertial band, i.e.,
\begin{equation}
\mu_{\mathcal{K}} = \frac{1}{|\mathcal{K}|} \sum_{j \in \mathcal{K}} \Bigl( \log E(j) + \frac{5}{3} \log j \Bigr).
\end{equation}
By subtracting the mean \(\mu_{\mathcal{K}}\), this formulation strictly enforces the slope (scaling exponent) of the spectrum while being invariant to the total energy magnitude. The terminal cost is defined as the square of this residual, i.e., \(H(\bm \omega) = \frac{1}{2} (h(\bm \omega))^2\).

We remark that this is not the only way to enforce Kolmogorov scaling. For example, one may fit a line \(\log E(k)\approx \alpha+\beta \log k\) over \(k\in\mathcal K\) and penalize \((\beta+5/3)^2\), which enforces the exponent directly. We instead use the variance of the compensated log-spectrum because it is simpler to compute and, by subtracting the mean, it enforces the slope while remaining invariant to the intercept, avoiding an unintended coupling to the total kinetic energy.

\paragraph{Implementation details.}
We detail here the specific hyperparameters for the turbulence snapshot task, complementing the general settings described in~\Cref{subsec:numerical_implementation_and_baselines}. We generate samples using Forward Euler method with \(N=200\) steps. For GD and TOCFlow, we estimate the gradient of the composition \(\nabla_{\bm{x}} (H \circ \Phi_{t\to 1}^{\bm{b}^\star})\) using a \(k=4\) step forward Euler lookahead. For TOCFlow, we employ a strict, rapidly decaying weight schedule with \(\lambda_0 = 10^{10}\) and \(\gamma = 10.0\), reflecting the high sensitivity of the spectral constraint. For GD, we employ a constant step size of \(\eta=1000\). For the terminal projection method, we project samples onto the constraint manifold using a damped Gauss--Newton algorithm with a maximum budget of \(1{,}000\) iterations. The linearized subproblem at each step is solved inexactly via Conjugate Gradient with \(20\) inner iterations. All methods are implemented on top of the public~\href{https://github.com/timwhittaker/TurbulenceDiagnostics/tree/main}{Turbulence Diagnostics} codebase~\citep{whittaker2024turbulence}, which we extend with our GD, GN, and TOCFlow modules.

\paragraph{Results.} We compare the performance of TOCFlow against the baselines in terms of constraint satisfaction and sample quality. \Cref{fig:turbulence_sample} presents representative vorticity field snapshots. While the vanilla method produces visually plausible turbulence, the spectral analysis in~\cref{fig:turbulence_spectrum} reveals critical distinctions. We observe that the ground truth dataset spectrum (top left) does not strictly adhere to the theoretical \(k^{-5/3}\) slope. This deviation arises from the finite Reynolds number and the specific localized forcing mechanism used in the underlying Direct Numerical Simulation (DNS). The vanilla method (top center), being purely data-driven, inherits this bias, producing a spectrum that mirrors the dataset's deviation. The GD method (top right) improves adherence to the theoretical slope but exhibits large variance at high frequencies. In contrast, TOCFlow and projection-based methods (bottom row) successfully enforce the constraints. Notably, TOCFlow (bottom left) achieves the most precise alignment with the target scaling law, effectively correcting the physical bias present in the training distribution to satisfy the idealized Kolmogorov theory.

\begin{figure}[!ht]
\includegraphics[width=0.195\linewidth]{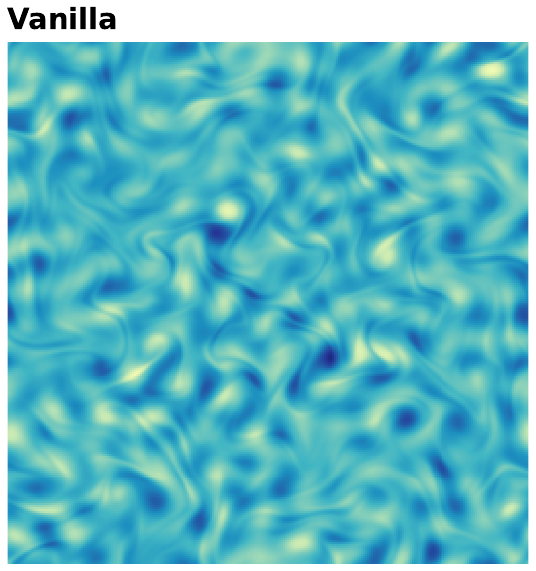}
\includegraphics[width=0.195\linewidth]{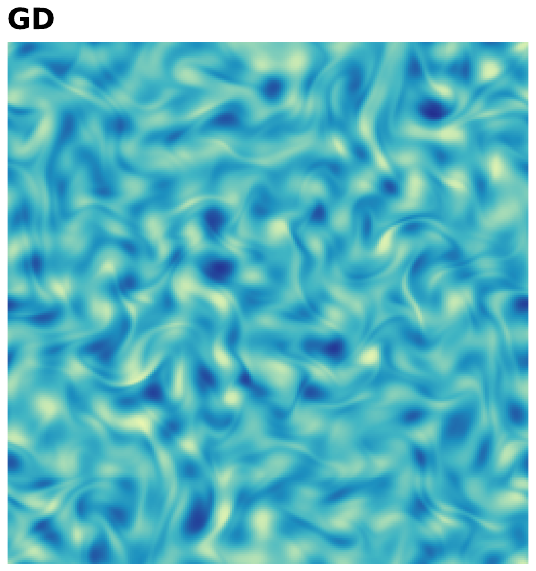}
\includegraphics[width=0.195\linewidth]{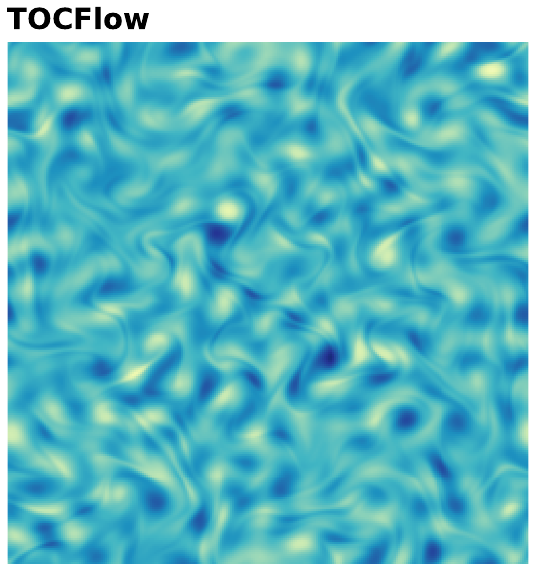}
\includegraphics[width=0.195\linewidth]{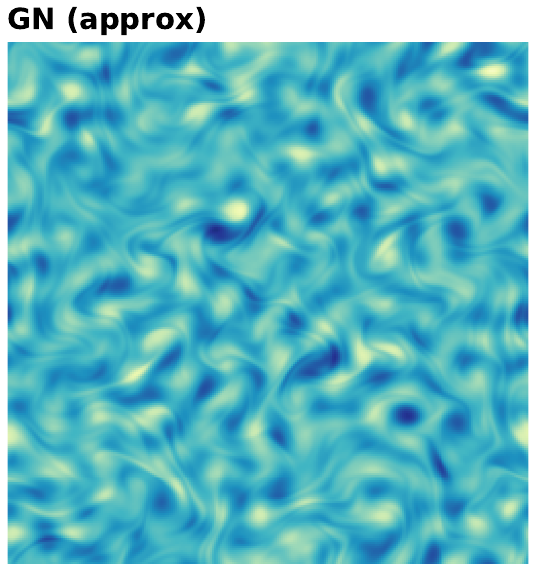}
\includegraphics[width=0.195\linewidth]{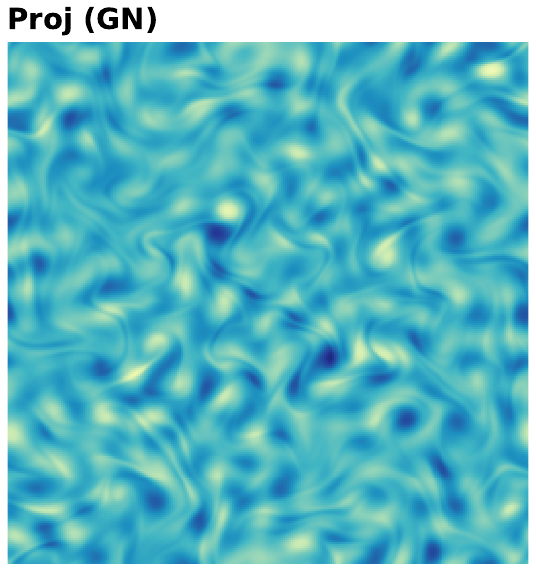}
\caption{\textbf{Qualitative comparison of generated turbulence snapshots using different methods.} From left to right: vanilla, GD, TOCFlow, approximated GN, and terminal projection.}
\label{fig:turbulence_sample}
\end{figure}

\begin{figure}[!ht]
\includegraphics[width=0.33\linewidth]{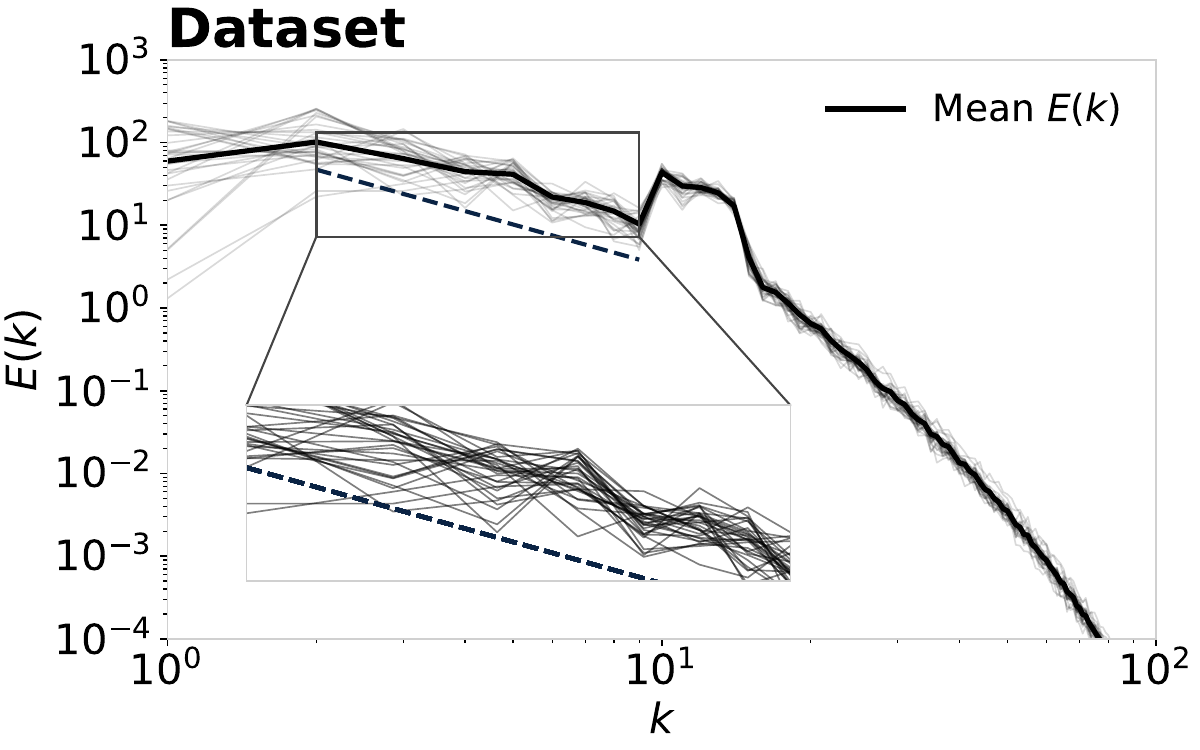}
\includegraphics[width=0.33\linewidth]{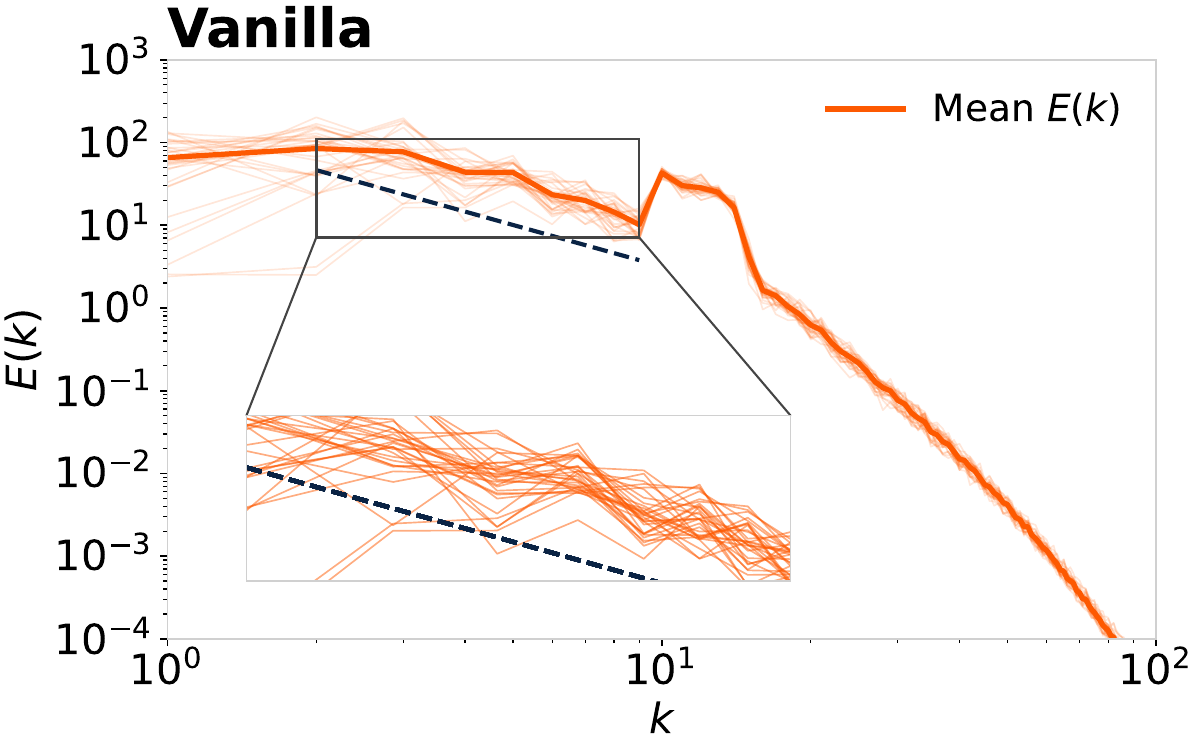}
\includegraphics[width=0.33\linewidth]{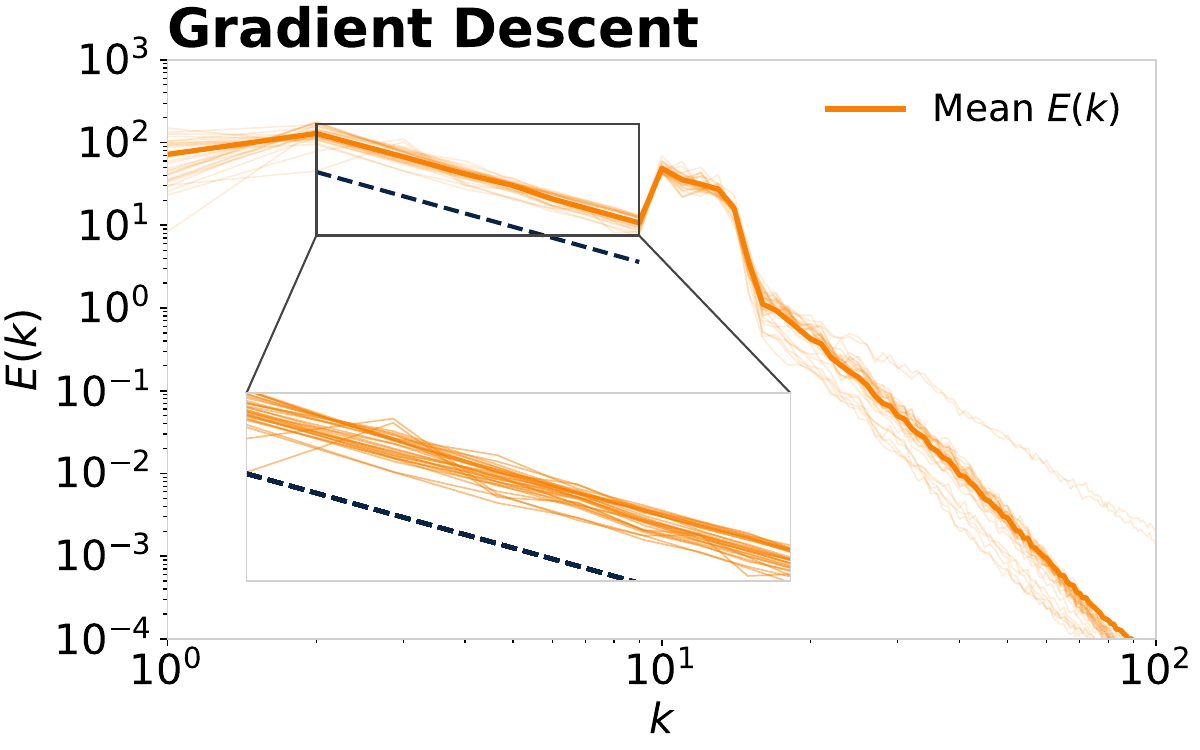}\par
\includegraphics[width=0.33\linewidth]{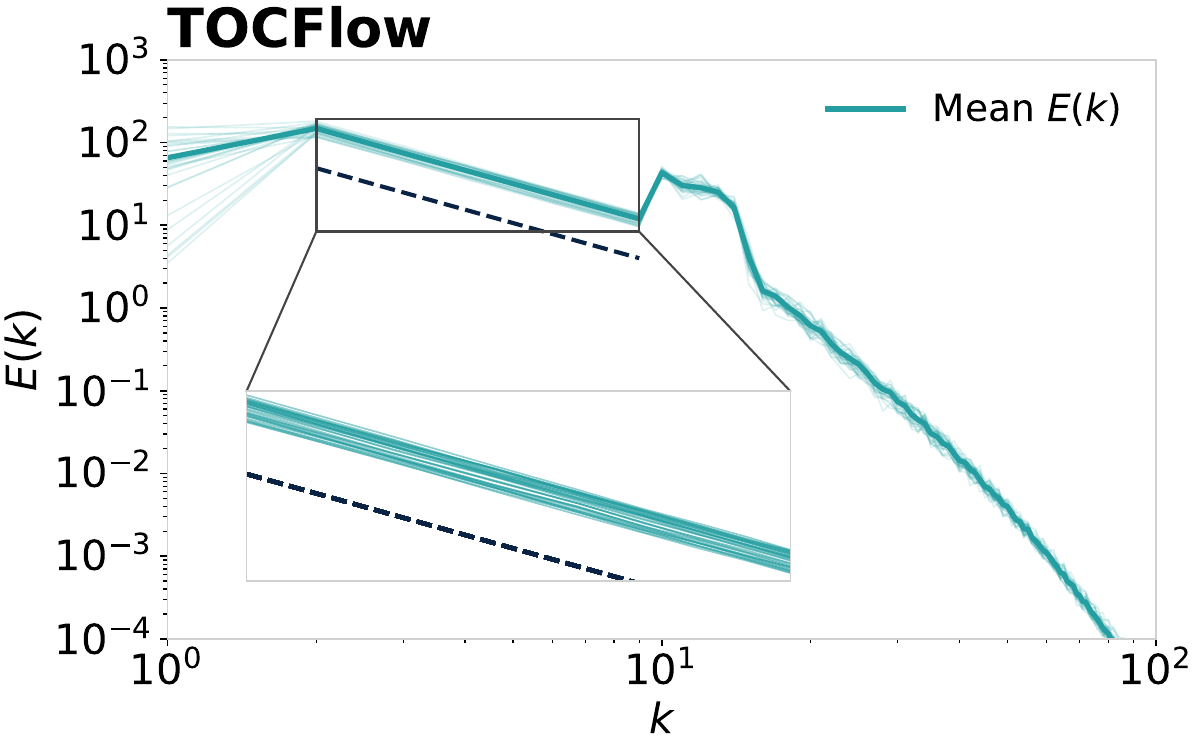}
\includegraphics[width=0.33\linewidth]{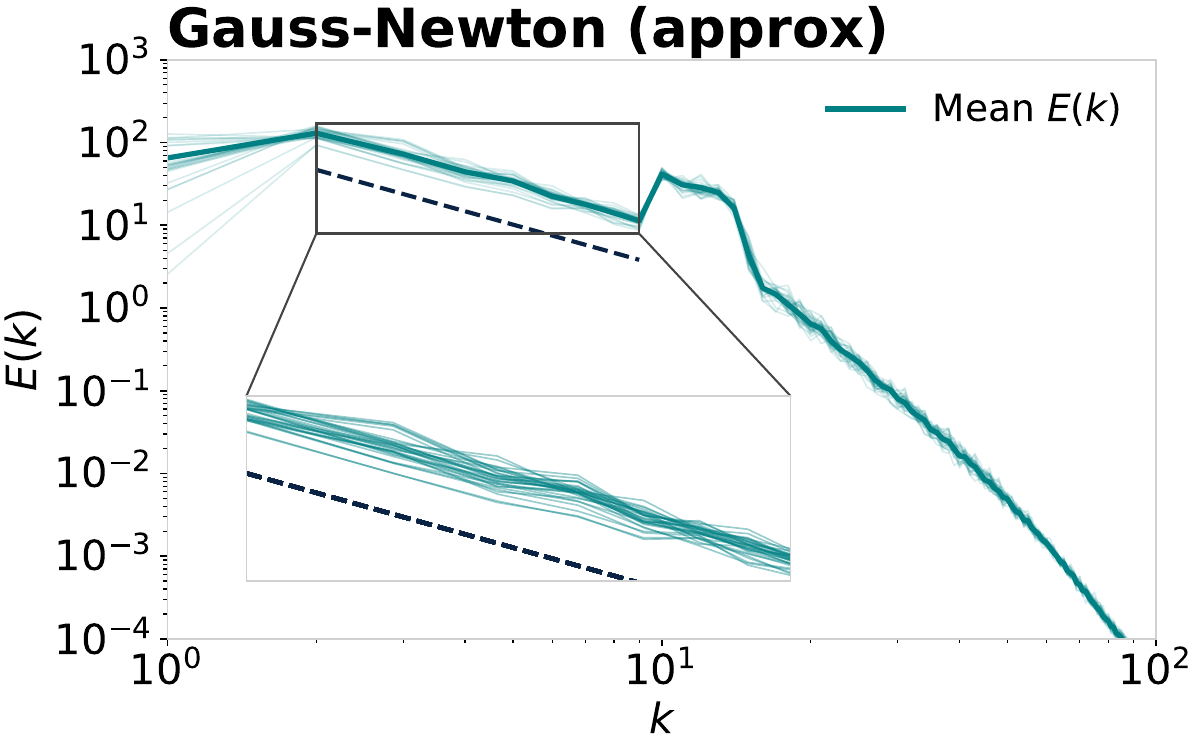}
\includegraphics[width=0.33\linewidth]{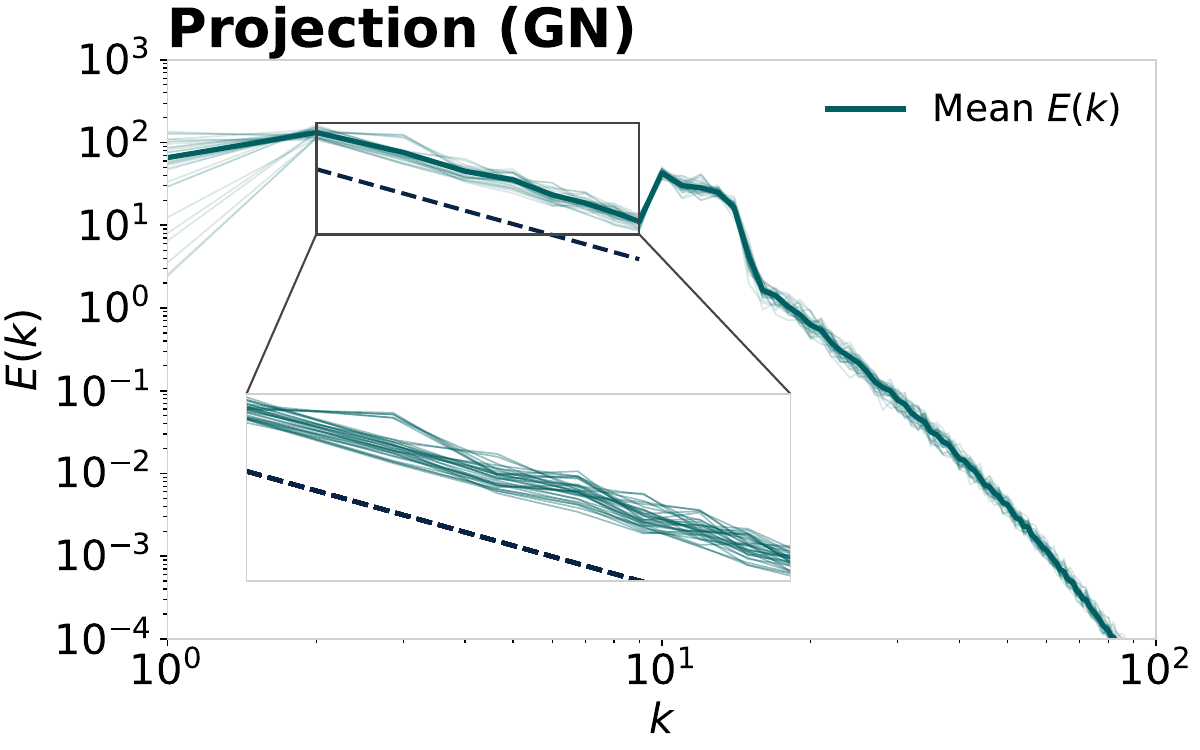}
\caption{\textbf{Qualitative comparison of the kinetic energy spectrum \(E(k)\).} Thin lines display \(32\) individual samples for clarity. \textbf{Top row:} kinetic energy spectrum for the dataset, vanilla, and GD methods. \textbf{Bottom row:} TOCFlow, approximated GN, and terminal projection methods. The dashed lines indicate the inertial range where Kolmogorov's theory predicts a \(k^{-5/3}\) power-law scaling. Both the dataset (top left) and the vanilla method (top center) exhibit similar deviations from the ideal slope due to finite simulation effects. The GD method (top right) enforces the slope better but introduces high-frequency variance. In contrast, TOCFlow (bottom left) aligns almost perfectly with the theoretical scaling law, outperforming the baseline methods.}
\label{fig:turbulence_spectrum}
\end{figure}

Quantitative results are summarized in~\cref{fig:turbulence_ablation}. The violin plots of the spectral residual cost \(\log_{10}H(\bm \omega)\) (top panel) demonstrate the superior constraint satisfaction of TOCFlow. The error distribution for TOCFlow is tightly concentrated at a magnitude significantly lower than both the baseline methods. We perform three ablation studies on the effect of the number of explicit Euler lookahead steps \(k\) used to estimate \(\Phi_{t\to1}^{\bm b^\star}(\bm x_t)\), as described in~\Cref{subsec:numerical_implementation_and_baselines}. For the GD and approximated GN methods, increasing \(k\) yields diminishing returns or slight degradation. In contrast, TOCFlow benefits substantially from a larger lookahead. For \(k>1\), the error drops sharply and remains stable. Finally, while the terminal projection method (bottom right) naturally improves as the iteration steps increase from \(1\) to \(1{,}000\), it fails to match the precision of TOCFlow even at high computational costs.

\begin{figure}[!ht]
\hfil
\includegraphics[width=0.45\linewidth]{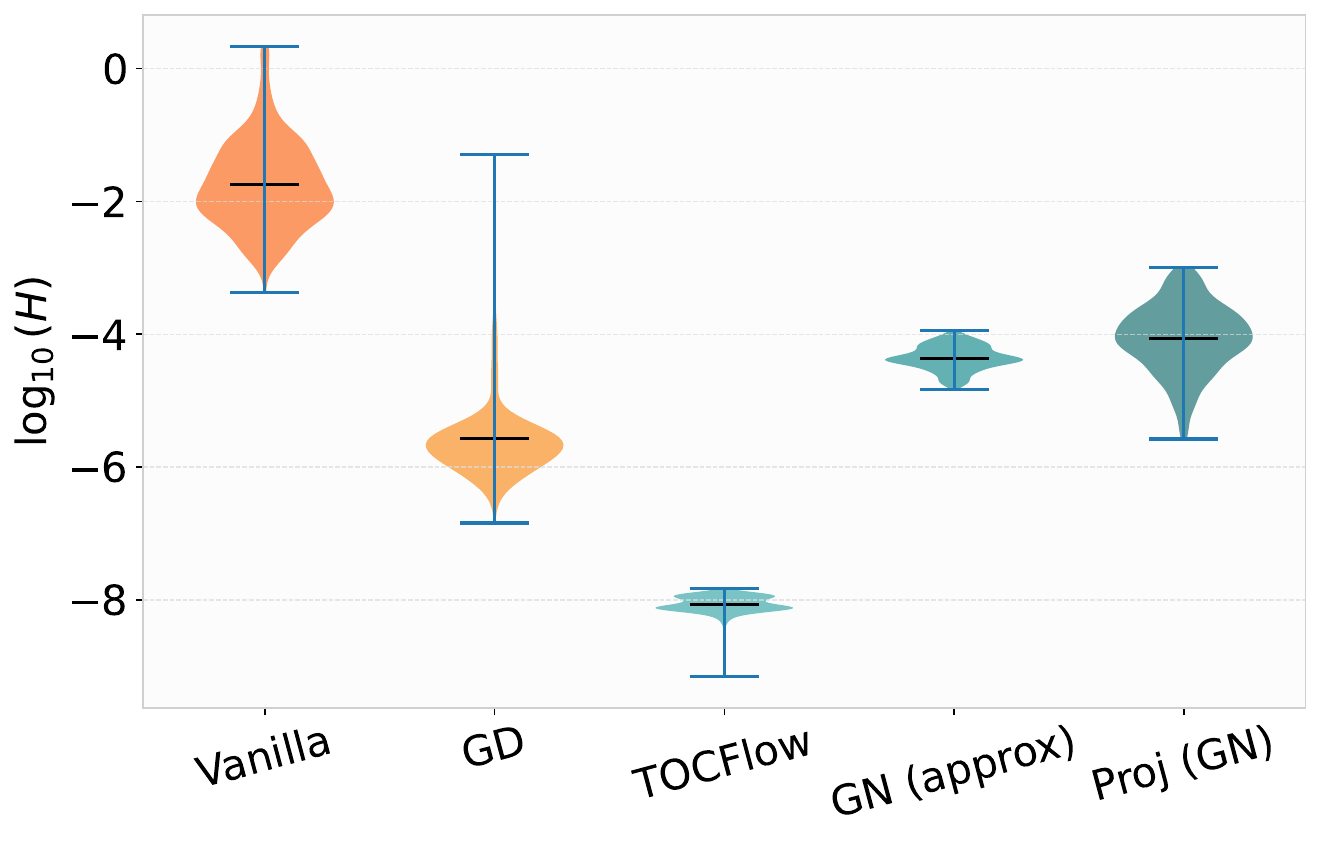}\par\hfil
\includegraphics[width=0.33\linewidth]{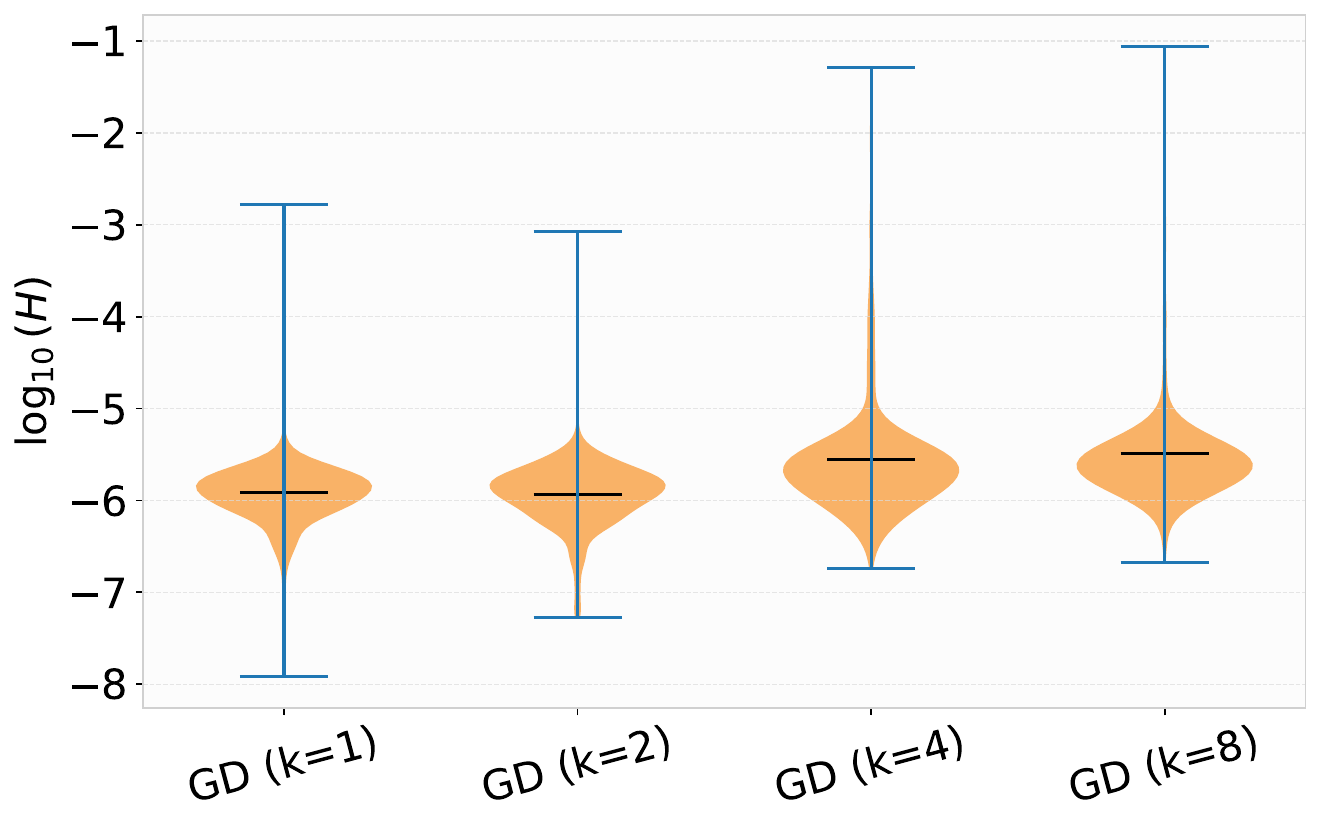}
\includegraphics[width=0.32\linewidth]{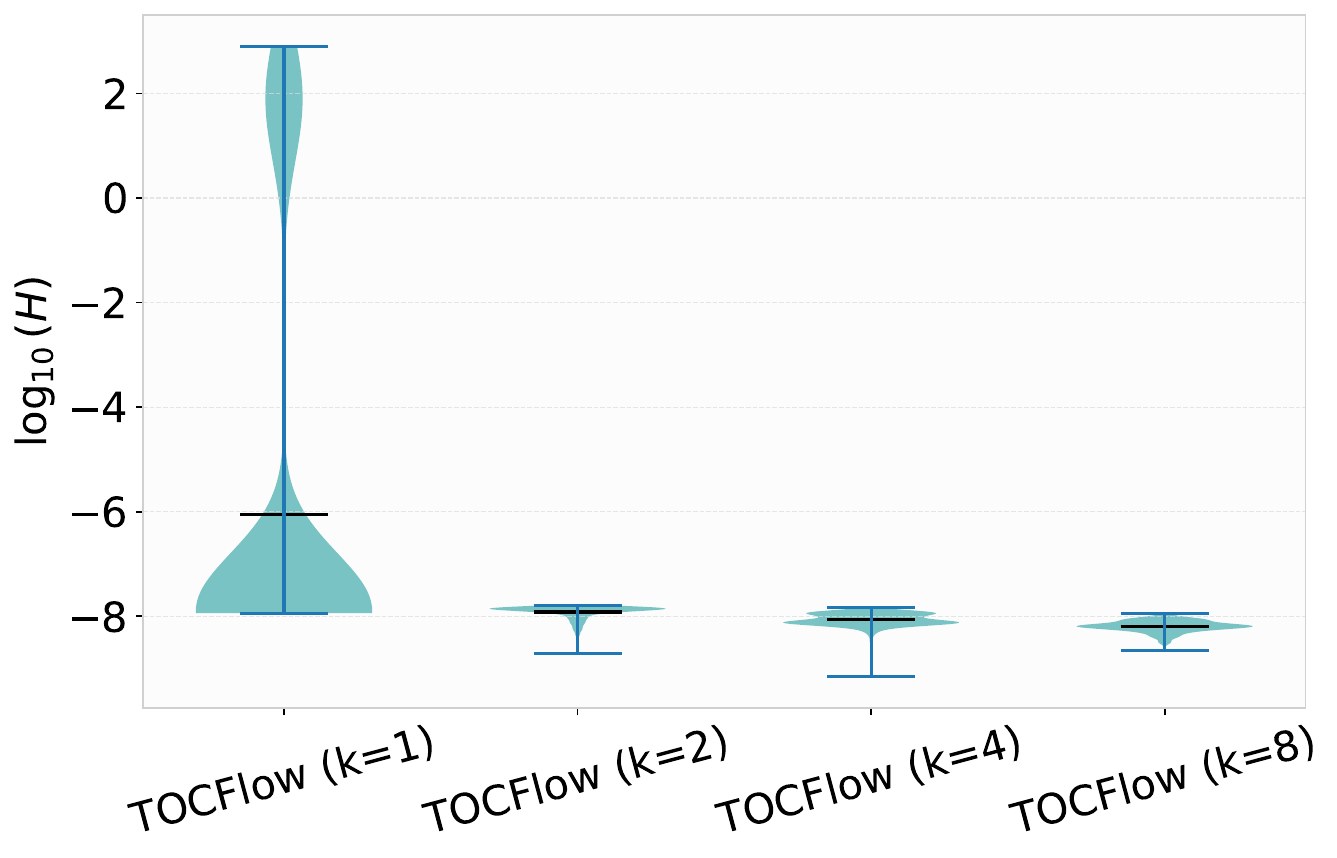}\par\hfil
\includegraphics[width=0.33\linewidth]{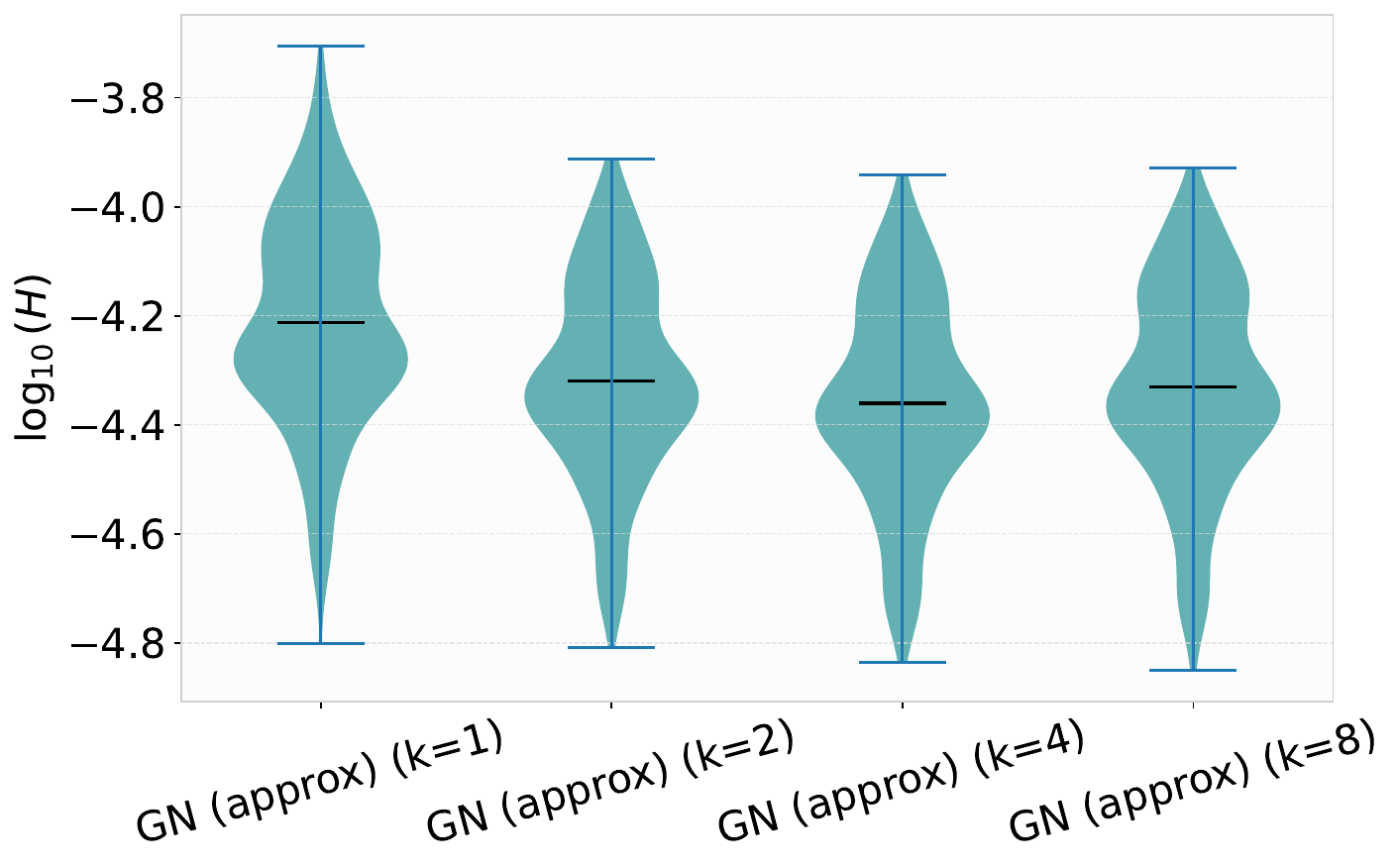}
\includegraphics[width=0.32\linewidth]{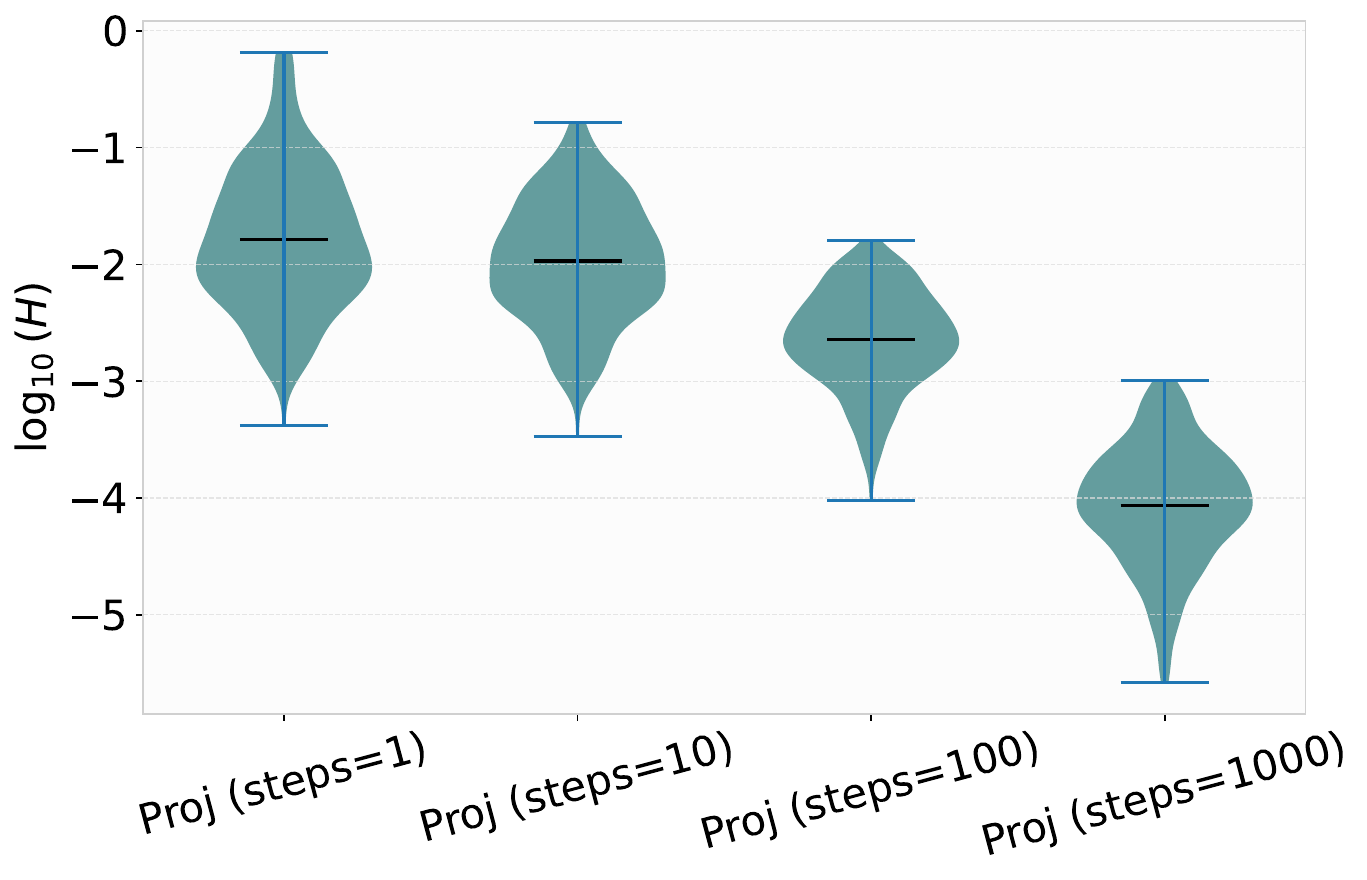}
\caption{\textbf{Quantitative evaluation of constraint violation and hyperparameter sensitivity.} \textbf{Top:} Violin plots of the terminal cost \(H(\bm \omega)\) (log scale) for \(512\) generated samples across all methods. TOCFlow achieves significantly lower violations and variance compared to all baselines. \textbf{Middle and bottom:} Ablation of the number of lookahead explicit Euler steps \(k\) for the GD, TOCFlow, and approximated GN methods. While GD degrades with larger \(k\) and approximated GN shows negligible gain, TOCFlow exhibits a sharp performance gain for \(k>1\) before stabilizing. The bottom right panel shows the ablation of the number of iterations for terminal projection. Although the error decreases with more iterations, it remains orders of magnitude higher than that of TOCFlow.}
\label{fig:turbulence_ablation}
\end{figure}

\bibliography{references}
\bibliographystyle{plain}

\end{document}